\title{Learning Mixtures of Ranking Models\thanks{This work was supported in part by NSF grants CCF-1101215, CCF-1116892, the Simons Institute, and a Simons Foundation Postdoctoral fellowhsip. Part of this work was performed while the 3rd author was at the Simons Institute for the Theory of Computing at the University of California, Berkeley and the 4th author was at CMU.}}
\author{Pranjal Awasthi\\
Princeton University\\
\texttt{pawashti@cs.princeton.edu}
\And Avrim Blum\\
Carnegie Mellon University\\
\texttt{avrim@cs.cmu.edu}
\And Or Sheffet\\
Harvard University\\
\texttt{osheffet@seas.harvard.edu}
\And
Aravindan Vijayaraghavan\\
New York University\\
\texttt{vijayara@cims.nyu.edu}}
\begin{document}

\maketitle


\newtheorem{theorem}{Theorem}[section]
\newtheorem{lemma}[theorem]{Lemma}
\newtheorem{definition}[theorem]{Definition}
\newtheorem{notation}[theorem]{Notation}
\newtheorem{property}[theorem]{Property}
\newtheorem{claim}[theorem]{Claim}
\newtheorem{proposition}[theorem]{Proposition}
\newtheorem{fact}[theorem]{Fact}
\newtheorem{remark}[theorem]{Remark}

\renewcommand{\P}{\ensuremath{\mathsf{P}}\xspace}
\newcommand{\NP}{\ensuremath{\mathsf{NP}}\xspace}
\newcommand{\NEXP}{\ensuremath{\mathsf{NEXP}}\xspace}
\newcommand{\coNP}{\ensuremath{\mathsf{coNP}}\xspace}
\newcommand{\coNEXP}{\ensuremath{\mathsf{coNEXP}}\xspace}
\renewcommand{\L}{\ensuremath{\mathsf{L}}\xspace}
\newcommand{\NL}{\ensuremath{\mathsf{NL}}\xspace}
\newcommand{\coNL}{\ensuremath{\mathsf{coNL}}\xspace}
\newcommand{\PSPACE}{\ensuremath{\mathsf{PSPACE}}\xspace}
\newcommand{\NC}{\ensuremath{\mathsf{NC}}\xspace}
\newcommand{\AC}{\ensuremath{\mathsf{AC}}\xspace}
\newcommand{\Ppoly}{\ensuremath{\mathsf{P/poly}}\xspace}
\newcommand{\BPP}{\ensuremath{\mathsf{BPP}}\xspace}
\newcommand{\RP}{\ensuremath{\mathsf{RP}}\xspace}
\newcommand{\ZPP}{\ensuremath{\mathsf{ZPP}}\xspace}
\newcommand{\coRP}{\ensuremath{\mathsf{coRP}}\xspace}
\newcommand{\EXP}{\ensuremath{\mathsf{EXP}}\xspace}
\newcommand{\IP}{\ensuremath{\mathsf{IP}}\xspace}
\newcommand{\PH}{\ensuremath{\mathsf{PH}}\xspace}
\newcommand{\polyL}{\ensuremath{\mathsf{polyL}}\xspace}
\newcommand{\RL}{\ensuremath{\mathsf{RL}}\xspace}
\newcommand{\PP}{\ensuremath{\mathsf{PP}}\xspace}
\newcommand{\MA}{\ensuremath{\mathsf{MA}}\xspace}
\newcommand{\AM}{\ensuremath{\mathsf{AM}}\xspace}
\renewcommand{\Pr}{{\bf Pr}} 
\newcommand{\E}{{\bf E}}
\newcommand{\Prx}{\mathop{\bf Pr\/}}         
\newcommand{\Ex}{\mathop{\bf E\/}}           
\newcommand{\Varx}{\mathop{\bf Var\/}}       
\newcommand{\Covx}{\mathop{\bf Cov\/}}       

\newcommand{\poly}{\mathrm{poly}}
\newcommand{\polylog}{\mathrm{polylog}}

\newcommand{\R}{\mathbb R}
\newcommand{\N}{\mathbb N}
\newcommand{\Z}{\mathbb Z}
\newcommand{\F}{\mathbb F}
\newcommand{\GF}{\mathrm{G}\mathbb{F}}

\newcommand{\bits}{\{0,1\}}
\newcommand{\bitsn}{\bits^n}

\newcommand{\eps}{\epsilon}
\newcommand{\lam}{\lambda}

\newcommand{\barx}{\overline{x}}
\newcommand{\bary}{\overline{y}}
\newcommand{\barz}{\overline{z}}

\newcommand{\calA}{{\cal A}}
\newcommand{\calB}{{\cal B}}
\newcommand{\calC}{{\cal C}}    
\newcommand {\calD}   {{\cal{D}}}
\newcommand {\calE}   {{\cal{E}}}
\newcommand {\calH}   {{\cal{H}}}
\newcommand {\calL}   {{\cal{L}}}
\newcommand {\calM}   {{\cal{M}}}
\newcommand {\calN}   {{\cal{N}}}
\newcommand {\calP}   {{\cal{P}}}
\newcommand {\calS}   {{\cal{S}}}
\newcommand {\calU}   {{\cal{U}}}
\newcommand {\calZ}   {{\cal{Z}}}

\newcommand{\bx}{{\boldsymbol{x}}}  
\newcommand{\by}{{\boldsymbol{y}}}  

\newcommand{\littlesum}{{\textstyle \sum}}              
\newcommand{\littlesumx}{\mathop{{\textstyle \sum}}}    
\newcommand{\half}{{\textstyle \frac12}}     

\newcommand{\AND}{\ensuremath{\mathsf{AND}}\xspace}
\newcommand{\OR}{\ensuremath{\mathsf{OR}}\xspace}
\newcommand{\XOR}{\ensuremath{\mathsf{XOR}}\xspace}
\newcommand{\NOT}{\ensuremath{\mathsf{NOT}}\xspace}
\newcommand{\MODp}{\ensuremath{\mathsf{MOD_p}}\xspace}

\newcommand{\SAT}{\ensuremath{\mathsf{SAT}}\xspace}
\newcommand{\Perm}{\ensuremath{\mathsf{Perm}}\xspace}

\newcommand{\OPT}{\ensuremath{\mathsf{OPT}}\xspace}
\newcommand{\SQ}{\ensuremath{\mathsf{SQ}}\xspace}
\newcommand{\err}{{\bf err}}
\newcommand{\freq}[1]{\text{freq}\left[ #1 \right]}

\newcommand {\compactMath}[1]{$#1$}

\newcommand {\nc}  {\frac{1}{\sqrt{2\pi}}}
\newcommand {\ncc} [2]  {\frac{#1}{\sqrt{2\pi} #2}}
\newcommand {\roundup}   [1] {{\lceil {#1} \rceil}}
\newcommand {\rounddown} [1] {{\lfloor {#1} \rfloor}}

\newcommand {\set}   [1] {\left\{ #1 \right\}}
\newcommand {\brc}   [1] {\left(#1\right)}

\newcommand {\Exp}       {\mathbb{E}}
\newcommand {\Prob}[1]{\Pr\brc{#1}}
\newcommand {\Probb}[2]{\Pr_{#1}\brc{#2}}
\newcommand {\EE}    [2] {\Exp_{#1}\left[#2\right]}
\newcommand {\Varr}  [1] {\Var \left[#1\right]}
\newcommand {\iprod} [2] {\langle #1, #2 \rangle}
\newcommand {\Iprod} [2] {\left\langle #1, #2 \right\rangle}
\newcommand {\tprod} [2] {{#1 \otimes #2}}
\newcommand{\given}{\mid}
\newcommand {\abs}  [1] {\lvert #1 \rvert}
\newcommand{\norm}[1] {\| #1 \|}
\newcommand{\TODO}[1] {{\bf TODO:} {\it #1 }}

\newcommand {\bbN}    {\mathbb{N}}
\newcommand {\bbZ}    {\mathbb{Z}}
\newcommand {\bbB}    {\mathbb{B}}
\newcommand {\bbR}    {\mathbb{R}}
\newcommand {\bbC}    {\mathbb{C}}

\newcommand{\Authornote}[2]{{\small{\sf$<<<${  #1: #2 }$>>>$}}}

\newcommand {\Zpart} [1] {Z_{#1}}
\newcommand {\Zn} {\Zpart{n}}
\newcommand {\Znn} {\Zpart{[n]}}
\newcommand{\Mal}[2]{\calM\left(#1 , #2 \right)}
\newcommand{\cMal}[2]{\widehat{\calM}\left(#1 , #2 \right)}
\newcommand{\MMix}{w_1\Mal{\phi_1}{\pi_1}\oplus w_2\Mal{\phi_2}{\pi_2}}
\newcommand{\MMixn}{w_1\calM_n(\phi_1,\pi_1)\oplus w_2\calM_n(\phi_2,\pi_2)}
\newcommand{\cMMix}{\cw_1\Mal{\cphi_1}{\pi_1}\oplus \cw_2\cMal{\cphi_2}{\pi_2}}
\newcommand{\MMixSame}{w_1\Mal{\phi}{\pi_1}\oplus w_2\Mal{\phi}{\pi_2}}
\newcommand{\el}[1]{e_{#1}}
\newcommand{\pos}[1]{\emph{pos}\left(\el{#1}\right)}
\newcommand{\posgen}{\emph{pos}}
\newcommand{\pospi}{\emph{pos}_\pi(\el{i})}
\newcommand{\delt}{\delta}
\newcommand{\tensors}{\textsc{Tensor-Decomp }}

\makeatletter
\floatstyle{ruled}
\newfloat{fragment}{H}{lop}
\floatname{fragment}{Algorithm}
\renewcommand{\floatc@ruled}[2]{\vspace{2pt}{\@fs@cfont \#1.\:} \#2 \par
 \vspace{1pt}}
\makeatother

\newcommand{\dkt}[1]{\ensuremath{d_{\mathsf{kt}}(#1)} }
\newcommand{\epsa}{\eps_1}
\newcommand{\epsc}{\eps_3}
\newcommand{\epsd}{\eps_4}
\newcommand{\epse}{\eps_5}
\newcommand{\epsw}{n^{-sc}}

\newcommand{\xa}{x^{(a)}}
\newcommand{\ya}{y^{(a)}}
\newcommand{\xb}{x^{(b)}}
\newcommand{\yb}{y^{(b)}}
\newcommand{\xc}{x^{(c)}}
\newcommand{\yc}{y^{(c)}}
\newcommand{\Ma}{M_a}
\newcommand{\Mb}{M_b}
\newcommand{\Mc}{M_c}

\newcommand{\ua}{u^{(a)}}
\newcommand{\va}{v^{(a)}}
\newcommand{\ub}{u^{(b)}}
\newcommand{\vb}{v^{(b)}}
\newcommand{\uc}{u^{(c)}}
\newcommand{\vc}{v^{(c)}}

\newcommand{\errp}{\vartheta}
\newcommand{\cphi}{\widehat{\phi}}
\newcommand{\cw}{\widehat{w}}
\newcommand{\cx}{\widehat{x}}
\newcommand{\cy}{\widehat{y}}
\newcommand{\minl}{\gamma_{\text{min}}}
\newcommand{\wmin}{w_{\text{min}}}
\newcommand{\phimin}{\phi_{\text{min}}}
\newcommand{\phimax}{\phi_{\text{max}}}
\newcommand{\llmin}{\minl \wmin}

\newcommand{\Maa}{M'_a}
\newcommand{\Mbb}{M'_b}
\newcommand{\Mcc}{M'_c}
\newcommand{\pa}{p^{(a)}}
\newcommand{\pb}{p^{(b)}}
\newcommand{\pc}{p^{(c)}}
\newcommand{\cpi}{\widehat{\pi}}
\newcommand{\cP}{\widehat{P}}


\newcommand{\epsb}{\eps'_2}
\newcommand{\epsta}{\eps'}
\newcommand{\epstaa}{\eps_6}

\newcommand{\cp}{\hat{p}}

\newcommand{\ccalM}{\widehat{\calM}}

\newcommand{\ftotal}[2]{f\left(#1 \rightarrow #2\right)}
\newcommand{\fone}[2]{f^{(1)}\left(#1 \rightarrow #2\right)}
\newcommand{\cfone}[3]{f^{(1)}\left(#1 \rightarrow #2\vert \el{#3} \rightarrow 1 \right)}
\newcommand{\cftwo}[3]{f^{(2)}\left(#1 \rightarrow #2 \vert \el{#3} \rightarrow 1 \right)}
\newcommand{\ftwo}[2]{f^{(2)} \left(#1 \rightarrow #2\right)}
\newcommand{\fsup}[3]{f^{(#3)}\left(#1 \rightarrow #2\right)}
\newcommand{\cfsup}[4]{f^{(#4)}\left(#1 \rightarrow #2\vert \el{#3} \rightarrow 1 \right)}
\newcommand{\fsub}[3]{f_{#3}\left(#1 \rightarrow #2\right)}

\newcommand{\hfone}[2]{{\hat f}^{(1)}\left(#1 \rightarrow #2\right)}
\newcommand{\hcfone}[3]{{\hat f}^{(1)}\left(#1 \rightarrow #2\vert \el{#3} \rightarrow 1 \right)}
\newcommand{\hcftwo}[3]{{\hat f}^{(2)}\left(#1 \rightarrow #2 \vert \el{#3} \rightarrow 1 \right)}
\newcommand{\hftwo}[2]{{\hat f}^{(2)} \left(#1 \rightarrow #2\right)}
\newcommand{\hf}[2]{{\hat f}\left(#1 \rightarrow #2\right)}
\newcommand{\chf}[3]{{\hat f}\left(#1 \rightarrow #2 \vert \el{#3} \rightarrow 1 \right)}

\newcommand{\yperp}{y^{\perp}}
\begin{abstract}
This work concerns learning probabilistic models for ranking data in a heterogeneous population. The specific problem we study is learning the parameters of a {\em Mallows Mixture Model}. Despite being widely studied, current heuristics for this problem do not have theoretical guarantees and can get stuck in bad local optima. We present the first polynomial time algorithm which provably learns the parameters of a mixture of two Mallows models. A key component of our algorithm is a novel use of tensor decomposition techniques to learn the top-$k$ prefix in both the rankings. Before this work, even the question of {\em identifiability} in the case of a mixture of two Mallows models was unresolved.
\end{abstract}
\section{Introduction}
\newcommand{\eat}[1]{}

Probabilistic modeling of ranking data is an extensively studied problem with a rich body of past work~\cite{Mallow57,Marden95,LebanonL02,Murphy03,Meila07,Busse07,MandhaniM09,LuB11,Oren}. Ranking using such models has applications in a variety of areas ranging from understanding user preferences in  electoral systems and social choice theory, to more modern learning tasks in online web search, crowd-sourcing and recommendation systems. Traditionally, models for generating ranking data consider a homogeneous group of users with a \emph{central ranking} (permutation) $\pi^*$ over a set of $n$ elements or alternatives. (For instance, $\pi^*$ might correspond to a ``ground-truth ranking'' over a set of movies.) Each individual user generates her own ranking as a noisy version of this one central ranking and independently from other users. The most popular ranking model of choice is the \emph{Mallows model}~\cite{Mallow57}, where in addition to $\pi^*$ there is also a scaling parameter $\phi \in (0,1)$. Each user picks her ranking $\pi$ w.p. proportional to $\phi^{d_{\sf kt}(\pi,\pi^*)}$ where $d_{\sf kt}(\cdot)$ denotes the Kendall-Tau distance between permutations (see Section~\ref{sec:prelims}).\footnote{In fact, it was shown~\cite{Mallow57} that this model is the result of the following simple (inefficient) algorithm: rank every pair of elements randomly and independently s.t. with probability $\tfrac {1} {1+\phi}$ they agree with $\pi^*$ and with probability $\tfrac {\phi}{1+\phi}$ they don't;  if all $\binom n 2$ pairs agree on a single ranking -- output this ranking, otherwise resample.} We denote such a model as $\calM_n(\phi,\pi^*)$.

The Mallows model and its generalizations have received much attention from the statistics, political science and machine learning communities, relating this probabilistic model to the long-studied work about voting and social choice~\cite{Condorcet,dia88}. 
From a machine learning perspective, the problem is to find the parameters of the model~--- the central permutation $\pi^*$ and the scaling parameter $\phi$, using independent samples from the distribution. There is a large body of work~\cite{Murphy03, Busse07,Meila07, MandhaniM09, BravermanM09} providing efficient algorithms for learning the parameters of a Mallows model. 

\eat{
 In this work, we focus on the problem of learning a ranking model, and in particular, learning a Mallow's ranking model~\cite{Mallow57}. In the Mallow's model, all permutations are i.i.d samples from a particular distribution generated from the central permutation $\pi^*$, where the probability of sampling a specific permutation $\pi$ is proportional to $\exp(-\beta \dkt{\pi, \pi^*})$ where $\dkt{}$ denotes the Kendall-Tau distance between rankings (see below) and $\beta$ is the model's parameter. Our goal is to find the central permutation, denoted $\pi^*$ (and find $\beta$ as well), from multiple iid examples sampled from such distribution. This is motivated by a setting where we look at multiple rankings of (say) movies, with each individual has her own ranking over movies, and we postulate the existence of an overall movie ranking for the entire population s.t. each individual's ranking is derived (up to a few flips) from this one central ranking. Indeed, the Mallow model is quite popular and the problem of learning the Mallow model's parameters has been repeatedly studied in the past, see~\cite{Meila07, MandhaniM09, BravermanM09}.
}

In many scenarios, however, the population is heterogeneous with multiple groups of people, each with their own central ranking~\cite{Marden95}. For instance, when ranking movies, the population may be divided into two groups corresponding to men and women; with men ranking movies with one underlying central permutation, and women ranking movies with another underlying central permutation. This naturally motivates the problem of learning a \emph{mixture} of multiple Mallows models for rankings, a problem that has received significant attention~\cite{LuB11, MeilaC10, LebanonL02, Murphy03}. Heuristics like the EM algorithm have been applied to learn the model parameters of a mixture of Mallows models~\cite{LuB11}. The problem has also been studied under distributional assumptions over the parameters, e.g. weights derived from a Dirichlet distribution~\cite{MeilaC10}. However, unlike the case of a single Mallows model, algorithms with provable guarantees have remained elusive for this problem.

In this work we give the \emph{first polynomial time algorithm} that \emph{provably} learns a mixture of two Mallows models. 
The input to our algorithm consists of i.i.d random rankings (samples), with each ranking drawn with probability $w_1$ from a Mallows model $\calM_n(\phi_1,\pi_1)$, and with probability $w_2 (=1-w_1)$ from a different model $\calM_n(\phi_2,\pi_2)$.

\noindent\textbf{Informal Theorem.} {\it Given sufficiently many i.i.d samples drawn from a mixture of two Mallows models, we can learn the central permutations $\pi_1,\pi_2$ exactly and parameters $\phi_1,\phi_2,w_1,w_2$ up to $\epsilon$-accuracy in time $\poly (n,(\min\{w_1,w_2\})^{-1},\frac{1}{\phi_1 (1-\phi_1)},\frac{1}{\phi_2(1-\phi_2)},{\epsilon}^{-1})$.}

It is worth mentioning that, to the best of our knowledge, prior to this work even the question of identifiability was unresolved for a mixture of two Mallows models; given infinitely many i.i.d. samples generated from a mixture of two distinct Mallow models with parameters $\{w_1, \phi_1, \pi_1, w_2, \phi_2, \pi_2\}$ (with $\pi_1\neq \pi_2$ or $\phi_1\neq\phi_2$), could there be a different set of parameters $\{w'_1, \phi'_1, \pi'_1, w'_2, \phi'_2, \pi'_2\}$ which explains the data just as well.
Our result shows that this is not the case and the mixture is uniquely identifiable given polynomially many samples.


\noindent\textbf{Intuition and a Na\"ive First Attempt.} 
It is evident that having access to sufficiently many random samples allows one to learn a single Mallows model. Let the elements in the permutations be denoted as $\{e_1, e_2, \ldots, e_n\}$.
In a single Mallows model, the probability of element $\el{i}$ going to position $j$ (for $j\in [n]$) drops off exponentially as one goes farther from the true position of $\el{i}$~\cite{BravermanM09}. So by assigning each $e_i$ the most frequent position in our sample, we can find the central ranking $\pi^*$.

The above mentioned intuition suggests the following clustering based approach to learn a mixture of two Mallows models --- look at the distribution of the positions where element $\el{i}$ appears. If the distribution has 2 clearly separated ``peaks'' then they will correspond to the positions of $\el{i}$ in the central permutations. Now, dividing the samples according to $\el{i}$ being ranked in a high or a low position is likely to give us two pure (or almost pure) subsamples, each one coming from a single Mallows model. We can then learn the individual models separately. More generally, this strategy works when the two underlying permutations $\pi_1$ and $\pi_2$ are far apart which can be formulated as a {\em separation condition}.\footnote{Identifying a permutation $\pi$ over $n$ elements with a $n$-dimensional vector $(\pi(i))_i$, this separation condition can be roughly stated as $\|\pi_1-\pi_2\|_\infty = \tilde\Omega\left((\min\{w_1,w_2\})^{-1} \cdot (\min\{\log(1/\phi_1),\log(1/\phi_2)\}))^{-1}\right)$.} Indeed, the above-mentioned intuition works only under strong separator conditions: otherwise, the observation regarding the distribution of positions of element $\el{i}$ is no longer true \footnote{Much like how other mixture models are solvable under separation conditions, see~\cite{Dasgupta99, Arora01, AchlioptasM05}.}. For example, if $\pi_1$ ranks $\el{i}$ in position $k$ and $\pi_2$ ranks $\el{i}$ in position $k+2$, it is likely that the most frequent position of $\el{i}$ is $k+1$, which differs from $\el{i}$'s position in either permutations!

\noindent \textbf{Handling arbitrary permutations.}
Learning mixture models under no separation requirements is a challenging task. To the best of our knowledge, the only polynomial time algorithm known is for the case of a mixture of a constant number of Gaussians~\cite{KMV10,MV10}. Other works, like the recent developments that use tensor based methods for learning mixture models without  distance-based separation condition~\cite{AGHKT,AnandkumarHK12,Hsu13} still require non-degeneracy conditions and/or work for specific sub cases~(e.g. spherical Gaussians).

These sophisticated tensor methods form a key component in our algorithm for learning a mixture of two Mallows models. This is non-trivial as learning over rankings poses challenges which are not present in other widely studied problems such as mixture of Gaussians. For the case of Gaussians, spectral techniques
have been extremely successful~\cite{VempalaW04, AchlioptasM05, AGHKT, Hsu13}. Such techniques rely on estimating the covariances and higher order moments in terms of the model parameters to detect structure and dependencies. On the other hand, in the mixture of Mallows models problem there is \emph{no ``natural'' notion of a second/third moment}. A key contribution of our work is defining analogous notions of moments which can be represented succinctly in terms of the model parameters. As we later show, this allows us to use tensor based techniques to get a good starting solution.  

\noindent\textbf{Overview of Techniques.}
One key difficulty in arguing about the Mallows model is the lack of closed form expressions for basic propositions like \emph{``the probability that the $i$-th element of $\pi^*$ is ranked in position $j$.''} Our first observation is that the distribution of a given element appearing at the top, i.e. the first position, behaves nicely. Given an element $e$ whose rank in the central ranking $\pi^*$ is $i$, the probability that a ranking sampled from a Mallows model ranks $e$ as the first element is $\propto \phi^{i-1}$. A length $n$ vector consisting of these probabilities is what we define as the {\em first moment vector} of the Mallows model. Clearly by sorting the coordinate of the first moment vector, one can recover the underlying central permutation and estimate $\phi$. Going a step further, consider any two elements which are in positions $i,j$ respectively in $\pi^*$. We show that the probability that a ranking sampled from a Mallows model ranks $\{i,j\}$ in (any of the $2!$ possible ordering of) the first two positions is $\propto f(\phi)\phi^{i+j-2}$. We call the $n \times n$ matrix of these probabilities as the {\em second moment matrix} of the model~(analogous to the covariance matrix). Similarly, we define the $3$rd moment tensor as the probability that any $3$ elements appear in positions $\{1,2,3\}$. We show in the next section that in the case of a mixture of two Mallows models, the $3$rd moment tensor defined this way has a rank-$2$ decomposition, with each rank-$1$ term corresponds to the first moment vector of each of two Mallows models. This motivates us to use tensor-based techniques to estimate the first moment vectors of the two Mallows models, thus learning the models' parameters. 

The above mentioned strategy would work if one had access to infinitely many samples from the mixture model. 
But notice that the probabilities in the first-moment vectors decay exponentially, so by using polynomially many samples we can only recover a prefix of length $\sim {\log_{1/\phi} n}$ from both rankings. This forms the first part of our algorithm which outputs good estimates of the mixture weights, scaling parameters $\phi_1$, $\phi_2$ and prefixes of a certain size from both the rankings.
Armed with $w_1$, $w_2$ and these two prefixes we next proceed to recover the full permutations $\pi_1$ and $\pi_2$. In order to do this, we take two new fresh batches of samples. On the first batch, we estimate the probability that element $e$ appears in position $j$ for all $e$ and $j$. On the second batch, which is noticeably larger than the first, we estimate the probability that $e$ appears in position $j$ \emph{conditioned on a carefully chosen element $e^*$ appearing as the first element}. We show that this conditioning is {\em almost} equivalent to sampling from the same mixture model but with rescaled weights $w_1'$ and $w_2'$. The two estimations allow us to set a system of two linear equations in two variables: $\fone{e}{j}$ -- the probability of element $e$ appearing in position $j$ in $\pi_1$, and $\ftwo{e}{j}$ --- the same probability for $\pi_2$. Solving this linear system we find the position of $e$ in each permutation. 

The above description contains most of the core ideas involved in the algorithm. We need two additional components. First, notice that the $3$rd moment tensor is not well defined for triplets $(i,j,k)$, when $i,j,k$  are not all distinct and hence cannot be estimated from sampled data. To get around this barrier we consider a random partition of our element-set into $3$ disjoint subsets. The actual tensor we work with consists only of triplets $(i,j,k)$ where the indices belong to different partitions. Secondly, we have to handle the case where tensor based-technique fails, i.e. when the $3$rd moment tensor isn't full-rank. This is {\em a degenerate case}. Typically, tensor based approaches for other problems cannot handle such degenerate cases.
However, in the case of the Mallows mixture model, we show that such a degenerate case provides a lot of useful information about the problem. In particular, it must hold that $\phi_1 \simeq \phi_2$, \emph{and} $\pi_1$ and $\pi_2$ are fairly close --- one is almost a cyclic shift of the other. To show this we use a characterization of the when the tensor decomposition is unique (for tensors of rank $2$), and we handle such degenerate cases separately. Altogether, we find the mixture model's parameters with no non-degeneracy conditions.

\noindent {\bf Lower bound under the pairwise access model.} 
Given that a single Mallows model can be learned using only pairwise comparisons, a very restricted access to each sample, it is natural to ask, \emph{``Is it possible to learn a mixture of Mallows models from pairwise queries?''}. 
This next example shows that we cannot hope to do this even for a mixture of two Mallows models. Fix some $\phi$ and $\pi$ and assume our sample is taken using mixing weights of $w_1=w_2=\tfrac 1 2$ from the two Mallows models $\calM_n(\phi,\pi)$ and $\calM_n(\phi,\textrm{rev}(\pi))$, where $\textrm{rev}(\pi)$ indicates the reverse permutation (the first element of $\pi$ is the last of $\textrm{rev}(\pi)$, the second is the next-to-last, etc.) . Consider two elements, $e$ and $e'$. Using only pairwise comparisons, we have that it is just as likely to rank $e>e'$ as it is to rank $e'>e$ and so this case cannot be learned regardless of the sample size. 

\noindent\textbf{$3$-wise queries.}
We would also like to stress that our algorithm does not need full access to the sampled rankings and instead will work with access to certain $3$-wise queries. Observe that the first part of our algorithm, where we recover the top elements in each of the two central permutations, only uses access to the top $3$ elements in each sample. In that sense, we replace the pairwise query ``do you prefer $e$ to $e'$?'' with a $3$-wise query: ``what are your top $3$ choices?'' Furthermore, the second part of the algorithm (where we solve a set of $2$ linear equations) can be altered to support $3$-wise queries of the (admittedly, somewhat unnatural) form ``if $e^*$ is your top choice, do you prefer $e$ to $e'$?'' For ease of exposition, we will assume full-access to the sampled rankings.

\noindent\textbf{Future Directions.} Several interesting directions come out of this work. A natural next step is to generalize our results to learn a mixture of $k$ Mallows models for $k>2$. We believe that most of these techniques can be extended to design algorithms that take $\poly(n,1/\eps)^k$ time. It would also be interesting to get algorithms for learning a mixture of $k$ Mallows models which run in time $\poly(k,n)$, perhaps in an appropriate smoothed analysis setting~\cite{BCMV} or under other non-degeneracy assumptions. 
Perhaps, more importantly, our result indicates that tensor based methods which have been very popular for learning problems, might also be a powerful tool for tackling ranking-related problems in the fields of machine learning, voting and social choice.

\noindent\textbf{Organization.} In Section~\ref{sec:prelims} we give the formal definition of the Mallow model and of the problem statement, as well as some useful facts about the Mallow model. Our algorithm and its numerous subroutines are detailed in Section~\ref{sec:algorithm}. In Section~\ref{sec:experiments} we experimentally compare our algorithm with a popular EM based approach for the problem. The complete details of our algorithms and proofs are included in the supplementary material.
\section{Notations and Properties of the Mallows Model} \label{sec:prelims}
Let $U_n=\{\el{1},\el{2},\dots,\el{n}\}$ be a set of $n$ distinct elements. We represent permutations over the elements in $U_n$ through their indices $[n]$. (E.g., $\pi=(n,n-1,\dots,1)$ represents the permutation $\left(\el{n},\el{n-1},\dots,\el{1}\right)$.)
Let $\pospi = \pi^{-1}(i)$ refer to the position of $\el{i}$ in the permutation $\pi$. We omit the subscript $\pi$ when the permutation $\pi$ is clear from context.
For any two permutations $\pi,\pi'$ we denote $\dkt{\pi,\pi'}$ as the Kendall-Tau distance~\cite{kendall38} between them (number of pairwise inversions between $\pi,\pi'$). Given some $\phi\in(0,1)$ we denote $\Zpart{i}(\phi) = \frac {1-\phi^i} {1-\phi}$, and partition function $\Znn(\phi) = \sum_\pi \phi ^{\dkt{\pi, \pi_0}}=  \prod_{i=1}^n \Zpart{i}(\phi)$ (see Section~\ref{apx_sec:properties_of_Mallow} in the supplementary material). 
\begin{definition}\label{def:mal}[Mallows model ($\calM_{n}(\phi, \pi_0)$).] Given a permutation $\pi_0$ on $[n]$ and a parameter $\phi \in (0,1)$,\footnote{It is also common to parameterize using $\beta \in \R^{+}$ where $\phi = e^{-\beta}$. For small $\beta$ we have $(1-\phi)\approx \beta$.}, a \emph{Mallows model} is a permutation generation process that returns permutation $\pi$ w.p. \[\Prob{\pi}=\phi^{\dkt{\pi, \pi_0}}/ \Znn(\phi)\]
\end{definition} 

In Section~\ref{apx_sec:properties_of_Mallow} we show many useful properties of the Mallows model which we use repeatedly throughout this work. We believe that they provide an insight to Mallows model, and we advise the reader to go through them. We proceed with the main definition.

\begin{definition}\label{def:mmix}[Mallows Mixture model $\MMixn$.] 
Given parameters $w_1,w_2\in (0,1)$ s.t. $w_1+w_2=1$, parameters $\phi_1,\phi_2\in(0,1)$ and two permutations $\pi_1, \pi_2$, we call a \emph{mixture of two Mallows models} to be the process that with probability $w_1$ generates a permutation from $\Mal{\phi_1}{\pi_1}$  and with probability $w_2$ generates a permutation from $\Mal{\phi_2}{\pi_2}$. 
\end{definition}

Our next definition is crucial for our application of tensor decomposition techniques.

\begin{definition}\label{def:xy}[Representative vectors.]
The \emph{representative vector} of a Mallows model is a vector where for every $i\in[n]$, the $i$th-coordinate is $\phi^{\pospi-1}/\Zpart{n}$.\end{definition} 
\vspace{-0.3cm}
The expression $\phi^{\pospi-1}/\Zpart{n}$ is precisely the probability that a permutation generated by a model $\calM_n(\phi,\pi)$ ranks element $e_i$ at the first position (proof deferred to the supplementary material). Given that our focus is on learning a mixture of two Mallows models $\calM_n(\phi_1,\pi_1)$ and $\calM_n(\phi_2,\pi_2)$, we denote $x$ as the representative vector of the first model, and $y$ as the representative vector of the latter. Note that retrieving the vectors $x$ and $y$ exactly implies that we can learn the permutations $\pi_1$ and $\pi_2$ and the values of $\phi_1,\phi_2$.

Finally, let $\ftotal{i}{j}$ be the probability that element $\el{i}$ goes to position $j$ according to mixture model. Similarly $\fone{i}{j}$ be the corresponding probabilities according to Mallows model $\calM_1$ and $\calM_2$ respectively. Hence, $\ftotal{i}{j}=w_1 \fone{i}{j}+w_2 \ftwo{i}{j}$.  

\noindent \textbf{Tensors:} Given two vectors $u \in \R^{n_1},  v \in \R^{n_2}$, we define $u \otimes v \in R^{n_1 \times n_2}$ as the matrix $uv^T$. Given also $z\in\R^{n_3}$ then $u\otimes v\otimes z$ denotes the $3$-tensor (of rank-
$1$) whose $(i,j,k)$-th coordinate is $u_iv_jz_k$. A tensor $T \in \R^{n_1 \times n_2\times n_3}$ has a rank-$r$ decomposition if $T$ can be expressed as $\sum_{i \in [r]} u_i \otimes v_i \otimes z_i$ where $u_i \in \R^{n_1}, v_i \in \R^{n_2}, z_i \in \R^{n_3}$. Given two vectors $u, v \in \R^n$, we use $\left(u ; v \right)$ to denote the $n \times 2$ matrix that is obtained with $u$ and $v$ as columns. 

We now define first, second and third order statistics (frequencies) that serve as our proxies for the first, second and third order moments. 
\begin{definition}\label{def:xy_moments}[Moments]
Given a Mallows mixture model, we denote for every $i,j,k \in [n]$ 
\begin{itemize}
\item $P_i=\Prob{\pos{i}=1}$ is the probability that element $\el{i}$ is ranked at the first position
\item $P_{ij}=\Prob{\posgen\left(\set{\el{i},\el{j}}\right)=\{1,2\}}$, is the probability that $\el{i},\el{j}$ are ranked at the first two positions (in any order)
\item $P_{ijk}=\Prob{\posgen\left(\set{\el{i},\el{j},\el{k}}\right)=\{1,2,3\}}$ is the probability that $\el{i}, \el{j}, \el{k}$ are ranked at the first three positions (in any order).
\end{itemize}
\end{definition}
\noindent For convenience, let $P$ represent the set of quantities $\left( P_i, P_{ij}, P_{ijk} \right)_{1 \le i < j <k \le n}$. These can be estimated up to any inverse polynomial accuracy using only polynomial samples. 
The following simple, yet crucial lemma relates $P$ to the vectors $x$ and $y$, and demonstrates why these statistics and representative vectors are ideal for tensor decomposition.
\begin{lemma}\label{lem:moments}
Given a mixture $\MMix$ let $x, y$ and $P$ be as defined above. 
\begin{enumerate}
\item For any $i$ it holds that $P_i=  w_1 x_i + w_2 y_i.$
\item Denote $c_2(\phi)=\frac{\Zpart{n}(\phi)}{\Zpart{n-1}(\phi)}\frac{1+\phi}{\phi}$. Then for any $i\neq j$ it holds that  
$P_{ij}=w_1 c_2(\phi_1)x_i x_j + w_2 c_2(\phi_2) y_i y_j$.
\item Denote $c_3(\phi)=\frac{\Zpart{n}^2(\phi)}{\Zpart{n-1}(\phi)\Zpart{n-2}(\phi)}\frac{1+2\phi+2\phi^2+\phi^3}{\phi^3}$. Then for any distinct $i,j,k$ it holds that 
$P_{ijk}= w_1 c_3(\phi_1) x_i x_j x_k+ w_2 c_3(\phi_2)y_i y_j y_k$.
\end{enumerate}
Clearly, if $i=j$ then $P_{ij}=0$, and if $i,j,k$ are not all distinct then $P_{ijk}=0$.
\end{lemma}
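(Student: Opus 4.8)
The plan is to reduce every statement to a computation inside a \emph{single} Mallows model and then combine the two components by the law of total probability. Since the mixture draws from $\Mal{\phi_1}{\pi_1}$ with probability $w_1$ and from $\Mal{\phi_2}{\pi_2}$ with probability $w_2$, any event $\mathcal{E}$ depending only on the sampled permutation satisfies $\Prob{\mathcal{E}} = w_1\Prob{\mathcal{E}\mid\calM_1} + w_2\Prob{\mathcal{E}\mid\calM_2}$. Thus it suffices to show, for a single model $\calM_n(\phi,\pi)$ with representative vector $v$ (where $v_i=\phi^{\pos{i}-1}/\Zpart{n}$), that the single-model analogues of $P_i$, $P_{ij}$, $P_{ijk}$ equal $v_i$, $c_2(\phi)v_iv_j$, and $c_3(\phi)v_iv_jv_k$ respectively; summing the two weighted copies then yields the three identities, with $x$ and $y$ playing the role of $v$ for the two models.

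The single-model workhorse is the recursive ``peeling'' property of the Mallows model (among the properties established in Section~\ref{apx_sec:properties_of_Mallow}): the probability that the element sitting at position $p$ of the central permutation is ranked first equals $\phi^{p-1}/\Zpart{n}$, and \emph{conditioned} on a fixed element being ranked first, the remaining $n-1$ elements are distributed as $\calM_{n-1}(\phi,\pi')$, where $\pi'$ is $\pi$ with that element deleted. The first clause immediately gives $\Prob{\pos{i}=1}=\phi^{\pos{i}-1}/\Zpart{n}=v_i$, and combining the two weighted models proves statement~(1). For statement~(2) I would expand the event $\posgen(\{\el{i},\el{j}\})=\{1,2\}$ into its two orderings and apply the peeling property twice: the ordered probability of placing a sequence at the top is a product of terms $\phi^{(\text{current rank})-1}/\Zpart{m}$ for $m=n,n-1$, so the denominator is always $\Zpart{n}\Zpart{n-1}$. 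Tracking how deleting the first-placed element shifts the position of the second, the two orderings contribute numerators $\phi^{\pos{i}+\pos{j}-2}$ and $\phi^{\pos{i}+\pos{j}-3}$, whose sum factors as $\phi^{\pos{i}+\pos{j}-2}\frac{1+\phi}{\phi}$; dividing by $\Zpart{n}\Zpart{n-1}$ and comparing with $v_iv_j=\phi^{\pos{i}+\pos{j}-2}/\Zpart{n}^2$ yields exactly $c_2(\phi)v_iv_j$.

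Statement~(3) is the same computation run over all $3!=6$ orderings of placing $\el{i},\el{j},\el{k}$ at the top, with common denominator $\Zpart{n}\Zpart{n-1}\Zpart{n-2}$. Writing $a=\pos{i}$, $b=\pos{j}$, $c=\pos{k}$, each ordering contributes $\phi$ raised to the sum of the three successive ``number of still-unplaced elements lying before the one being placed'' counts; the bookkeeping will show these six exponents to be $a+b+c-6,\,a+b+c-5,\,a+b+c-5,\,a+b+c-4,\,a+b+c-4,\,a+b+c-3$, so their $\phi$-powers sum to $\phi^{a+b+c-6}(1+2\phi+2\phi^2+\phi^3)$. Dividing by $\Zpart{n}\Zpart{n-1}\Zpart{n-2}$ and comparing with $v_iv_jv_k=\phi^{a+b+c-3}/\Zpart{n}^3$ produces precisely $c_3(\phi)v_iv_jv_k$, and weighting the two models gives statement~(3). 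The two trivial cases ($P_{ij}=0$ when $i=j$, and $P_{ijk}=0$ when the indices are not distinct) hold because a permutation cannot place one element in two positions.

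The main obstacle I expect is the position-shift bookkeeping in statement~(3): because peeling off an already-placed element renumbers the central permutation, one must carefully recompute, for each of the six orderings, how many unplaced elements precede the element currently being raised to the top, and confirm that—independently of the relative order of $a,b,c$—the resulting multiset of exponents collapses to the symmetric polynomial $1+2\phi+2\phi^2+\phi^3$. Verifying that the final answer is symmetric in $\{i,j,k\}$ (it depends only on $a+b+c$) is the natural sanity check that the combinatorics has been carried out correctly.
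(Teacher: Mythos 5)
Your proposal is correct, and it uses exactly the machinery the paper sets up for this purpose: the ``peeling'' property (the element at position $p$ of the central ranking is placed first with probability $\phi^{p-1}/\Zpart{n}$, and conditioning on the first-placed element leaves a Mallows model on the remaining $n-1$ elements, Lemma~\ref{app_lem:conditioned-on-x}), applied once, twice, or three times and combined across the two components by total probability. The paper never writes out the exponent bookkeeping for $P_{ij}$ and $P_{ijk}$ explicitly, but your multiset of exponents $\{a+b+c-6,\,a+b+c-5,\,a+b+c-5,\,a+b+c-4,\,a+b+c-4,\,a+b+c-3\}$ checks out (the deficit for each of the $3!$ placement orders is $3$ minus the number of inversions against the central order, which takes the values $0,1,1,2,2,3$), and the resulting constants match $c_2(\phi)$ and $c_3(\phi)$ exactly.
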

\noindent In addition, in Lemma~\ref{lem:boundingc} in the supplementary material we prove the bounds $c_2(\phi) = O(1/\phi)$ and $c_3(\phi)= O(\phi^{-3})$.

\noindent\textbf{Partitioning Indices:} Given a partition of $[n]$ into $S_a, S_b, S_c$, let $\xa,\ya$ be the representative vectors $x, y$ restricted to the indices (rows) in $S_a$ (similarly for $S_b, S_c$). 
Then the $3$-tensor $$T^{(abc)} \equiv (P_{ijk})_{\substack{i \in S_a, j \in S_b, k \in S_c}}= w_1 c_3(\phi_1)\xa \otimes \xb \otimes \xc + w_2 c_3(\phi_2) \ya \otimes \yb \otimes \yc.$$
This tensor has a rank-$2$ decomposition, with one rank-$1$ term for each Mallows model. Finally for convenience we define the matrix $M=(x ; y)$, and similarly define the matrices $\Ma=(\xa ; \ya)$, $\Mb=(\xb ; \yb)$, $\Mc=(\xc ; \yc)$.

{\bf Error Dependency and Error Polynomials.} Our algorithm gives an estimate of the parameters $w,\phi$ that we learn in the first stage, and we use these estimates to figure out the entire central rankings in the second stage. The following lemma essentially allows us to assume instead of estimations, we have access to the true values of $w$ and $\phi$.
\begin{lemma}
\label{lem:simulate-noisy-oracle}
For every $\delta > 0$ there exists a function $f(n,\phi,\delta)$ s.t. for every $n$, $\phi$ and $\hat\phi$ satisfying $|\phi - \hat\phi| < \frac {\delta}{f(n,\phi,\delta)}$ we have that the total-variation distance satisfies $\|\Mal{\phi}{\pi}-\Mal{\hat\phi}{\pi}\|_{\sf TV}\leq\delta$.
\end{lemma}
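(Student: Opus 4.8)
\noindent\emph{Proof strategy.}\quad The plan is to exploit the fact that, since both models share the same central permutation $\pi$, the probability either model assigns to a permutation $\sigma$ depends on $\sigma$ only through $d:=\dkt{\sigma,\pi}$. Writing $N_d$ for the number of permutations at Kendall--Tau distance $d$ from $\pi$ (so $\Znn(\phi)=\sum_d N_d\phi^d$) and setting $q_d(\phi):=N_d\phi^d/\Znn(\phi)$, the probability of landing at distance $d$, this collapses the total-variation sum over all $n!$ permutations into a sum over the $O(n^2)$ possible distances:
\[
\|\Mal{\phi}{\pi}-\Mal{\hat\phi}{\pi}\|_{\sf TV}=\tfrac12\sum_{d=0}^{\binom n2}\bigl|q_d(\phi)-q_d(\hat\phi)\bigr|.
\]
Indeed, all $N_d$ permutations at distance $d$ receive the same mass under each model, so their per-permutation contributions add up to exactly $|q_d(\phi)-q_d(\hat\phi)|$.

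Next I would treat each $q_d$ as a smooth function of the scaling parameter on $(0,1)$ and bound the right-hand side by a derivative estimate. A direct computation using $\Znn'(\phi)/\Znn(\phi)=\Ex_\phi[d]/\phi$ (where $\Ex_\phi$ denotes expectation under $\Mal{\phi}{\pi}$) gives the clean identity $q_d'(\phi)=\frac{q_d(\phi)}{\phi}\bigl(d-\Ex_\phi[d]\bigr)$. Since $q_d\ge 0$, summing over $d$ yields
\[
\sum_{d}\bigl|q_d'(\phi)\bigr|=\frac1\phi\,\Ex_\phi\bigl|d-\Ex_\phi[d]\bigr|\le \frac{1}{\phi}\binom n2 ,
\]
the last bound being immediate because $d$ always lies in $\{0,1,\dots,\binom n2\}$. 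A sharper estimate $\Ex_\phi|d-\Ex_\phi[d]|\le\sqrt{\Varx[d]}\le n^{3/2}$ follows from the repeated-insertion representation, under which $d$ is a sum of $n$ independent bounded inversion counts, but the crude bound already suffices here.

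I would then convert this pointwise derivative bound into the claimed total-variation bound by integrating along the segment between $\phi$ and $\hat\phi$. Writing $I$ for the interval with endpoints $\phi,\hat\phi$ and swapping the (finite) sum with the integral,
\[
\sum_d\bigl|q_d(\phi)-q_d(\hat\phi)\bigr|\le\int_I\sum_d\bigl|q_d'(t)\bigr|\,dt\le|\phi-\hat\phi|\,\sup_{t\in I}\frac{1}{t}\binom n2 .
\]
Provided $|\phi-\hat\phi|$ is small enough that $I$ stays inside $[\phi/2,1)$ -- which I enforce through the choice of $f$ -- we have $\sup_{t\in I}1/t\le 2/\phi$, so $\|\Mal{\phi}{\pi}-\Mal{\hat\phi}{\pi}\|_{\sf TV}\le |\phi-\hat\phi|\cdot\binom n2/\phi$. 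Taking $f(n,\phi,\delta)=\max\{\binom n2/\phi,\ \delta/\min\{\phi/2,1-\phi\}\}$ makes the hypothesis $|\phi-\hat\phi|<\delta/f(n,\phi,\delta)$ simultaneously force $I\subseteq[\phi/2,1)$ and drive the bound below $\delta$, as required.

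The only real subtlety -- the part I would be most careful about -- is that the derivative bound $\sum_d|q_d'(t)|\le \binom n2/t$ must hold uniformly over the whole segment $I$, not merely at the endpoint $\phi$; this is precisely why $f$ is chosen so that $\hat\phi$ cannot escape a neighborhood of $\phi$ bounded away from $0$ and $1$. The explicit $\delta$-dependence of $f$ plays only this bookkeeping role (and for $\delta\ge1$ the statement is vacuous, since total-variation distance is at most $1$); apart from this, the argument is a routine continuity estimate.
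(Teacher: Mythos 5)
Your proof is correct, but it takes a genuinely different route from the paper's. The paper reparametrizes $\phi=e^{-\beta}$, $\hat\phi=e^{-\hat\beta}$, uses $d_{\sf kt}\le\binom n2$ to bound the ratio of unnormalized weights (and of the partition functions) by $e^{\frac12 n^2|\beta-\hat\beta|}$, and then applies $|1-e^x|\le 2|x|$ to each term of the total-variation sum, arriving at $\|\cdot\|_{\sf TV}\le \frac{n^2}{\phi_{\min}}|\phi-\hat\phi|$ --- an entirely multiplicative, permutation-by-permutation argument. You instead collapse the sum onto the distance distribution $q_d$, compute the logarithmic derivative $q_d'(\phi)=\frac{q_d(\phi)}{\phi}(d-\mathbb{E}_\phi[d])$, and integrate along the segment between $\phi$ and $\hat\phi$. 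Both arguments need the same two ingredients --- the crude bound $d\le\binom n2$ and a lower bound keeping the parameter away from $0$ --- and both land at $f=\Theta(n^2/\phi)$, so quantitatively they are equivalent for the paper's purposes. What your version buys is a cleaner identification of the governing quantity: the total-variation sensitivity is exactly $\frac{1}{2\phi}\,\mathbb{E}_\phi\bigl|d-\mathbb{E}_\phi[d]\bigr|$, which (as you note) is $O(n^{3/2})$ rather than $O(n^2)$ via the independent-inversion-counts representation; the paper's ratio argument cannot see this improvement. What the paper's version buys is that it avoids any differentiation and works verbatim even if one does not first observe that the probability depends on $\sigma$ only through $d_{\sf kt}(\sigma,\pi)$. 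Your bookkeeping with $f(n,\phi,\delta)=\max\bigl\{\binom n2/\phi,\ \delta/\min\{\phi/2,1-\phi\}\bigr\}$ correctly forces the integration interval to stay in $[\phi/2,1)$, which is the only place uniformity of the derivative bound is needed, so there is no gap.
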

For the ease of presentation, we do not optimize constants or polynomial factors in all parameters. In our analysis, we show how our algorithm is robust (in a polynomial sense) to errors in various statistics, to prove that we can learn with polynomial samples. However, the simplification when there are no errors (infinite samples) still carries many of the main ideas in the algorithm --- this in fact shows the identifiability of the model, which was not known previously.  
\section{Algorithm Overview}\label{sec:algorithm}

\begin{fragment*}[h]
\caption{\label{alg:main}{\sc Learn Mixtures of two Mallows models}, \textbf{Input: } a set $\cal{S}$ of $N$ samples from $\MMix$, Accuracy parameters $\epsilon, \epsilon_2$.  \vspace*{0.01in}
}
\begin{enumerate} \itemsep 0pt
\small 
\item Let $\cP$ be the empirical estimate of $P$ on samples in $\cal{S}$. 
\item Repeat $O(\log n)$ times:
\begin{enumerate}
\item Partition $[n]$ randomly into $S_a$, $S_b$ and $S_c$. Let $T^{(abc)} = \big ( \cP_{ijk} \big )_{i \in S_a, j \in S_b, k \in S_c}$.
\item Run \tensors from \cite{BCV,GVX,BCMV} to get a decomposition of $T^{(abc)} = \ua \otimes \ub \otimes \uc + \va \otimes \vb \otimes \vc$.
\item If $\min\{\sigma_2(\ua ; \va), \sigma_2 (\ub ; \vb) , \sigma_2 (\uc ; \vc)\}> \epsilon_2$ \\(In the {\em non-degenerate} case these matrices are far from being rank-$1$ matrices in the sense that their least singular value is bounded away from $0$.)
\begin{enumerate}
\item Obtain parameter estimates $(\cw_1, \cw_2, \cphi_1, \cphi_2$ and prefixes of the central rankings $ {\pi_1}^{\prime}, {\pi_2}^{\prime})$ from {\sc Infer-Top-k($\cP$, $\Maa $, $\Mbb$, $\Mcc$)}, with $M'_i = (u ^{(i)}; v^{(i)})$ for $i \in \{a,b,c\}$.
\item Use {\sc Recover-Rest} to find the full central rankings $\cpi_1, \cpi_2$.\\
Return {\sc Success} and output $(\cw_1, \cw_2, \cphi_1, \cphi_2, \cpi_1, \cpi_2)$.
\end{enumerate}   
\end{enumerate}
\item Run {\sc Handle Degenerate Cases ($\cP$)}.
\end{enumerate}
\end{fragment*}

Our algorithm (Algorithm~\ref{alg:main}) has two main components. First we invoke a decomposition algorithm~\cite{BCV,GVX,BCMV} over the tensor $T^{(abc)}$, and retrieve approximations of the two Mallows models' representative vectors which in turn allow us to approximate the weight parameters $w_1,w_2$, scale parameters $\phi_1$, $\phi_2$, and the top few elements in each central ranking. We then use the inferred parameters to recover the entire rankings $\pi_1$ and $\pi_2$. Should the tensor-decomposition fail, we invoke a special procedure to handle such degenerate cases. Our algorithm has the following guarantee.

\begin{theorem}\label{thm:main}
Let $\MMix$ be a mixture of two Mallows models and let $w_{\min} = \min\{w_1, w_2\}$ and $\phi_{\max} = \max\{\phi_1,\phi_2\}$ and similarly $\phi_{\min} = \min\{\phi_1,\phi_2\}$. Denote $\eps_0 = \frac{\wmin^2 (1-\phi_{\max})^{10}}{16n^{22}\phi^2_{\max}}$.
Then, given any $0<\eps<\eps_0$, suitably small $\eps_2 = \poly(\frac 1 n,\eps,\phimin,\wmin)$ 
 and $N = \poly\left(n,\frac{1}{\min\set{\eps,\eps_0}},\frac{1}{\phi_1(1-\phi_1)},\frac{1}{\phi_2(1-\phi_2)}, \frac{1}{w_1}, \frac{1}{w_2}\right)$ i.i.d samples from the mixture model, Algorithm~\ref{alg:main} recovers, in poly-time and with probability $\geq 1 - n^{-3}$, the model's parameters with $w_1,w_2,\phi_1,\phi_2$ recovered up to $\epsilon$-accuracy. 
\end{theorem}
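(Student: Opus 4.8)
The plan is to prove the theorem by assembling the guarantees of the three subroutines invoked by Algorithm~\ref{alg:main} (the tensor decomposition, \textsc{Infer-Top-k}/\textsc{Recover-Rest}, and \textsc{Handle Degenerate Cases}) and carefully propagating the sampling error through each stage. I would first handle the sampling step: every entry of $P$ is the expectation of a $\{0,1\}$ indicator (whether a given singleton/pair/triple occupies the top positions), so by Hoeffding together with a union bound over the $O(n^3)$ coordinates, $N = \poly(n,\eps^{-1},\wmin^{-1},(\phi_1(1-\phi_1))^{-1},(\phi_2(1-\phi_2))^{-1})$ samples guarantee $\norm{\cP-P}_\infty \le \eta$ for any prescribed inverse-polynomial $\eta$, except with probability $\le n^{-3}$. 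From here I would treat $\cP$ as an $\eta$-accurate version of $P$ and split into two cases according to whether the representative vectors $x,y$ of the two models are well-separated, which by Lemma~\ref{lem:moments} is exactly the spectral condition of $\sigma_2(M)$ being bounded away from $0$.

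In the non-degenerate case I would first argue that the random tripartition preserves conditioning: since each index of $[n]$ lands in $S_a,S_b,S_c$ independently with probability $1/3$, we have $\Ex[\Ma^T\Ma]=\tfrac13 M^TM$, and a matrix concentration/anti-concentration argument shows that when $\sigma_2(M)$ is bounded below, then with constant probability all three of $\Ma,\Mb,\Mc$ inherit a least singular value within a constant factor; repeating the partition $O(\log n)$ times makes at least one good partition occur except with probability $\le n^{-3}$. On such a partition, $T^{(abc)} = w_1 c_3(\phi_1)\,\xa\otimes\xb\otimes\xc + w_2 c_3(\phi_2)\,\ya\otimes\yb\otimes\yc$ (Lemma~\ref{lem:moments}) is a rank-$2$ tensor with well-conditioned factor matrices, so the robust decomposition theorem of \cite{BCV,GVX,BCMV} recovers the rank-$1$ terms up to a small, polynomially-controlled error (with the inherent scaling/labelling ambiguity). \textsc{Infer-Top-k} then resolves the per-mode scalings and stitches the restricted factors into global estimates of $x,y$ and of $w_1,w_2$, using the unrestricted first-moment identity $P_i = w_1 x_i + w_2 y_i$ as the anchor; because $x_i,y_i$ decay geometrically, only the top $\sim \log_{1/\phimax} n$ coordinates exceed the noise floor, which is precisely why we recover $\cphi_1,\cphi_2,\cw_1,\cw_2$ to $\eps$-accuracy and only the \emph{prefixes} of $\pi_1,\pi_2$ exactly. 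Finally, \textsc{Recover-Rest} extends the prefixes to the full rankings: invoking Lemma~\ref{lem:simulate-noisy-oracle} lets me pretend the estimated $\phi$'s and $w$'s are exact, and conditioning each sample on a well-chosen top element rescales the mixing weights, yielding a nondegenerate $2\times 2$ linear system in $\fone{e}{j}$ and $\ftwo{e}{j}$ whose solution places every $e$ in both permutations.

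For the degenerate case I would show that if all $O(\log n)$ partitions fail the test $\min\{\sigma_2(\Maa),\sigma_2(\Mbb),\sigma_2(\Mcc)\}>\eps_2$, then (since a good partition would have passed) $\sigma_2(M)$ must itself be small, i.e.\ $x$ and $y$ are nearly collinear. Using the characterization of when a rank-$2$ tensor decomposition is non-unique, near-collinearity of $x,y$ forces $\phi_1\simeq\phi_2$ and the positions of each element under $\pi_1,\pi_2$ to differ by a near-constant, i.e.\ one ranking is close to a cyclic shift of the other; \textsc{Handle Degenerate Cases} exploits exactly this rigid structure to read off the parameters directly. Combining the two cases and choosing $\eps_2$ and $\eta$ as sufficiently small inverse polynomials in $(n,\eps,\phimin,\wmin)$ — this accounting is what produces the threshold $\eps_0$ and the explicit $n^{22},(1-\phimax)^{10}$ factors — yields the claimed $\eps$-accurate recovery in polynomial time with probability $\ge 1-n^{-3}$.

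I expect the main obstacle to be the degenerate case, specifically drawing a clean, quantitative dividing line between ``the tensor decomposition succeeds'' and ``the models are structurally close.'' The delicate point is to turn a failed spectral test on the \emph{recovered} factors $\Maa,\Mbb,\Mcc$ into a statement about the \emph{true} $M$, and then to convert near-collinearity of $x,y$ into the rigid cyclic-shift relationship between $\pi_1$ and $\pi_2$ with enough quantitative slack that \textsc{Handle Degenerate Cases} can finish. A secondary difficulty is controlling relative error through the tensor stage: the factors carry the geometrically small entries $c_3(\phi)x_ix_jx_k$, so additive error in $\cP$ must be shown not to swamp the informative top coordinates, which is what pins down the recoverable prefix length and the final sample complexity.
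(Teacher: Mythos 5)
Your overall architecture --- concentration of $\cP$ (Lemma~\ref{lem:sampling}), tensor decomposition plus \textsc{Infer-Top-k} and \textsc{Recover-Rest} in the generic case, a separate degenerate branch, and Lemma~\ref{lem:simulate-noisy-oracle} to decouple parameter error from the second stage --- is the same as the paper's, and those parts of your sketch line up with Lemmas~\ref{lem:tensoralg}, \ref{lem:fullrank} and \ref{lem:recover-rest}. The genuine gap is where you draw the line between the two cases. You split on whether $\sigma_2(M)$ is bounded away from $0$, claim this implies (with constant probability over the random partition) that all three of $\Ma,\Mb,\Mc$ are well-conditioned, and conversely that if every partition fails then $x$ and $y$ are nearly collinear. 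Both implications are false. Take $\phi_1=\phi_2=\phi$ and let $\pi_2$ be $\pi_1$ with its top two elements $\el{i_1},\el{i_2}$ swapped: then $y=x+c_1e_{i_1}+c_2e_{i_2}$ with $|c_1|,|c_2|=\Theta((1-\phi)^2)$, so $\sigma_2(M)$ is bounded below independently of $n$ (far above any threshold $\eps_2=\poly(1/n,\dots)$), yet \emph{every} tripartition places $i_1,i_2$ in at most two parts, so at least one of $\Ma,\Mb,\Mc$ has all rows satisfying $y_j=x_j$ and is exactly rank $1$ --- the spectral test fails on every round while $x,y$ are far from collinear. The same example defeats your matrix-concentration step: $\Ex[\Ma^T\Ma]=\tfrac13 M^TM$ is true, but the rows of $M$ are wildly non-uniform (geometric decay plus one or two dominant deviating rows), so $\sigma_2(\Ma)$ does not concentrate and collapses to $0$ with constant probability.

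The dividing line the paper actually uses (Section~\ref{app:degenerate_cases}) is combinatorial, not spectral: bucket the top elements $L_{\sqrt{\eps}}$ by the shift $pos_{\pi_1}(\el{i})-pos_{\pi_2}(\el{i})$ and ask whether at least three of them lie outside the majority bucket $B_{\ell^*}$. If so, Lemma~\ref{lem:bucket1} produces three \emph{disjoint} well-conditioned $2\times2$ submatrices, which land in distinct parts with constant probability, so one of the $O(\log n)$ rounds succeeds (Lemma~\ref{lem:bucket2}); if not, the model is degenerate, and the correct conclusion is not $x\approx y$ but rather that $\phi_1\approx\phi_2$ (Lemma~\ref{lem:eqphi}, read off a nearly rank-$1$ part) and that $y$ equals a fixed rescaling of $x$ except on at most two coordinates. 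Handling that residual case needs machinery your sketch omits: locating and ordering the $\le 2$ deviant elements from second-order statistics $\cP_{ij}$ (steps 8--10 of \textsc{Handle-Degenerate}), and, in the staggered case $\ell^*\neq 0$, prepending artificial elements via Lemma~\ref{app_lem:simulate-new-element} to make the tensor method applicable again (Lemma~\ref{lem:staggered}). Without replacing your $\sigma_2(M)$ dichotomy by this bucket-based one, the two branches of your argument do not cover all instances.
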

\vspace{-0.3cm}

Next we detail the various subroutines of the algorithm, and give an overview of the analysis for each subroutine. The full analysis is given in the supplementary material.

\noindent\textbf{The \tensors Procedure.} This procedure is a straight-forward invocation of the algorithm detailed in~\cite{BCV,GVX,BCMV}. This algorithm uses spectral methods to retrieve the two vectors generating the rank-$2$ tensor $T^{(abc)}$. This technique works when all factor matrices $\Ma=(\xa ; \ya),\Mb=(\xb ; \yb),\Mc=(\xc ; \yc)$ are well-conditioned. We note that any algorithm that decomposes non-symmetric tensors which have well-conditioned factor matrices, can be used as a black box.

\begin{lemma}[Full rank case] \label{lem:fullrank}
In the conditions of Theorem~\ref{thm:main}, suppose our algorithm picks some partition $S_a, S_b, S_c$ such that the matrices $\Ma,\Mb,\Mc$ are all well-conditioned --- 
i.e. have $\sigma_2 (\Ma), \sigma_2 (\Mb), \sigma_2 (\Mc) \ge \epsb \ge \poly(\frac{1}{n}, \epsilon, \eps_2, w_1,w_2)$
then with high probability, Algorithm {\sc TensorDecomp} of~\cite{BCV} finds $\Maa=(\ua; \va), \Mbb=(\ub; \vb), \Mcc=(\uc; \vc)$  such that
for any $\tau \in \set{a, b, c}$, we have $u^{(\tau)} = \alpha_\tau x^{(\tau)}+ z_1^{(\tau)}$ and $v^{(\tau)} = \beta_\tau y^{(\tau)}+ z_2^{(\tau)}$; with $\norm{z_1^{(\tau)}},\norm{z_2^{(\tau)}} \le \poly(\frac 1 n, \epsilon, \eps_2, \wmin)$ and, 
$\sigma_2(M'_\tau)>\eps_2$ for $\tau \in \set{a,b,c}$.
\end{lemma}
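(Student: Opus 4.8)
The plan is to read the claim as a robustness statement: the tensor $\hat T^{(abc)}$ that the algorithm actually decomposes is a small perturbation of the exact rank-$2$ tensor $T^{(abc)}$, whose factors are scalar multiples of $\xa,\ya$ (and of the corresponding $b,c$ restrictions), and the well-conditioning hypothesis is precisely what the decomposition routine of~\cite{BCV} needs to recover those factors stably. Concretely, by the identity for $T^{(abc)}$ stated above, $T^{(abc)} = \lambda_1\, \bar{x}^{(a)}\otimes \bar{x}^{(b)}\otimes \bar{x}^{(c)} + \lambda_2\, \bar{y}^{(a)}\otimes \bar{y}^{(b)}\otimes \bar{y}^{(c)}$, where $\bar{x}^{(\tau)} = x^{(\tau)}/\norm{x^{(\tau)}}$ and $\bar{y}^{(\tau)} = y^{(\tau)}/\norm{y^{(\tau)}}$ are the normalized factors, $\lambda_1 = w_1 c_3(\phi_1)\prod_{\tau}\norm{x^{(\tau)}}$ and $\lambda_2 = w_2 c_3(\phi_2)\prod_{\tau}\norm{y^{(\tau)}}$. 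The hypothesis $\sigma_2(\Ma),\sigma_2(\Mb),\sigma_2(\Mc)\ge\epsb$ says exactly that the two factors are linearly independent with a quantitative gap in every mode, i.e. all three factor matrices are well-conditioned; this is the non-degeneracy condition under which~\cite{BCV} (equivalently~\cite{GVX,BCMV}) provably recovers the rank-$1$ terms of an order-$3$, rank-$2$ tensor. I would also collect upper and lower bounds on $\lambda_1,\lambda_2$ and on the factor norms, which follow from $c_3(\phi)=O(\phi^{-3})$, from $w_i\ge\wmin$, and from the explicit form of the representative vectors; these control the overall condition number of the decomposition by a fixed $\poly(n,\phimin^{-1},\wmin^{-1})$.

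Next I would bound the input perturbation $E=\hat T^{(abc)}-T^{(abc)}$. Each entry of $\hat T^{(abc)}$ is the empirical frequency of the event $\posgen(\set{\el{i},\el{j},\el{k}})=\{1,2,3\}$, so by Hoeffding's inequality and a union bound over the $O(n^3)$ entries, $N=\poly(n,\eps^{-1},\dots)$ samples make every entry accurate to within an arbitrary inverse polynomial; hence $\norm{E}_F$ can be driven below any prescribed $\poly(\tfrac1n,\eps,\eps_2,\wmin)$. Feeding $\hat T^{(abc)}$ into \tensors, the robustness guarantee of~\cite{BCV} yields recovered rank-$1$ terms that, up to the inherent permutation of the two terms and the mode-wise scaling freedom, lie within $\poly(\epsb^{-1})\cdot\norm{E}_F$ of the true terms. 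Labelling the two recovered components to match $x$ and $y$ (legitimate since the two components are distinct, which follows from the well-conditioning of each $M_\tau$) and writing the mode-$\tau$ columns as $u^{(\tau)},v^{(\tau)}$, I obtain $u^{(\tau)}=\alpha_\tau x^{(\tau)}+z_1^{(\tau)}$ and $v^{(\tau)}=\beta_\tau y^{(\tau)}+z_2^{(\tau)}$, where $\alpha_\tau,\beta_\tau$ absorb the per-mode scaling and $\norm{z_1^{(\tau)}},\norm{z_2^{(\tau)}}\le \poly(\epsb^{-1})\norm{E}_F$, which is at most the target $\poly(\tfrac1n,\eps,\eps_2,\wmin)$ once $N$ is large enough.

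It then remains to verify $\sigma_2(M'_\tau)>\eps_2$. Writing $M'_\tau = M_\tau\,\mathrm{diag}(\alpha_\tau,\beta_\tau)+(z_1^{(\tau)};z_2^{(\tau)})$ and applying Weyl's inequality gives $\sigma_2(M'_\tau)\ge \sigma_2(M_\tau)\min(|\alpha_\tau|,|\beta_\tau|) - \norm{(z_1^{(\tau)};z_2^{(\tau)})}$. The normalization returned by~\cite{BCV} keeps the recovered factors balanced across the two components, so $|\alpha_\tau|,|\beta_\tau|$ are bounded away from $0$ by a fixed polynomial in $n^{-1},\wmin,\phimin$ (their magnitudes track the factor norms, which are bounded below); combined with $\sigma_2(M_\tau)\ge\epsb$ and the smallness of the error term, this lower-bounds $\sigma_2(M'_\tau)$ by a polynomial that strictly exceeds the chosen $\eps_2=\poly(\tfrac1n,\eps,\phimin,\wmin)$.

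The main obstacle is the middle step — invoking the quantitative robustness theorem of~\cite{BCV} with every parameter instantiated, so that the recovery error's dependence on the condition number $\epsb^{-1}$, on the separation between $\lambda_1$ and $\lambda_2$, and on the factor norms is tracked tightly enough to certify both that the errors $z_1^{(\tau)},z_2^{(\tau)}$ are genuinely $\poly(\tfrac1n,\eps,\eps_2,\wmin)$ and that $N=\poly(\dots)$ samples suffice. The exact decomposition, the Hoeffding estimate, and the final singular-value check are then routine.
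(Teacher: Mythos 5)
Your proposal follows essentially the same route as the paper's proof (given there as the ``Decomposition guarantees'' lemma): bound the entrywise sampling error so the empirical tensor is Frobenius-close to the exact rank-$2$ tensor, invoke the robustness guarantee of~\cite{BCV} under the well-conditioning hypothesis, convert closeness of the recovered rank-$1$ terms into per-mode factor closeness up to scaling (the paper does this step via an explicit auxiliary lemma on perturbed rank-$1$ matrices), and finish with $\sigma_2(M'_\tau)\ge\min\{|\alpha_\tau|,|\beta_\tau|\}\,\sigma_2(M_\tau)$ minus the perturbation. The only cosmetic difference is that the paper renormalizes the recovered factors and lower-bounds the restricted factor norms via $\minl$ and the random-partition lemma, whereas you push the same bookkeeping into the scalars $\lambda_1,\lambda_2$; this is not a genuinely different argument.
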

\eat{In practice, since \tensors only relies on elements that have a non-negligible chance of appearing in the first position: this can lead to large speedup for constant $\phi < 1$ by restricting to a much smaller tensor. We refer the reader to Section~\ref{app:tensors} in the supplementary material for full details.}

\noindent\textbf{The \textsc{Infer-Top-k} procedure.}
This procedure uses the output of the tensor-decomposition to retrieve the weights, $\phi$'s and the representative vectors. In order to convert $\ua,\ub,\uc$ into an approximation of $\xa,\xb,\xc$ (and similarly with $\va,\vb,\vc$ and $\ya,\yb,\yc)$, we need to find a good approximation of the scalars $\alpha_a,\alpha_b,\alpha_c$. This is done by solving a certain linear system. This also allows us to estimate $\cw_1,\cw_2$. Given our approximation of $x$, it is easy to find $\phi_1$ and the top first elements of $\pi_1$ --- we sort the coordinates of $x$, setting $\pi_1'$ to be the first elements in the sorted vector, and $\phi_1$ as the ratio between any two adjacent entries in the sorted vector. We refer the reader to Section~\ref{app:tensors} in the supplementary material for full details.
\noindent\textbf{The \textsc{Recover-Rest} procedure.}
The algorithm for recovering the remaining entries of the central permutations (Algorithm~\ref{alg:recover-rest}) is more involved. 
\begin{fragment*}[ht]
\caption{\label{alg:recover-rest}{\sc Recover-Rest}, \textbf{Input: } a set $\cal{S}$ of $N$ samples from $\MMix$, parameters $\hat{w_1}, \hat{w_2}, \hat{\phi_1}, \hat{\phi_2}$ and initial permutations $\hat{\pi_1}, \hat{\pi_2}$, and accuracy parameter $\epsilon$.  \vspace*{0.01in}
}
\begin{enumerate} \itemsep 0pt
\small
\item For elements in $\hat{\pi_1}$ and $\hat{\pi_2}$, compute representative vectors $\hat{x}$ and $\hat{y}$ using estimates $\hat{\phi_1}$ and $\hat{\phi_2}$.
\item Let $|\hat{\pi_{1}}|= r_1$, $|\hat{\pi_{2}}| = r_2$ and wlog $r_1 \ge  r_2$.\\
If there exists an element $\el{i}$ such that $\posgen_{\hat{\pi}_1}(\el{i}) > r_1$ and $\posgen_{\hat{\pi}_2}(\el{i}) < r_2/2$ (or in the symmetric case), then: \\ Let $\mathcal{S}_1$ be the subsample with $\el{i}$ ranked in the first position.
\begin{enumerate}
\item Learn a single Mallows model on $\mathcal{S}_1$ to find $\hat{\pi_1}$. Given $\hat{\pi_1}$ use dynamic programming to find $\hat{\pi_2}$
\end{enumerate}
\item Let $e_{i^*}$ be the first element in $\hat{\pi_1}$ 
having its probabilities of appearing in first place in $\pi_1$ and $\pi_2$ differ by at least $\epsilon$.
Define $\hat{w}^{\prime}_1 = \left(1 + \tfrac{\hat{w_2}}{\hat{w_1}}\tfrac{\hat{y}({e_{i^*}})}{\hat{x}({e_{i^*}})}\right)^{-1}$ and $\hat{w}^{\prime}_2 = 1 - \hat{w}^{\prime}_1$. Let $\mathcal{S}_1$ be the subsample with $e_{i^*}$ ranked at the first position.
\label{alg:recover-rest-find-xstar}
\item For each $e_i$ that doesn't appear in either $\hat\pi_1$ or $\hat\pi_2$ and any possible position $j$ it might belong to
\label{alg:recover-rest-equations}
\begin{enumerate}
\item Use $\mathcal{S}$ to estimate $\hat{f}_{i,j} = \Prob{\el{i}\textrm{ goes to position }j}$, and $\mathcal{S}_1$ to estimate $\chf{i}{j}{i^*} = \Prob{\el{i}\textrm{ goes to position }j | e_{i^*} \mapsto 1}$.
\item Solve the system 
\begin{eqnarray}
\hf{i}{j} & = & \hat{w_1}\fone{i}{j} + \hat{w_2}\ftwo{i}{j}\\
\chf{i}{j}{i^*} & = & \hat{w}'_1\fone{i}{j} + \hat{w}'_2\ftwo{i}{j}
\end{eqnarray}
\end{enumerate}
\item To complete $\hat\pi_1$ assign each $\el{i}$ to position $\arg\max_{j}\{ \fone{i}{j}\}$. Similarly complete $\hat\pi_2$ using $\ftwo{i}{j}$. Return the two permutations. 
\end{enumerate}
\end{fragment*}

Algorithm~\ref{alg:recover-rest} first attempts to find a pivot --- an element $\el{i}$ which appears at a fairly high rank in one permutation, yet does not appear in the other prefix $\hat{\pi_2}$. Let $E_{\el{i}}$ be the event that a permutation ranks $\el{i}$ at the first position. As $\el{i}$ is a pivot, then $\Probb{\calM_1}{E_{\el{i}}}$ is noticeable whereas $\Probb{\calM_2}{E_{\el{i}}}$ is negligible. Hence, conditioning on $\el{i}$ appearing at the first position leaves us with a subsample in which all sampled rankings are generated from the first model. This subsample allows us to easily retrieve the rest of $\pi_1$. Given $\pi_1$, the rest of $\pi_2$ can be recovered using a dynamic programming procedure. Refer to the supplementary material for details.

The more interesting case is when no such pivot exists, i.e., when the two prefixes of $\pi_1$ and $\pi_2$ contain almost the same elements. Yet, since we  invoke \textsc{Recover-Rest} after successfully calling \tensors, it must hold that the distance between the obtained representative vectors $\hat{x}$ and $\hat{y}$ is noticeably large. Hence some element $\el{i^*}$ satisfies $|\hat{x}(\el{i^*}) - \hat{y}(\el{i^*}) |>\eps$, and we proceed by setting up a linear system.
To find the complete rankings, we measure appropriate statistics to set up a system of linear equations to calculate $\fone{i}{j}$ and $\ftwo{i}{j}$ up to inverse polynomial accuracy. The largest of these values $\set{\fone{i}{j}}$ corresponds to the position of $\el{i}$ in the central ranking of $\calM_1$. 

To compute the values $\set{\fsup{i}{j}{r}}_{r=1,2}$ we consider $\cfone{i}{j}{i^*}$ -- the probability that $\el{i}$ is ranked at the $j$th position conditioned on the element $\el{i^*}$ ranking first  according to $\calM_1$ (and resp. for $\calM_2$). Using $w'_1$ and $w'_2$ as in Algorithm~\ref{alg:recover-rest}, it holds that 
$$\Prob{\el{i} \rightarrow j \vert \el{i^*} \rightarrow 1} = w'_1 \cfone{i}{j}{i^*}+ w'_2 \cftwo{i}{j}{i^*}.$$

We need to relate $\cfsup{i}{j}{i^*}{r}$ to $\fsup{i}{j}{r}$. Indeed Lemma~\ref{lem:linear-equations-correct} shows that $\Prob{\el{i} \rightarrow j \vert \el{i^*} \rightarrow 1}$ is an \emph{almost} linear equations in the two unknowns. We show that if $\el{i^*}$ is ranked above $\el{i}$ in the central permutation, then for some small $\delta$ it holds that
\[\Prob{\el{i} \rightarrow j \vert \el{i^*} \rightarrow 1}= w'_1 \fone{i}{j}+ w'_2 \ftwo{i}{j} \pm \delta \]
We refer the reader to Section~\ref{app:recover-rest} in the supplementary material for full details.

\noindent\textbf{The \textsc{Handle-Degenerate-Cases} procedure.} 
We call a mixture model $\MMix$ \emph{degenerate} if the parameters of the two Mallows models are equal, and the edit distance between the prefixes of the two central rankings is at most two i.e., by changing the positions of at most two elements in $\pi_1$ we retrieve $\pi_2$.
We show that unless $\MMix$ is degenerate, a random partition $(S_a,S_b,S_c)$ is likely to satisfy the requirements of Lemma~\ref{lem:fullrank} (and \tensors will be successful). Hence, if \tensors repeatedly fail, we deduce our model is indeed degenerate. To show this, we characterize the uniqueness of decompositions of rank $2$, along with some very useful properties of random partitions. In such degenerate cases, we find the two prefixes and then remove the elements in the prefixes from $U$, and recurse on the remaining elements. We refer the reader to Section~\ref{app:degenerate_cases} in the supplementary material for full details.

\eat{
Our algorithm (Algorithm~\ref{alg:main}) has two main components. In the first part we use spectral methods to recover elements which have a good chance of appearing in the first position.  
We show that if the tensor $T^{(abc)}$ as defined in step (b) of the algorithm has a unique rank 2 decomposition, then one can recover the top few elements of both $\pi_1$ and $\pi_2$ correctly. In addition, we can also infer the parameters $w$'s and $\phi$'s to good accuracy~(corresponding to {\sc Infer-Top-k}). 

The second part of the algorithm~(corresponding to {\sc Recover-Rest}) takes the inferred parameters and the initial prefixes as input and uses this information to recover the entire rankings $\pi_1$ and $\pi_2$. This is done by observing that the probability of an element $e_i$ going to position $j$ can be written as a weighted combination of the corresponding probabilities under $\pi_1$ and $\pi_2$. In addition, as mentioned in Section~\ref{sec:prelims}, the reduced distribution obtained by conditioning on a particular element $e_j$ going to position $1$ is again a mixture of two Mallows models with the same parameters. Hence, by conditioning on a particular element which appears in the initial learned prefix, we get a system of linear equations which can be used to infer the probability of every other element $e_i$ going to position $j$ in both $\pi_1$ and $\pi_2$. This will allow us to infer the entire rankings.

\noindent \textbf{Degenerate Cases:}\\
The above discussion, however, assumes that the tensor $T^{(abc)}$ has a unique rank 2 decomposition. For spectral methods, such non-degeneracy assumptions are needed for provable performance. Hence, finally, we need an additional subroutine~({\sc Handle-Degenerate-Case}) to solve for degenerate cases. 

We call a mixture model $\MMix$ as degenerate if the parameters of the two Mallows models are equal, and the edit distance between the two central rankings is at most two (please see section~\ref{sec:degenerate} for more details. 
 Intuitively, in this case one of the partitions $S_a, S_b, S_c$ constructed by the algorithm will have their corresponding $u$ and $v$ vectors as parallel to each other and hence the tensor method will fail. We show that when this happens, it can be detected and in fact this case provides useful information about the model parameters. More specifically, we show that in a degenerate case, $\phi_1$ will be almost equal to $\phi_2$ and the two rankings will be aligned in a couple of very special configurations~(see Section~\ref{subsec:degenerate}). Procedure {\sc Handle-Degenerate-Case} is designed to recover the rankings in such scenarios. Our algorithms culminate in the following guarantee

The error polynomial $\errp_{\ref{thm:main}}=\errp_{\ref{lem:tensoralg}}(n,\errp_{\ref{lem:degenerate}}(n,\phimin,\epsilon), \phimin, \wmin)$ will suffice for this algorithm. Since all the polynomials $\errp_x$ are only polynomials in the various parameters, this gives a polynomial time algorithm for learning the Mallows model mixture.

}
\eat{
\section{Analysis of the Algorithm}
\label{sec:analysis}
\subsection{Retrieving Top Elements}
\label{sec:tensor}
Here we show how the first stage of the algorithm i.e. {\em steps (a)-(e.i)} manages to recover the top few elements of both rankings $\pi_1$ and $\pi_2$ and also estimate the parameters $\phi_1,\phi_2, w_1, w_2$. We first show that if $\Ma, \Mb, \Mc$ have non-negligible minimum singular values, then the decomposition is unique, and hence we can recover the top few elements and parameters from {\sc Infer Top-$k$}. Otherwise, we show that if this procedure did not work for all $O(\log n)$ iterations, we are in \emph{the degenerate case} (Lemma~\ref{lem:eqphi} and Lemma~\ref{lem:bucket2}), and handle this separately.

The following Lemma captures how Algorithm~\ref{alg:main} (steps 3 (a - e.i)) performs the first stage using Algorithm~\ref{alg:inferparams} and recovers the weights $w_1,w_2$ and $x,y$ when the factor matrices $\Ma,\Mb,\Mc$ are well-conditioned.    

\begin{lemma}[Full rank case] \label{lem:fullrank}
In the conditions of Theorem~\ref{thm:main}, suppose there exists some partition $S_a, S_b, S_c$ such that the matrices $(\xa ; \ya),(\xb ; \yb)$ and $(\xc ; \yc)$ are well-conditioned i.e. have $\sigma_2 (\cdot) \ge \epsb \ge \frac{\eps_2}{\minl}-\errp_{\ref{lem:tensoralg}}$, then {\em Step (e).i.} of Algorithm~\ref{alg:main} finds vectors $\cx, \cy$ (upto renaming) and corresponding weights $\cw_1, \cw_2, \cphi_1,\cphi_2$ within error $\eps= \errp_{\ref{lem:fullrank}}\left(n,\eps'_2,\eps_s,\wmin,\phimin\right)$.
\end{lemma}
In practice, since \tensors only relies on elements that have a non-negligible chance of appearing in the first position: this can lead to large speedup for constant $\phi < 1$ by restricting to a much smaller tensor. 

In the proof we show that in this case, for one of the $O(\log n)$ random partitions, Lemma~\ref{lem:tensoralg} succeeds and recovers vectors $\ua, \va$ which are essentially parallel to $\xa$ and $\ya$ respectively (similarly for $\ub,\uc,\vb,\vc$). Sorting the entries of $\ua$ would give the relative ordering among those in $S_a$ of the top few elements of $\pi_1$. However, to figure out all the top-$k$ elements, we need to figure out the correct scaling of $\ua, \ub, \uc$ to obtain $\xa$. This is done by setting up a linear system. The proof details are in the Appendix~\ref{app:tensors}.

The above lemma shows that we succeed when $\Ma,\Mb,\Mc$ have non-negligible minimum singular value for one of the the $O(\log n)$ random partitions. If it is the case that for {\em all} of these $O(\log n)$ random partitions the tensor method {\em fails}, we show that we are in a very special case. The parameters of the two models $\phi_1$ and $\phi_2$ are essentially the same (Lemma~\ref{lem:eqphi}). Further, we can use this to find this parameter as well. Then we show that the two central rankings have an edit distance of at most two i.e. the two central rankings are the same up to a couple of shifts (see Lemma~\ref{lem:bucket2}). We then handle this case separately in Appendix~\ref{app:degenerate_cases}.




\subsection{Recovering the Full Rankings}
\label{sec:full_recovery}
Let $\fone{i}{j}$ be the probability that element $\el{i}$ goes to position $j$ according to Mallows Model $\calM_1$ (and similarly $\ftwo{i}{j}$ for model $\calM_2$). 
To find the complete rankings, we measure appropriate statistics to set up a system of linear equations to calculate $\fone{i}{j}$ and $\ftwo{i}{j}$ up to inverse polynomial accuracy. The largest of these values $\set{\fone{i}{j}}$ corresponds to the position of $\el{i}$ in the central ranking of $\Mal_1$. To compute these values $\set{\fsup{i}{j}{r}}_{r=1,2}$ we consider statistics of the form ``\emph{what is the probability that $\el{i}$ goes to position $j$ conditioned on $\el{i^*}$ going to the first position}?''. This statistic is related to $\fone{i}{j}, \ftwo{i}{j}$, for an element $\el{i^*}$ that is much closer than $\el{i}$ to the front of one of the permutations. \\

 Suppose $\cfone{i}{j}{i^*}$ be the probability that $\el{i}$ goes to the $j$th position conditioned on the element $\el{i^*}$ going to the first position according to $\calM_1$ (similarly $\Mal_2$).  We have that for any elements $\el{i^*}, \el{i}$ and position $j$, we have 
\begin{align*}
\Prob{\el{i} \rightarrow j \vert \el{i^*} \rightarrow 1}&= w'_1 \cfone{i}{j}{i^*}+ w'_2 \cftwo{i}{j}{i^*} \\
\text{where } w'_1&=\frac{w_1 x_{i*}}{w_1 x_{i^*}+w_2 y_{i^*}}, w'_2=1-w'_1
\end{align*} 

However, these statistics are not in terms of the unknown variables $\fone{i}{j}, \ftwo{i}{j}$. Lemma~\ref{lem:linear-equations-correct} shows that these statistics are \emph{almost} linear equations in the required unknowns (for some small $\delta$):
$$\Prob{\el{i} \rightarrow j \vert \el{i^*} \rightarrow 1}= w'_1 \fone{i}{j}+ w'_2 \ftwo{i}{j} \pm \delta $$

Hence, by picking an appropriate element $\el{i^*}$, we can set up a system of linear equations and solves for the quantities $\set{\fone{i}{j}, \ftwo{i}{j}}$. Suppose there exists an element $\el{i^*}$ that occurs in the top few positions in both the permutations, then that element would suffice for our purpose. On the other hand, if we condition on an element $i^*$ which occurs near the top in one permutation but far away in the other permutation, gives us a \emph{single} Mallows model. The sub-routine {\sc Recover-Rest} of the main algorithm figures out which of the cases we are in, and succeeds in recovering the entire permutations $\pi_1$ and $\pi_2$. Please see section~\ref{app:recover-rest} in the full version for details. 

\begin{fragment*}[t]
\caption{\label{alg:recover-rest}{\sc Recover-Rest}, \textbf{Input: } a set $\cal{S}$ of $N$ samples from $\MMix$, $\hat{w_1}, \hat{w_2}, \hat{\phi_1}, \hat{\phi_2}, \hat{\pi_1}, \hat{\pi_2}, \epsilon$.  \vspace*{0.01in}
}
\begin{enumerate} \itemsep 0pt
\small
\item Let $|\hat{\pi_{1}}|= r_1$, $|\hat{\pi_{2}}| = r_2$ and $r_1 \ge  r_2$.
\item For any element $\el{i}$, define $\hat{p}_{i,1} = \frac{\hat{\phi_1}^{\left(\hat{\pi_1}^{-1}(e_i)-1\right)}}{Z_n(\hat{\phi_1})}$, and $\hat{q}_{i,1} = \frac{\hat{\phi_2}^{\left(\hat{\pi_2}^{-1}(e_i)-1\right)}}{Z_n(\hat{\phi_2})}$. If $e_i$ does not appear in $\hat{\pi_1}$ set $\hat{p}_{i,1} = 0$. Similarly, if $e_i$ does not appear in $\hat{\pi_2}$ set $\hat{q}_{i,1} = 0$. 
\item If there exists an element $\el{i}$ such that $\posgen_{\hat{\pi}_1}(i) > r_1$ and $\posgen_{\hat{\pi}_2}(i) < r_2$ (or in the symmetric case), then \\
Run {\sc Learn-Single-Mallow} to find $\hat{\pi_1}$. Then run {\sc Find-Pi} to find the other central permutation and  {\sc Return} .
\item If not, let $e_{i^*}$ be the first element in $\hat{\pi_1}$ such that $|\hat{p}_{i^*,1} - \hat{q}_{i^*,1}| > \epsilon$. Define $\hat{w}^{\prime}_1 = \frac 1 {1 + \frac{\hat{w_2}}{\hat{w_1}}\frac{\hat{q}_{i^*,1}}{\hat{p}_{i^*,1}}}$ and $\hat{w}^{\prime}_2 = 1 - \hat{w}^{\prime}_1$.
\label{alg:recover-rest-find-xstar}
\item For each $e_i \notin \hat{\pi_{1}}$ and $j > r$
\label{alg:recover-rest-equations}
\begin{enumerate}
\item Estimate $\hat{f}_{i,j} = Pr[\textrm{$e_i$ goes to position $j$}]$ and $\hat{f}_{e_i,j | x \mapsto 1} = Pr[\textrm{$e_i$ goes to position $j$} | e_{i^*} \mapsto 1]$.
\item Solve the system 
\begin{eqnarray}
\hat{f}_{i,j} & = & \fone{i}{j} + \hat{w_2}\ftwo{i}{j}\\
\hat{f}_{i,j | e_{i^*} \mapsto 1} & = & \hat{w}'_1\cfone{i}{j}{i^*} + \hat{w}'_2\cftwo{i}{j}{i^*}
\end{eqnarray}
\end{enumerate}
\item Form the ranking $\hat{\pi_1} = \hat{\pi_{1}} \circ \pi'_1$ s.t. for each $a_i \notin \hat{\pi_1} $, $pos(a_i) = argmax_{j > r} \hat{p}_{a_i,j}$.
\end{enumerate}
\end{fragment*}

\vspace{-10pt}
\subsection{Wrapping up the proof} \label{sec:wrapping}
\eat{
\begin{theorem}\label{thm:main}
For any $0<\eps$, suppose we are given $N$ samples from a mixture of two Mallows models $\MMix$ on $n \ge 6$ elements with mixing weights $w_1, w_2 \ge \eps_0$, there is an polynomial time algorithm (Algorithm~\ref{alg:main}) that recovers with high probability, the permutations $\pi_1, \pi_2$ and the parameters $w_1, w_2, \phi$ up to $\eps$-accuracy, as long as $\eps< \eps_0\cdot \poly\left(\frac{1-\phi}{n}\right)$ and $N \ge \poly\left(n,1/\eps,\frac{1}{(1-\phi)}\right)$. 
\end{theorem}
}

Let $\eps_s$ be the entry-wise error in $P$ from the estimates. From Lemma~\ref{lem:sampling}, $\eps_s < 3\log n/\sqrt{N}$. We aim to estimate each of the parameters $\phi_1,\phi_2 w_1, w_2$ up to error at most $\eps$. Let for convenience, $\gamma=\frac{(1-\phi_{\max})^2}{4n\phi_{\max}}$. 

Let $\eps=\min\set{\eps,\eps_0}$. Let $\eps_3=\errp_{\ref{lem:degenerate}}(n,\phimin,\eps)$.  Let us also set $\eps_2=\errp_{\ref{lem:eqphi}}(n,\eps_3,\phimin,\wmin)$. Let $\eps'_2$ be a parameter chosen large enough such that $\eps'_2 \ge \frac{\eps_2}{\minl}+\errp_{\ref{lem:tensoralg}}$, and $\eps \le \errp_{\ref{lem:fullrank}}(n,\eps'_2,\eps_s,\wmin,\phimin)$. 

In the non-degenerate case, suppose there is a partition such that $\sigma_2(\Ma), \sigma_2(\Mb),\sigma_2(\Mc) \ge \eps'_2$, Lemma~\ref{lem:tensoralg} guarantees that $\sigma_2(\Maa),\sigma_2(\Mbb),\sigma_2(\Mcc) \ge \eps_2$. In this case, Lemma~\ref{lem:fullrank} ensures that one of the $O(\log n)$ rounds of the algorithm succeeds and we get the parameters $w_1, w_2, \phi_1, \phi_2$ within an error $\eps$ using Lemma~\ref{lem:fullrank}. Further, Lemma~\ref{lem:fullrank} will also find the top $r,s$ elements of $\pi_1$ and $\pi_2$ respectively where $r=\log_{1/\phi_1}\left(\frac{n^{10}}{\gamma^2 \wmin}\right)$ and $s=\log_{1/\phi_2}\left(\frac{n^{10}}{\gamma^2 \wmin}\right)$. We will then appeal to Lemma~\ref{lem:recover-rest} (along with Lemma~\ref{lem:simulate-noisy-oracle}) to recover the entire rankings $\pi_1, \pi_2$.

Lemma~\ref{lem:simulate-noisy-oracle} implies that the total variation distance between distributions of $\MMix$ and $\cMMix$ is at most $\frac{\epsilon n^2}{\phimin}$. Since $\epsilon \le \eps_0$, this variation distance is at most $\frac{\phimin}{10n^3 S(\wmin/2, \sqrt{\phimax})}$. Here $S(\wmin/2,\sqrt{\phimax})$ is an upper bound on the number of samples needed by {\sc Recover-Rest} to work given true parameters (and not estimations). This allows to analyze the performance of {\sc Recover-Rest} assuming that we get perfect estimates of the parameters $(w_1, w_2,\phi_1, \phi_2)$ since samples used by {\sc Recover-Rest} which are drawn from $\MMix$ will be indistinguishable from samples from $\cMMix$ except with probability $\frac{1}{10n^3}$. This followed by the guarantee of Lemma~\ref{lem:recover-rest} will recover the complete rankings $\pi_1$ and $\pi_2$. 

In the degenerate case, due to our choice of $\eps_2$, Lemma~\ref{lem:eqphi} shows that $\cphi$ is $\eps_3$ close to both $\phi_1$ and $\phi_2$. Using Lemma~\ref{lem:degenerate} we then conclude that step 4 of Algorithm~\ref{alg:main} recovers $\pi_1,\pi_2$ and the parameters $w_1,w_2$ within error $\eps$.

}
\section{Experiments}
\label{sec:experiments}
\vspace{-3mm}
\noindent\textbf{Goal.} The main contribution of our paper is devising an algorithm that \emph{provably} learns any mixture of two Mallows models. But could it be the case that the previously existing heuristics, even though they are unproven, still perform well in practice? 
We compare our algorithm to existing techniques, to see if, and under what settings our algorithm outperforms them.

\noindent\textbf{Baseline.} We compare our algorithm to the popular EM based algorithm of~\cite{Meila07}, seeing as EM based heuristics are the most popular way to learn a mixture of Mallows models. The EM algorithm starts with a random guess for the two central permutations.
At iteration $t$, EM maintains a guess as to the two Mallows models that generated the sample. First (expectation step) the algorithm assigns a weight to each ranking in our sample, where the weight of a ranking reflects the probability that it was generated from the first or the second of the current Mallows models. Then (the maximization step) the algorithm updates its guess of the models' parameters based on a local search -- minimizing the average distance to the weighted rankings in our sample. We comment that we implemented only the version of our algorithm that handles non-degenerate cases~(more interesting case). In our experiment the two Mallows models had parameters $\phi_1 \neq \phi_2$, so our setting was never degenerate.

\noindent\textbf{Setting.} We ran both the algorithms on synthetic data comprising of rankings of size $n=10$. The weights were sampled u.a.r from $[0,1]$, and the $\phi$-parameters were sampled by sampling $\ln(1/\phi)$ u.a.r from $[0,5]$. For $d$ ranging from $0$ to $\binom n 2$ we generated the two central rankings $\pi_1$ and $\pi_2$ to be within distance $d$ in the following manner. $\pi_1$ was always fixed as $(1,2,3,\ldots,10)$. To describe $\pi_2$, observe that it suffices to note the number of inversion between $1$ and elements $2,3,...,10$; the number of inversions between $2$ and $3,4,...,10$ and so on. So we picked u.a.r a non-negative integral solution to $x_1 + \ldots + x_n = d$ which yields a feasible permutation and let $\pi_2$ be the permutation that it details.
Using these models' parameters, we generated $N=5\cdot 10^{6}$ random samples.

\noindent\textbf{Evaluation Metric and Results.} For each value of $d$, we ran both algorithms $20$ times and counted the fraction of times on which they returned the true rankings that generated the sample.
The results of the experiment for rankings of size $n=10$ are in Table~\ref{tab:exp_results}. Clearly, the closer the two centrals rankings are to one another, the worst EM performs. On the other hand, our algorithm is able to recover the true rankings even at very close distances. As the rankings get slightly farther, our algorithm recovers the true rankings all the time. We comment that similar performance was observed for other values of $n$ as well. We also comment that our algorithm's runtime was reasonable (less than $10$ minutes on a $8$-cores Intel x86\_ 64 computer). Surprisingly, our implementation of the EM algorithm typically took much longer to run --- due to the fact that it simply did not converge.
\vspace{-3mm}
\begin{table}[h]
\centering\small
\begin{tabular}{|c|c|c|}
\hline
distance between rankings & success rate of EM & success rate of our algorithm\\
\hline
0 & 0\% & 10\%\\
\hline
2 & 0\% & 10\% \\
\hline
4 & 0\% & 40\% \\
\hline
8 & 10\% & 70\% \\
\hline
16 & 30\% & 60 \%\\
\hline
24 & 30\% & 100\%\\
\hline
30 & 60\% & 100\% \\
\hline
35 & 60\% &100\% \\
\hline
40 & 80\% &100\%\\
\hline
45 & 60\% & 100\%\\
\hline
\end{tabular}
\caption{\label{tab:exp_results} Results of our experiment.}
\end{table}


%

{\small
\bibliographystyle{unsrt}
\bibliography{mallows}

\begin{thebibliography}{10}

\bibitem{Mallow57}
C.~L. Mallows.
\newblock Non-null ranking models i.
\newblock {\em Biometrika}, 44(1-2), 1957.

\bibitem{Marden95}
John~I. Marden.
\newblock {\em Analyzing and Modeling Rank Data}.
\newblock Chapman \& Hall, 1995.

\bibitem{LebanonL02}
Guy Lebanon and John Lafferty.
\newblock Cranking: Combining rankings using conditional probability models on
  permutations.
\newblock In {\em ICML}, 2002.

\bibitem{Murphy03}
Thomas~Brendan Murphy and Donal Martin.
\newblock Mixtures of distance-based models for ranking data.
\newblock {\em Computational Statistics and Data Analysis}, 41, 2003.

\bibitem{Meila07}
Marina Meila, Kapil Phadnis, Arthur Patterson, and Jeff Bilmes.
\newblock Consensus ranking under the exponential model.
\newblock Technical report, UAI, 2007.

\bibitem{Busse07}
Ludwig~M. Busse, Peter Orbanz, and Joachim~M. Buhmann.
\newblock Cluster analysis of heterogeneous rank data.
\newblock In {\em ICML}, ICML '07, 2007.

\bibitem{MandhaniM09}
Bhushan Mandhani and Marina Meila.
\newblock Tractable search for learning exponential models of rankings.
\newblock {\em Journal of Machine Learning Research - Proceedings Track}, 5,
  2009.

\bibitem{LuB11}
Tyler Lu and Craig Boutilier.
\newblock Learning mallows models with pairwise preferences.
\newblock In {\em ICML}, 2011.

\bibitem{Oren}
Joel Oren, Yuval Filmus, and Craig Boutilier.
\newblock Efficient vote elicitation under candidate uncertainty.
\newblock {\em JCAI}, 2013.

\bibitem{Condorcet}
H~Peyton Young.
\newblock Condorcet's theory of voting.
\newblock {\em The American Political Science Review}, 1988.

\bibitem{dia88}
Persi Diaconis.
\newblock {\em Group representations in probability and statistics}.
\newblock Institute of Mathematical Statistics, 1988.

\bibitem{BravermanM09}
Mark Braverman and Elchanan Mossel.
\newblock Sorting from noisy information.
\newblock {\em CoRR}, abs/0910.1191, 2009.

\bibitem{MeilaC10}
Marina Meila and Harr Chen.
\newblock Dirichlet process mixtures of generalized mallows models.
\newblock In {\em UAI}, 2010.

\bibitem{Dasgupta99}
Sanjoy Dasgupta.
\newblock Learning mixtures of gaussians.
\newblock In {\em FOCS}, 1999.

\bibitem{Arora01}
Sanjeev Arora and Ravi Kannan.
\newblock Learning mixtures of arbitrary gaussians.
\newblock In {\em STOC}, 2001.

\bibitem{AchlioptasM05}
Dimitris Achlioptas and Frank McSherry.
\newblock On spectral learning of mixtures of distributions.
\newblock In {\em COLT}, 2005.

\bibitem{KMV10}
Adam~Tauman Kalai, Ankur Moitra, and Gregory Valiant.
\newblock Efficiently learning mixtures of two gaussians.
\newblock In {\em STOC}, STOC '10, 2010.

\bibitem{MV10}
A.~Moitra and G.~Valiant.
\newblock Settling the polynomial learnability of mixtures of gaussians.
\newblock In {\em Foundations of Computer Science (FOCS), 2010 51st Annual IEEE
  Symposium on}, 2010.

\bibitem{AGHKT}
Anima Anandkumar, Rong Ge, Daniel Hsu, Sham~M. Kakade, and Matus Telgarsky.
\newblock Tensor decompositions for learning latent variable models.
\newblock {\em CoRR}, abs/1210.7559, 2012.

\bibitem{AnandkumarHK12}
Animashree Anandkumar, Daniel Hsu, and Sham~M. Kakade.
\newblock A method of moments for mixture models and hidden markov models.
\newblock In {\em COLT}, 2012.

\bibitem{Hsu13}
Daniel Hsu and Sham~M. Kakade.
\newblock Learning mixtures of spherical gaussians: moment methods and spectral
  decompositions.
\newblock In {\em ITCS}, ITCS '13, 2013.

\bibitem{VempalaW04}
Santosh Vempala and Grant Wang.
\newblock A spectral algorithm for learning mixture models.
\newblock {\em J. Comput. Syst. Sci.}, 68(4), 2004.

\bibitem{BCMV}
Aditya Bhaskara, Moses Charikar, Ankur Moitra, and Aravindan Vijayaraghavan.
\newblock Smoothed analysis of tensor decompositions.
\newblock In {\em Symposium on the Theory of Computing (STOC)}, 2014.

\bibitem{kendall38}
M.~G. Kendall.
\newblock {\em Biometrika}, 30(1/2), 1938.

\bibitem{BCV}
Aditya Bhaskara, Moses Charikar, and Aravindan Vijayaraghavan.
\newblock Uniqueness of tensor decompositions with applications to polynomial
  identifiability.
\newblock {\em CoRR}, abs/1304.8087, 2013.

\bibitem{GVX}
Naveen Goyal, Santosh Vempala, and Ying Xiao.
\newblock Fourier pca.
\newblock In {\em Symposium on the Theory of Computing (STOC)}, 2014.

\bibitem{Stanley02}
R.P. Stanley.
\newblock {\em Enumerative Combinatorics}.
\newblock Number v. 1 in Cambridge studies in advanced mathematics. Cambridge
  University Press, 2002.

\end{thebibliography}
}
\newpage
\section{Acknowledgements}
We would like to thank Ariel Procaccia for bringing to our attention various references to Mallows model
in social choice theory.
\section{Properties of the Mallows Model}
\label{apx_sec:properties_of_Mallow}

In this section, we outline some of the properties of the Mallows model. Some of these properties were already shown before (see~\cite{Stanley02}), but we add them in this appendix for completion. Our algorithm and its analysis rely heavily on these properties.

\noindent\textbf{Notation.}
Given a Mallows model $\calM_n\left(\phi,\pi_0\right)$ we denote $\Zn = \frac {1-\phi^n} {1-\phi}$, and we denote $\Znn$ as the sum all weights of all permutations: $\Znn = \sum_{\pi} \phi^{d_{\sf kt}(\pi,\pi_0)}$. Given an element $e$, we abuse notation and denote by $\pi\setminus e$ the permutation we get by omitting the element $e$ (projecting $\pi$ over all elements but $e$). The notation $\pi=(e,\sigma)$ denotes a permutation whose first element is $e$ and elements $2$ through $n$ are as given by the permutation over $n-1$ elements $\sigma$.

The first property shows that for any element $e$, conditioning on $e$ being ranked at the first position results in a reduced Mallows model. 
\begin{lemma}
\label{app_lem:conditioned-on-x}
Let $\Mal{\phi}{\pi}$ be a Mallows model over $[n]$. For any $i$, the conditional distribution~(given that $i$ is ranked at position $1$) of rankings over $[n] \setminus \{i\}$, i.e. $\Prob{\pi | \pi(i)=1}$ is the same as that of $\Mal{\phi}{\pi \setminus {i}}$.
\end{lemma}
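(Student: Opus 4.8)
The plan is to reduce the whole statement to a single additive decomposition of the Kendall--Tau distance, after which the lemma follows by cancelling constants. Keep the central ranking as $\pi$ and write any permutation that ranks $\el{i}$ first as $(\el{i}, \sigma)$, where $\sigma$ ranges over permutations of the remaining $n-1$ elements $[n]\setminus\{i\}$. The key claim I would establish first is that, for every such $\sigma$,
\[
\dkt{(\el{i},\sigma),\pi} = \bigl(\posgen_\pi(\el{i}) - 1\bigr) + \dkt{\sigma,\, \pi\setminus \el{i}}.
\]

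To prove this decomposition, I would split the pairwise inversions counted by $\dkt{\cdot}$ into those involving $\el{i}$ and those not involving $\el{i}$. For a pair $\{\el{j},\el{k}\}$ with $j,k\neq i$, deleting $\el{i}$ from both $(\el{i},\sigma)$ and $\pi$ leaves the relative order of $\el{j}$ and $\el{k}$ unchanged in each; hence the inversions among such pairs are counted exactly by $\dkt{\sigma,\pi\setminus\el{i}}$. For a pair $\{\el{i},\el{j}\}$: since $\el{i}$ is ranked first in $(\el{i},\sigma)$, it precedes every $\el{j}$ there, so this pair is inverted relative to $\pi$ precisely when $\el{j}$ precedes $\el{i}$ in $\pi$. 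The number of such $\el{j}$ is exactly $\posgen_\pi(\el{i}) - 1$, and, crucially, this count is independent of $\sigma$.

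With the decomposition in hand, the lemma is immediate from Definition~\ref{def:mal}. For the fixed first element $\el{i}$, both the factor $\phi^{\posgen_\pi(\el{i})-1}$ and the normalizer $\Znn(\phi)$ are constants that do not depend on $\sigma$, so
\[
\Prob{\sigma \mid \el{i}\text{ ranked first}} = \frac{\phi^{\dkt{(\el{i},\sigma),\pi}}/\Znn(\phi)}{\sum_{\sigma'}\phi^{\dkt{(\el{i},\sigma'),\pi}}/\Znn(\phi)} = \frac{\phi^{\dkt{\sigma,\,\pi\setminus\el{i}}}}{\sum_{\sigma'}\phi^{\dkt{\sigma',\,\pi\setminus\el{i}}}},
\]
where the common factors $\phi^{\posgen_\pi(\el{i})-1}$ and $\Znn(\phi)$ cancel between numerator and denominator. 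The right-hand side is exactly the probability assigned to $\sigma$ by $\Mal{\phi}{\pi\setminus\el{i}}$, since its denominator is the partition function $\Zpart{[n-1]}(\phi)$ of the reduced model. This yields the claimed equality of distributions.

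I do not anticipate a serious obstacle: the only step needing care is the inversion bookkeeping in the decomposition, namely verifying that placing $\el{i}$ first costs a fixed number $\posgen_\pi(\el{i})-1$ of inversions regardless of $\sigma$, and that the residual inversions coincide with $\dkt{\sigma,\pi\setminus\el{i}}$. Once that identity is secured, everything reduces to the cancellation of $\sigma$-independent constants.
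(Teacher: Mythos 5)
Your proposal is correct and follows essentially the same route as the paper: both rest on the decomposition $\dkt{(\el{i},\sigma),\pi} = (\posgen_\pi(\el{i})-1) + \dkt{\sigma,\pi\setminus\el{i}}$, obtained by splitting inversions into those involving $\el{i}$ and the rest, after which the $\sigma$-independent factor cancels in the conditional probability. The only cosmetic difference is that the paper packages this inside its inductive computation of $\Znn(\phi)=\prod_i Z_i(\phi)$ and reads off the lemma from the resulting factorization of $\Prob{\pi}$, whereas you cancel the constants directly without needing the explicit partition-function formula.
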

The above lemma can be extended to conditioning on prefixes as follows. 
\begin{lemma}
\label{app_lem:ignore-prefix}
Let $\Mal{\phi}{\pi}$ be a Mallows model over $[n]$. For any prefix $I$ of $\pi$, the marginal distribution of rankings over $[n] \setminus I$ is the same as that of $\Mal{\phi}{\pi \setminus {I}}$.
\end{lemma}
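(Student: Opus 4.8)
The plan is to prove the lemma by peeling the elements of the prefix $I$ off one at a time, starting from the top of the central ranking, so that everything reduces to a single key computation for removing one element. The heart of the argument is the following single-element claim: if $m$ denotes the first element of $\pi$ (i.e. $\posgen_\pi(m)=1$), then the marginal distribution of $\Mal{\phi}{\pi}$ over $[n]\setminus\{m\}$ — obtained by recording only the relative order of the remaining elements and forgetting where $m$ landed — is exactly $\Mal{\phi}{\pi\setminus m}$. The general prefix case then follows by induction on $|I|$.

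To establish the single-element claim I would first record how the Kendall--Tau distance decomposes when $m$ is the top of the center. Fix any full ranking $\sigma'$ of $[n]$, let $\sigma=\sigma'\setminus m$ be its restriction to $[n]\setminus\{m\}$, and let $p=\posgen_{\sigma'}(m)$. Since $m$ precedes every other element in $\pi$, a pair not involving $m$ is inverted in $\sigma'$ relative to $\pi$ exactly when it is inverted in $\sigma$ relative to $\pi\setminus m$, while a pair involving $m$ is inverted precisely for each of the $p-1$ elements lying before $m$ in $\sigma'$; hence
$$\dkt{\sigma',\pi} = \dkt{\sigma,\pi\setminus m} + (p-1).$$
With this in hand, the marginal weight of a relative order $\sigma$ is obtained by summing over the $n$ possible positions of $m$:
$$\sum_{p=1}^{n}\phi^{\dkt{\sigma,\pi\setminus m}+(p-1)} = \phi^{\dkt{\sigma,\pi\setminus m}}\sum_{p=1}^{n}\phi^{p-1} = \phi^{\dkt{\sigma,\pi\setminus m}}\cdot \Zn(\phi).$$
Because the factor $\Zn(\phi)$ does not depend on $\sigma$, and because the full partition function factors as $\Znn(\phi)=\Zn(\phi)\,\Zpart{[n-1]}(\phi)$, dividing by $\Znn(\phi)$ shows that the marginal probability of $\sigma$ equals $\phi^{\dkt{\sigma,\pi\setminus m}}/\Zpart{[n-1]}(\phi)$, which is precisely the probability assigned by $\Mal{\phi}{\pi\setminus m}$. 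This proves the claim (and parallels Lemma~\ref{app_lem:conditioned-on-x}).

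Finally I would run the induction. When $I$ is a prefix of $\pi$ its first element is $m=\pi(1)$, so marginalizing over $[n]\setminus\{m\}$ gives $\Mal{\phi}{\pi\setminus m}$ by the claim. Now $I\setminus\{m\}$ is a prefix of the reduced center $\pi\setminus m$, of size one smaller, and marginalizing out all of $I$ is the same as first marginalizing out $m$ and then marginalizing out the remainder; applying the inductive hypothesis to $\Mal{\phi}{\pi\setminus m}$ yields $\Mal{\phi}{(\pi\setminus m)\setminus(I\setminus m)}=\Mal{\phi}{\pi\setminus I}$, closing the induction. Verifying that marginalization composes and that the prefix structure is preserved at each step is routine; the one point genuinely requiring care is the distance decomposition, which relies crucially on $m$ being the \emph{top} of the center so that every cross-pair is oriented the same way in $\pi$. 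This is exactly why the statement demands that $I$ be a \emph{prefix} and not an arbitrary subset.
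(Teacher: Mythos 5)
Your proposal is correct and follows essentially the same route as the paper: the same decomposition $\dkt{\sigma',\pi}=\dkt{\sigma,\pi\setminus m}+(p-1)$ for the top element $m$ of the center, the same factorization of the weight into $\phi^{p-1}/\Zn$ times the reduced model's weight, and the same peel-off induction over the prefix. You are merely more explicit than the paper about summing over the $n$ positions of $m$ and about why the prefix hypothesis is needed, which is a welcome clarification rather than a deviation.
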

The following lemma describe a useful trick that allows us to simulate the addition of another element that is added to the start of the central ranking $\pi$, using the knowledge of $\phi$. This will be particularly useful to simplify certain degenerate cases.

\begin{lemma}
\label{app_lem:simulate-new-element}
Let $\Mal{\phi}{\pi}$ be a Mallows model over $[n]$. Given oracle access to $\Mal{\phi}{\pi}$ and a new element $e_0\notin [n]$ we can efficiently simulate an oracle access to $\Mal{\phi}{(e_0, \pi)}$.
\end{lemma}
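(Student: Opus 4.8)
The plan is to produce a sample from $\Mal{\phi}{(e_0,\pi)}$ by drawing a single sample $\sigma\sim\Mal{\phi}{\pi}$ from the given oracle and then inserting the new element $e_0$ into $\sigma$ at a randomly chosen position, where the position is drawn from an explicit distribution depending only on $\phi$ and $n$. Concretely, I would sample an index $j\in\{1,2,\dots,n+1\}$ with probability $\phi^{j-1}/\Zpart{n+1}(\phi)$ --- a truncated geometric distribution whose normalizer is known exactly and which is therefore easy to sample from given $\phi$ --- and then insert $e_0$ at position $j$, shifting the elements currently occupying positions $\ge j$ down by one. The claim is that the resulting permutation $\tau$ over $\{e_0\}\cup[n]$ is distributed exactly as $\Mal{\phi}{(e_0,\pi)}$, so that the whole simulation costs one oracle call plus sampling from a known $(n+1)$-point distribution.

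The crux of the argument is a clean decomposition of the Kendall-Tau distance. Fix any target permutation $\tau$ over the $n+1$ elements, let $j$ be the position of $e_0$ in $\tau$, and let $\sigma=\tau\setminus e_0$ be the induced ranking over $[n]$. In the central ranking $(e_0,\pi)$ the element $e_0$ precedes every other element, so the inverted pairs that involve $e_0$ are precisely the $j-1$ elements appearing before $e_0$ in $\tau$; meanwhile every pair not involving $e_0$ is ordered in $\tau$ exactly as the corresponding pair is ordered in $\sigma$ relative to $\pi$. Hence
\[
\dkt{\tau,(e_0,\pi)} = \dkt{\sigma,\pi} + (j-1).
\]
I expect verifying this identity to be the main (though modest) obstacle, since it requires carefully separating the inversions incident to $e_0$ from the rest and arguing that deleting $e_0$ from both $\tau$ and $(e_0,\pi)$ preserves the relative order of all remaining pairs.

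Given the distance decomposition, the rest is bookkeeping with the partition function. Using the product form $\Zpart{[n+1]}(\phi)=\Zpart{[n]}(\phi)\cdot\Zpart{n+1}(\phi)$ together with $\Zpart{n+1}(\phi)=\sum_{k=0}^{n}\phi^{k}$, the probability that $\Mal{\phi}{(e_0,\pi)}$ returns $\tau$, namely $\phi^{\dkt{\tau,(e_0,\pi)}}/\Zpart{[n+1]}(\phi)$, factors as
\[
\frac{\phi^{\dkt{\sigma,\pi}+(j-1)}}{\Zpart{[n+1]}(\phi)}
= \frac{\phi^{\dkt{\sigma,\pi}}}{\Zpart{[n]}(\phi)}\cdot\frac{\phi^{j-1}}{\Zpart{n+1}(\phi)}.
\]
The first factor is exactly $\Prob{\sigma}$ under $\Mal{\phi}{\pi}$, and the second is exactly the probability that my insertion rule places $e_0$ at position $j$. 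Since the pair $(\sigma,j)$ uniquely determines $\tau$ and the two sampling steps are independent, the joint law of the simulated output matches the target law on every $\tau$; no summation is needed. This establishes that the simulated distribution equals $\Mal{\phi}{(e_0,\pi)}$, completing the proof, and efficiency is immediate from the description of the procedure.
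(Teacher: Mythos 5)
Your proposal is correct and follows essentially the same route as the paper: both decompose $\dkt{\tau,(e_0,\pi)}$ as $\dkt{\sigma,\pi}+(j-1)$ using the observation that the inversions incident to the front element of the central ranking are counted by its displacement, and both then use the factorization $\Zpart{[n+1]}(\phi)=\Zpart{[n]}(\phi)\cdot\Zpart{n+1}(\phi)$ to split the probability into an independent position draw for $e_0$ and a sample from the original oracle. No gaps.
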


\subsection{Proofs of Lemmas~\ref{app_lem:conditioned-on-x}, \ref{app_lem:ignore-prefix}, \ref{app_lem:simulate-new-element}}

\noindent\textbf{Observation.} All of the properties we state and prove in this appendix are based on the following important observation. Given two permutations $\pi$ and $\pi'$, denote the first element in $\pi$ as $e_1$. Then we have that
\[ \# \textrm{pairs }(e_1, \el{i})_{i\neq 1} \textrm{ that } \pi,\pi' \textrm{ disagree on } = \left(\textrm{position of }e_1 \textrm{ in } \pi'\right) -1 = pos_{\pi'}(e_1)-1\]
The same holds for the last element, denoted $e_n$, only using the distance between $pos_{\pi'}(e_n)$ and the $n$th-position (i.e., $n-pos_{\pi'}(e_n)$).

We begin by characterizing $\Znn$.
\begin{property}
For every $n$ and any $\pi_0\in S_n$ we have that $\Znn = \sum_{\pi} \phi^{d_{\sf kt}(\pi,\pi_0)} = \prod_{i=1}^n Z_i = \prod_i\left(\sum_{j=0}^{i=1}\phi^j\right)$.
\end{property}
\begin{proof}
By induction on $n$. For $n=1$ there's a single permutation over the set $\{1\}$ and $Z_1 = 1$. For any $n>1$, given a permutation over $n$ elements $\pi\in S_n$, denote its first element as $e_\pi$. Based on our observation, we have that
\[\dkt{\pi,\pi_0} = \#\textrm{swaps involving }e_\pi + \dkt{\pi\setminus e_\pi,  \pi_0 \setminus e_\pi} = (pos_{\pi_0}(e_\pi)-1)+ \dkt{\pi\setminus e_\pi, \pi_0\setminus e_\pi}\]
And so we have
\begin{eqnarray*}
& \Znn = \sum_{\pi} \phi^{d_{\sf kt}(\pi,\pi_0)} & = \sum_{j=1}^n \sum_{\{\pi: e_\pi \textrm{ is the $j$th elements in } \pi_0\}} \phi^{d_{\sf kt}(\pi,\pi_0)} \cr
&& = \sum_{j=1}^n \sum_{\{\pi: e_\pi \textrm{ is the $j$th elements in } \pi_0\}}\phi^{j-1}\phi^{d_{\sf kt}(\pi\setminus e_\pi,\pi_0 \setminus e_\pi)} \cr
&& = \sum_{j=0}^{n-1} \phi^{j}\sum_{\pi\in S_{n-1}}\phi^{d_{\sf kt}(\pi,\pi_0^{-j})} \cr
&& \stackrel {\rm induction} = \sum_{j=0}^{n-1} \phi^{j}\left(\prod_{i=1}^{n-1}Z_i\right) =  \left(\prod_{i=1}^{n-1}Z_i\right) Z_n = \prod_{i=1}^{n}Z_i  \cr
\end{eqnarray*}
where $\pi_0^{-j}$ denotes the permutation we get by omitting the $j$th element from $\pi_0$.
\end{proof}

Observe that the proof essentially shows how to generate a random ranking from a Mallows model. What we in fact showed is that the given a permutation $\pi = (e, \pi\setminus e)$ we have that
\[ \Pr[\pi] = \tfrac 1 {\Znn} \phi^{(pos_{\pi_0}(e)-1) + \dkt{\pi\setminus e, \pi_0\setminus e}} = \frac {\phi^{(pos_{\pi_0}(e)-1)}} {Z_n} \cdot \frac {\phi^{\dkt{\pi\setminus e, \pi_0\setminus e}}} {\Zpart{1:(n-1)}}\]
And so, to generate a random permutation using $\pi_0$: place the $j$th elements of $\pi_0$ at the first position w.p. $\propto \phi^{j-1}$, and recourse over the truncated permutation. $\pi_0\setminus e_1$ to find the rest of the permutation (positions $1,2,\ldots,j-1,j+1,\ldots,n$). This proves Lemma~\ref{app_lem:conditioned-on-x}. 

Note the symmetry between $\pi$ and $\pi_0$ in defining the weight of $\pi$. Therefore, denoting $e_1$ as the element $\pi_0$ ranks at the first position, we have that
\[\dkt{\pi,\pi_0} = \#\textrm{swaps involving }e_1 + \dkt{\pi\setminus e_1,  \pi_0 \setminus e_1} = (i-1)+ \dkt{\pi\setminus e_1, \pi_0\setminus e_1}\]
and so, the probability of permutation $\pi$ in which $e_1$ is ranked at position $j$ and the rest of the permutation is as a given permutation $\sigma$ over $n-1$ elements is:
\[ \Pr[\pi] = \tfrac 1 {\Znn} \phi^{(j-1) + \dkt{\pi\setminus e_1, \pi_0\setminus e_1}} = \frac {\phi^{(j-1)}} {Z_n} \cdot \frac {\phi^{\dkt{\pi\setminus e_1, \pi_0\setminus e_1}}} {\Zpart{1:(n-1)}}\]
So, an alternative way to generate a random permutation using $\pi_0$ is to rank element $e_1$ at position $j$ w.p. $\propto \phi^{j-1}$ and then to recourse over the truncated permutation $\pi_0\setminus e_1$. Repeating this argument for each element in a given prefix $I$ of $\pi_0$ proves Lemma~\ref{app_lem:ignore-prefix}. 

Observe that the algorithms the generate a permutation for a given Mallows model also allow us to simulate a random sample from a Mallows model over $n+1$ elements. That is, given $\pi_0$, we can introduce a new element $e_0$ and denote $\pi_0' = (e_0 , \pi_0)$. Now, to sample from a Mallows model centered at $\pi'_0$ all we need is to pick the position of $e_0$ (moving it to position $j$ w.p. $\phi^{j-1}/\Zpart{n+1}$), then sampling from original Mallows model. This proves Lemma~\ref{app_lem:simulate-new-element}. 

\subsection{Total Variation Distance}
\label{apx_subsec:TV}

In this subsection, our goal is to prove Lemma~\ref{lem:simulate-noisy-oracle}. Namely, we aim to show that given $\phi$, for every $\delta > 0$ we can pick any $\hat\phi$ sufficiently close to $\phi$, and have that the total variation distance between the two models $\Mal{\phi}{\pi_0}$ and $\Mal{\hat\phi}{\pi_0}$ is at most $\delta$.
\begin{proof}[Proof of Lemma~\ref{lem:simulate-noisy-oracle}]
First, denote $\phi = e^{-\beta}$ and $\hat\phi = e^{-\hat\beta}$. And so it holds that  
\[ |\beta-\hat\beta| = |\ln(1/\phi)-\ln(1/\hat\phi)| = |\ln(\hat\phi/\phi)| \leq |\ln(1+\tfrac {|\phi-\hat\phi|}{\phi_{\min}})| \leq \tfrac {|\phi-\hat\phi|}{\phi_{\min}}\] assuming some global lower bound $\phi_{\min}$ on $\phi,\hat\phi$.

Observe that for every $\pi$ we have that 
\[\phi^{\dkt{\pi,\pi_0}} = \exp(-\beta \dkt{\pi,\pi_0}) = \exp(-\hat\beta \dkt{\pi,\pi_0})\exp(-(\beta-\hat\beta) \dkt{\pi,\pi_0}) \leq e^{\tfrac 1 2 n^2 |\beta-\hat\beta|} \hat\phi^{\dkt{\pi,\pi_0}}  
\]
Summing over all permutation (and replacing the role of $\phi$ and $\hat\phi$) we have also that $\sum_\pi \phi^{\dkt{\pi,\pi_0}} \geq e^{-\tfrac 1 2 n^2 |\beta-\hat\beta|} \sum_\pi \hat\phi^{\dkt{\pi,\pi_0}}$.
Let $p_\pi$ (resp. $\hat p_\pi$) denote the probability of sampling the permutation $\pi$ from a Mallows model $\Mal{\phi}{\pi_0}$ (resp. $\Mal{\hat\phi}{\pi_0}$). It follows that for every $\pi$ we have  
\[ p_\pi = \frac {\phi^{\dkt{\pi,\pi_0}}} {\sum_{\pi'}\phi^{\dkt{\pi',\pi_0}}} \leq e^{n^2 |\beta-\hat\beta|} \frac {\hat\phi^{\dkt{\pi,\pi_0}}} {\sum_{\pi'}\hat\phi^{\dkt{\pi',\pi_0}}} = e^{n^2 |\beta-\hat\beta|} \hat p_\pi \]
and similarly, $\hat p_\pi \leq e^{n^2 |\beta-\hat\beta|} p_\pi$. 

Therefore, assuming that $|\beta-\hat\beta|$ is sufficiently small, and using the fact that $|1-e^x| \leq 2|x|$ for $x\in (-\tfrac 1 2,\tfrac 1 2)$, then we have
\begin{eqnarray*}
& \|\Mal{\phi}{\pi} - \Mal{\hat\phi}{\pi} \|_{\textrm{TV}} & =\frac 1 2 \sum_\pi |p_\pi - \hat p_\pi| \cr
&& =\frac 1 2 \sum_\pi p_\pi \left|1-\frac {\hat p_\pi}{p_\pi}\right| \leq \frac 1 2\sum_\pi 2p_\pi  n^2|\beta-\hat\beta| = \frac{n^2}{\phi_{\min}}|\phi-\hat\phi|
\end{eqnarray*}
It follows that in order to bound the total variation distance by $\delta$ we need to have $\phi$ and $\hat\phi$ close up to a factor of $\delta \cdot \phi_{\min}/n^2$.
\end{proof}

\section{Algorithm and Subroutines}\label{app:algorithm}
We now describe the algorithm and its subroutines in full detail. These will be followed by the analysis of the algorithms and proof of correctness in the following sections. 
Broadly speaking, our algorithm (Algorithm~\ref{alg:main}) has two main components.  

\begin{fragment*}[t]
\caption{\label{app_alg:main}{\sc Learn Mixtures of Two Mallows models}, \textbf{Input: } a set $\cal{S}$ of $N$ samples from $\MMix$, Accuracy parameters $\epsilon, \epsilon_2$.  \vspace*{0.01in}
}
\begin{enumerate} \itemsep 0pt
\small 
\item Set threshold $\epsilon_2 = f_2(\epsilon)$.
\item Let $\cP$ be the empirical estimate of $P$ on samples in $\cal{S}$. 
\item Run $O(\log n)$ times
\begin{enumerate}
\item Partition $[n]$ randomly into $S_a$, $S_b$ and $S_c$.
\item Set $T^{(abc)} = \big ( \cP_{ijk} \big )_{i \in S_a, j \in S_b, k \in S_c}$.
\item Run \tensors as in Theorem 4.2 of(\cite{BCV}) to get a decomposition of $T_{abc} = \ua \otimes \ub \otimes \uc + \va \otimes \vb \otimes \vc$.
\item Let $\Maa = (\ua ; \va)$, $\Mbb = (\ub ; \vb)$, $\Mcc = (\uc ; \vc)$.
\item If $\text{min} (\sigma_2(\Maa), \sigma_2(\Mbb), \sigma_2(\Mcc)) \geq \epsilon_2$,
\begin{enumerate}
\item $(\cw_1, \cw_2, \cphi_1, \cphi_2, {\pi_1}^{\prime}, {\pi_2}^{\prime}) \leftarrow $ {\sc Infer-Top-k($\cP$, $\Maa$, $\Mbb$, $\Mcc$)}.
\item $(\cpi_1, \cpi_2) \leftarrow $ {\sc Recover-Rest(${\cal{S}}, {\cw}_1, {\cw}_2, {\cphi}_1, {\cphi}_2, {{\pi_1}^{\prime}}, {{\pi_2}^{\prime}}$, $\epsilon_2/\sqrt{2n}$)}.\\
Return {\sc Success} and output $(\cw_1, \cw_2, \cphi_1, \cphi_2, \cpi_1, \cpi_2)$.
\end{enumerate}   
\item Else if $\sigma_2(\Maa) < \epsilon_2$ and $\sigma_2(\Mbb) \geq \epsilon_2$, and $\sigma_2(\Mcc) \geq \epsilon_2$~(or other symmetric cases), \\
let $\pa = \left(\cP_i\right)_{i \in S_a}$.\\ $\cphi \leftarrow $ {\sc Estimate-Phi($\pa$)}. 
\item Else $\cphi = \text{median}\left({\text{\sc Estimate-Phi}(\pa)}, {\text{\sc Estimate-Phi}(\pb)}, {\text{\sc Estimate-Phi}(\pc)}\right)$.
\item Else,  (at least two of the three matrices $\Maa,\Mbb,\Mcc$ are essentially rank-1)\\
let $\tau\in\{a,b,c\}$ denote a matrix $M'_\tau$ s.t. $\sigma_2(M'_\tau)<\eps_2$, and let $p^{(\tau)} = (\cP_i)_{i\in S_\tau}$.\\
$\hat\phi\leftarrow \textsc{Estimate-Phi}(p^{(\tau)})$.
\end{enumerate}
\item Run {\sc Handle-Degenerate-Case($\cP$, $\hat{\phi}, \epsilon$)}.
\end{enumerate}
\end{fragment*}

\paragraph{Retrieving the Top Elements and Parameters.}
In the first part we use spectral methods to recover elements which have a good chance of appearing in the first position. The algorithm tries $O(\log n)$ different random partitions $S_a,S_b,S_c$, and constructs the tensor $T^{(abc)}$ from the samples as described in step 3(b). We then try to find a rank-$2$ decomposition of the tensor using a black-box algorithm for decomposing non-symmetric tensors. While we use the algorithm of \cite{BCV} here, we can use the more practically efficient algorithm of Jennrich~\cite{BCMV}, or other power-iteration methods that are suitably modified to handle non-symmetric tensors. 

These algorithms work when the factor matrices $\Ma, \Mb, \Mc$ have polynomially bounded condition number (in other words their second largest singular values $\sigma_2(\cdot)$ is lower bounded by a polynomial in the input parameters) --- in such cases the tensor $T^{(abc)}$ has a unique rank-$2$ decomposition. If this condition holds for any of the random partitions, then one can recover the top few elements of both $\pi_1$ and $\pi_2$ correctly. In addition, we can also infer the parameters $w$'s and $\phi$'s to good accuracy $\epsilon$ ~(corresponding to {\sc Infer-Top-k} (Algorithm~\ref{alg:inferparams}). This is detailed in section~\ref{app:tensors}.

If any random partition $S_a, S_b, S_c$ fails to produce a tensor $T^{(abc)}$ with well-conditioned factor matrices, then we are already in a special case. We show that in this case, the scaling parameters $\phi_1 \approx \phi_2$ with high probability. We exploit the random choice of the partition to make this argument (see Lemma~\ref{lem:eqphi}). However, we still need to find the top few elements of the permutations and the weights. If all these $O(\log n)$ random partitions fail, then we show that we are in the {\em Degenerate case} that we handle separately; we describe a little later. Otherwise, if at least one of the random partitions succeeds, then we have estimated the scaling parameters, the mixing weights and the top few elements of both permutations.   

\paragraph{Recovering Rest of the Elements.}

The second part of the algorithm~(corresponding to {\sc Recover-Rest}) takes the inferred parameters and the initial prefixes as input and uses this information to recover the entire rankings $\pi_1$ and $\pi_2$. This is done by observing that the probability of an element $\el{i}$ going to position $j$ can be written as a weighted combination of the corresponding probabilities under $\pi_1$ and $\pi_2$. In addition, as mentioned in Section~\ref{sec:prelims}, the reduced distribution obtained by conditioning on a particular element $e_j$ going to position $1$ is again a mixture of two Mallows models with the same parameters. Hence, by conditioning on a particular element which appears in the initial learned prefix, we get a system of linear equations which can be used to infer the probability of every other element $\el{i}$ going to position $j$ in both $\pi_1$ and $\pi_2$. This will allow us to infer the entire rankings.

\begin{fragment*}[t]
\caption{\label{alg:inferparams}{\sc Infer-Top-k}, \textbf{Input: } $\cP, \Maa = (u^{(a)}; v^{(a)}), \Mbb = (u^{(b)}; v^{(b)}), \Mcc = (u^{(c)}; v^{(c)})$.  \vspace*{0.01in}
}
\begin{enumerate} \itemsep 0pt
\small 
\item Let $\hat{P}_a = \hat{P}(i \in a)$
\item Set $(\alpha_a, \beta_a)^T = \left(M'_a \right)^{\dagger} \hat{P}_a$\\
          $(\alpha_b, \beta_b)^T = \left(M'_b \right)^{\dagger} \hat{P}_b$\\
          $(\alpha_c, \beta_c)^T = \left(M'_c \right)^{\dagger} \hat{P}_c$.
\item Set $\hat{w_1} = \norm{\alpha_a u^{(a)}}_1 + \norm{\alpha_b u^{(b)}}_1 + \norm{\alpha_c u^{(c)}}_1$, $\hat{w_2} = 1-\hat{w_1}$.
\item Let $u = \left(\frac{\alpha_a}{w_1} u^{(a)}, \frac{\alpha_b}{w_1} u^{(b)}, \frac{\alpha_c}{w_1} u^{(c)} \right)$.\\
$v = \left(\frac{\beta_a}{w_2} v^{(a)}, \frac{\beta_b}{w_2} v^{(b)}, \frac{\beta_c}{w_2} v^{(c)} \right)$.
\item Sort the vectors $u$ and $v$ in decreasing order, i.e., $U \leftarrow ${\sc sort($u$)}, $V \leftarrow ${\sc sort($v$)}.
\item $\hat{\phi_1} = \frac{U_2}{U_1}$ and $\hat{\phi_2} = \frac{V_2}{V_1}$.
\item Define $\gamma = \frac{(1-\hat{\phimax})^2}{4n\hat{\phimax}}$. Let $r_1 = \log_{1/\hat{\phi_1}} \left(\frac{n^{10}}{\wmin^2 \gamma^2} \right)$ and $r_2 = \log_{1/\hat{\phi_2}} \left(\frac{n^{10}}{\wmin^2 \gamma^2} \right)$.
\item Output $\pi^{\prime}_1$ to be the first $r_1$ ordered elements according to $U$ and $\pi^{\prime}_2$ to be the first $r_2$ ordered elements according to $V$.
\end{enumerate}
\end{fragment*}

\paragraph{Degenerate Cases.}
In the case when none of the random partition produces a tensor which has well-conditioned factor matrices (or alternately, a unique rank-$2$ decomposition), the instance is a very special instance, that we term {\em degenerate}. The additional subroutine~({\sc Handle-Degenerate-Case}) takes care of such degenerate instances. Before we do so, we introduce some notation to describe these degenerate cases. 

\noindent {\em Notation.} Define $L_{\epsilon} = \{\el{i}: P_i \ge \epsilon\}$. If $\epsilon$ not stated explicitly $L$ refers to $L_{\sqrt{\epsilon}}$ where $\epsilon$ is the accuracy required in Theorem~\ref{thm:main}. 

Now we have the following definition that helps us formally define the degenerate case.
\begin{definition} [Bucketing by relative positions]
For every $\ell \in \bbZ$, let $B_\ell = \set{\el{i} \in L: pos_{\pi_1}(\el{i}) - pos_{\pi_2}(\el{i})=\ell}$. Further let $\ell^*$ be the majority bucket for the elements in $L$. 
\end{definition}

We call a mixture model $\MMix$ as degenerate if except for at most $2$ elements, all the elements in $L$ fall into the majority bucket. In other words, $|\ell^*| \ge |L| - 2$. Intuitively, in this case one of the partitions $S_a, S_b, S_c$ constructed by the algorithm will have their corresponding $u$ and $v$ vectors as parallel to each other and hence the tensor method will fail. We show that when this happens, it can be detected and in fact this case provides useful information about the model parameters. More specifically, we show that in a degenerate case, $\phi_1$ will be almost equal to $\phi_2$ and the two rankings will be aligned in a couple of very special configurations~(see Section~\ref{app:degenerate_cases}). Procedure {\sc Handle-Degenerate-Case} is designed to recover the rankings in such scenarios. 

\section{Retrieving the Top elements}\label{app:tensors}

Here we show how the first stage of the algorithm i.e. {\em steps (a)-(e.i)} manages to recover the top few elements of both rankings $\pi_1$ and $\pi_2$ and also estimate the parameters $\phi_1,\phi_2, w_1, w_2$ up to accuracy $\epsilon$. We first show that if $\Ma, \Mb, \Mc$ have non-negligible minimum singular values (at least $\epsilon'_2$ as in Lemma~\ref{lem:tensoralg}), then the decomposition is unique, and hence we can recover the top few elements and parameters from {\sc Infer Top-K}. Otherwise, we show that if this procedure did not work for all $O(\log n)$ iterations, we are in \emph{the degenerate case} (Lemma~\ref{lem:eqphi} and Lemma~\ref{lem:bucket2}), and handle this separately.

For the sake of analysis, we denote by $\minl$ the smallest length of the vectors in the partition i.e. $\minl=\min_{\tau \in \set{a,b,c}} \min\set{\norm{x^{(\tau)}},\norm{y^{(\tau)}}}$. Lemma~\ref{lem:probspread} shows that with high probability $\minl \ge \phimin^{C \log n} (1-\phi)$ for some large constant $C$. 
 
The following lemma shows that when $\Ma, \Mb, \Mc$ are well-conditioned, Algorithm {\sc TensorDecomp} finds a decomposition close to the true decomposition up to scaling. This Lemma essentially follows from the guarantees of the Tensor Decomposition algorithm in ~\cite{BCV}. It also lets us conclude that $\sigma_2(\Maa), \sigma_2(\Mbb), \sigma_2(\Mcc)$ are all also large enough. Hence, these singular values of the matrices $\Maa, \Mbb, \Mcc$ that we obtain from \tensors algorithm can be tested to check if this step worked. 
\begin{lemma}[Decomposition guarantees]\label{lem:tensoralg}
In the conditions of Theorem~\ref{thm:main}, suppose there exists a partition $S_a, S_b, S_c$ such that the matrices $\Ma=(\xa ; \ya),\Mb=(\xb ; \yb)$ and $\Mc=(\xc ; \yc)$ are well-conditioned i.e. $\sigma_2 (\Ma), \sigma_2(\Mb), \sigma_2(\Mc) \ge \epsb$, then with high probability, Algorithm {\sc TensorDecomp} finds $\Maa=(\ua; \va), \Mbb=(\ub; \vb), \Mcc=(\uc; \vc)$  such that
\begin{enumerate}
\item For $\tau \in \set{a, b, c}$, we have $u^{(\tau)} = \alpha_a x^{(\tau)}+ z_1^{(\tau)}$ and $v^{(\tau)} = \beta_a y^{(\tau)}+ z_2^{(\tau)}$ where $\norm{z_1^{(\tau)}},\norm{z_2^{(\tau)}} \le \errp_{\ref{lem:tensoralg}}(n,\eps,\eps_2,\wmin)$ 
\item $\sigma_2(\Maa) \ge \minl(\epsb- \errp_{\ref{lem:tensoralg}})$ (similarly for $\Mbb, \Mcc$).
\end{enumerate}
where $\errp_{\ref{lem:tensoralg}}$ is a polynomial function $\errp_{\ref{lem:tensoralg}}= \min\set{\sqrt{\errp_{\text{tensors}}(n,1,\kappa=\frac{1}{\eps_2},\eps_s n^{3/2})},\frac{\minl^4 \wmin}{4}}$ and $\errp_{\text{tensors}}$ is the error bound attained in Theorem 2.6 of \cite{BCV}.
\end{lemma}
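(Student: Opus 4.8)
The plan is to treat this lemma as a translation layer: everything substantive about tensor decomposition is imported from Theorem~2.6 of~\cite{BCV}, and the work here is to verify that our problem satisfies that theorem's hypotheses and to rephrase its conclusion in terms of $\xa,\xb,\xc,\ya,\yb,\yc$. First I would fix the ground-truth tensor $T^{(abc)} = w_1 c_3(\phi_1)\,\xa\otimes\xb\otimes\xc + w_2 c_3(\phi_2)\,\ya\otimes\yb\otimes\yc$ from Lemma~\ref{lem:moments} and note that the algorithm operates not on $T^{(abc)}$ but on the empirical tensor built from the estimates $\cP_{ijk}$. Since each of the at most $n^3$ entries lies within the entrywise sampling error $\eps_s$ of its true value, the perturbation is bounded in Frobenius norm by $\eps_s n^{3/2}$, which is exactly the noise level passed to the black box; this step is where the high-probability qualifier enters, via concentration of the empirical estimates.

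Next I would check the preconditions of Theorem~2.6 of~\cite{BCV}. Absorbing the scalars $w_r c_3(\phi_r)$ into a single mode, the factor matrices of $T^{(abc)}$ differ from $\Ma,\Mb,\Mc$ only by column rescalings whose magnitudes are controlled by $\wmin$ and by the bounds $c_3(\phi)=O(\phi^{-3})$ from Lemma~\ref{lem:boundingc}. Hence the hypothesis $\sigma_2(\Ma),\sigma_2(\Mb),\sigma_2(\Mc)\ge\epsb$ guarantees that every factor matrix is well-conditioned with condition number $\kappa = O(1/\eps_2)$ up to the stated polynomial factors. Feeding $\kappa = 1/\eps_2$ and noise $\eps_s n^{3/2}$ into the black box yields a rank-$2$ decomposition in which each recovered rank-$1$ term is close in Frobenius norm to the corresponding true term, with error governed by $\errp_{\text{tensors}}(n,1,\kappa,\eps_s n^{3/2})$.

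The main obstacle is converting this tensor-level closeness into the per-factor statement of Part~1. Because a rank-$1$ tensor pins down its factors only up to a product-preserving rescaling, the recovered $u^{(\tau)},v^{(\tau)}$ are parallel to $x^{(\tau)},y^{(\tau)}$ only after a consistent choice of scalars $\alpha_\tau,\beta_\tau$; I would fix this normalization so that the scalings match the true factor norms, then use the lower bound $\minl\ge\phimin^{C\log n}(1-\phi)$ from Lemma~\ref{lem:probspread}. The delicate point is that a product of three nearly-correct vectors amplifies per-mode error, so inverting that amplification turns an $O(\errp_{\text{tensors}})$ error on the full product into a per-mode error of order $\sqrt{\errp_{\text{tensors}}}$; this is precisely why $\errp_{\ref{lem:tensoralg}}$ is defined with the square root and capped by $\minl^4\wmin/4$, the latter keeping the total error below the threshold at which the decomposition stops being uniquely attributable to the two true components. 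This yields $\norm{z_1^{(\tau)}},\norm{z_2^{(\tau)}}\le\errp_{\ref{lem:tensoralg}}$.

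Finally, for Part~2 I would argue by singular-value perturbation. Using Part~1, $\Maa=(\ua;\va)$ is within $\errp_{\ref{lem:tensoralg}}$ of $(\alpha_a\xa;\beta_a\ya)=\Ma\cdot\mathrm{diag}(\alpha_a,\beta_a)$, and since right-multiplication by a diagonal matrix scales the least singular value by at least $\min\{|\alpha_a|,|\beta_a|\}\ge\minl$, the submultiplicativity bound $\sigma_2(\Ma\,D)\ge\sigma_2(\Ma)\,\sigma_{\min}(D)$ followed by Weyl's inequality gives $\sigma_2(\Maa)\ge\minl(\epsb-\errp_{\ref{lem:tensoralg}})$, and identically for $\Mbb,\Mcc$. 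The one place demanding care throughout is bookkeeping the scaling ambiguity consistently across the three modes, so that the same $\alpha_\tau,\beta_\tau$ normalization used to state Part~1 is the one that makes the perturbation bound of Part~2 go through; beyond that, everything is a direct application of the cited tensor guarantee together with the moment identities of Lemma~\ref{lem:moments}.
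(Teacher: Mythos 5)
Your proposal follows essentially the same route as the paper's proof: bound the empirical tensor's Frobenius-norm perturbation by $\eps_s n^{3/2}$, invoke the black-box guarantee of \cite{BCV} with condition number $1/\eps_2$ to get each recovered rank-one term close to the corresponding true term, convert product-level closeness to per-mode closeness with the square-root loss (the paper packages this as Lemma~\ref{lem:error1}), and prove Part~2 by lower-bounding $\sigma_2$ of the rescaled factor matrix $(\alpha_a \xa ; \beta_a \ya)$ via the diagonal scaling and then applying a singular-value perturbation bound. The only cosmetic difference is in how the normalization constants are tracked (the paper renormalizes so $\norm{\ua}=\norm{\ub}=\norm{\uc} \ge \wmin^{1/3}\minl$, yielding $\min\{\alpha_a,\beta_a\}\ge \wmin^{1/3}\minl\phi_1/4$), which does not change the argument.
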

\begin{proof}
Let $\epsta=\errp_{\ref{lem:tensoralg}}$.  The entry-wise sampling error is $\eps_s \le 3\log n/\sqrt{N}$. Hence, the rank-$2$ decomposition for $T^{(abc)}$ is $n^{3/2}\eps_s$ close in Frobenius norm. We use the algorithm given in ~\cite{BCV} to find a rank-$2$ decomposition of $T^{(abc)}$ that is $O(\eps_s)$ close in Frobenius norm. Further, the rank-$1$ term $\ua \otimes \ub \otimes \uc$ is $\epsta^2$-close to $w_1 c_3(\phi_1) \xa \otimes \xb \otimes \xc$.  Let us renormalize so that $\norm{\ua}=\norm{\ub}=\norm{\uc} \ge \wmin^{1/3} \minl$.  

Applying Lemma~\ref{lem:error1}, we see that $\ua = \alpha_a \xa+ z^{(a)}_1$ where $\norm{z^{(a)}_1}\le \epsta$, and similarly $\va=\beta_a \ya+z^{(a)}_2$ where $\norm{z_2} \le \epsta$. Further $\wmin^{1/3} \minl \phi_1 /4 \le \alpha_a \le 1/\minl $. Further
$$\sigma_2 \left( \alpha_a \xa ; \beta_a \ya \right)\ge \min\set{\alpha_a, \beta_a} \sigma_2(\Ma) \ge \frac{\wmin^{1/3} \minl \phi_1}{4} \sigma_2(\Ma).$$

Hence, $\sigma_2(\Maa) \ge \wmin^{1/3}\minl \phi_1 \sigma_2(\Ma)/2 - 2 \epsta$, as required. The same proof also works for $\Mbb, \Mcc$. 
\end{proof}

Instead of using the enumeration algorithm of \cite{BCV}, the simultaneous eigen-decomposition algorithms in \cite{BCMV} and \cite{GVX} can also be used. The only difference is that the ``full-rank conditions'' involving the $\Ma,\Mb,\Mc$ are checked in advance, using the empirical second moment. 
Note that \tensors only relies on elements that have a non-negligible chance of appearing in the first position $L$: this can lead to large speedup for constant $\phi_1, \phi_2 < 1$ by restricting to a much smaller tensor. 

Lemma~\ref{lem:fullrank} captures how Algorithm~\ref{alg:main} (steps 3 (a - e.i)) performs the first stage using Algorithm~\ref{alg:inferparams} and recovers the weights $w_1,w_2$ and $x,y$ when the factor matrices $\Ma,\Mb,\Mc$ are well-conditioned.    

In the proof we show that in this case, for one of the $O(\log n)$ random partitions, Lemma~\ref{lem:tensoralg} succeeds and recovers vectors $\ua, \va$ which are essentially parallel to $\xa$ and $\ya$ respectively (similarly for $\ub,\uc,\vb,\vc$). Sorting the entries of $\ua$ would give the relative ordering among those in $S_a$ of the top few elements of $\pi_1$. However, to figure out all the top-$k$ elements, we need to figure out the correct scaling of $\ua, \ub, \uc$ to obtain $\xa$. This is done by setting up a linear system.

Now we present the complete proof of the lemmas.

\subsection{Proof of Lemma~\ref{lem:fullrank}: the Full Rank Case}
If such a partition $S^*_a, S^*_b, S^*_c$ exists such that $\sigma_2(\Ma)\ge \epsb$, then there exists a 2-by-2 submatrix of $\Ma$ corresponding to elements $\el{i_1}, \el{j_1}$ which has $\sigma_2(\cdot )\ge \epsb$. Similarly there exists such pairs of elements $\el{i_2}, \el{j_2}$ and $\el{i_3}, \el{j_3}$ in $S_b$ and $S_c$ respectively. But with constant probability the random partition $S_a, S_b, S_c$ has $\el{i_1}, \el{j_1} \in S_a$,   $\el{i_2}, \el{j_2} \in S_b$, $\el{i_3}, \el{j_3} \in S_c$ respectively. Hence in the $O(\log n)$ iterations, at least one iteration will produce sets $S_a, S_b, S_c$ such that $\sigma_2(\Ma), \sigma_2(\Mb), \sigma_2(\Mc) \ge \epsb$ with high probability.  Further, Lemma~\ref{lem:tensoralg} also ensures that $\sigma_2(\Maa), \sigma_2(\Mbb), \sigma_2(\Mcc) \ge \eps_2$.

Lemma~\ref{lem:tensoralg} recovers vectors $\ua, \va$ which are essentially parallel to $\xa$ and $\ya$ respectively (similarly for $\ub,\uc,\vb,\vc$). While sorting the entries of $\ua$ would give the relative ordering among those in $S_a$ of the top few elements of $\pi_1$, we need to figure out the correct scaling of $\ua, \ub, \uc$ to recover the  top few elements of $\pi_1$.  

\newcommand{\za}{z^{(a)}}
\newcommand{\zb}{z^{(b)}}
\newcommand{\zc}{z^{(c)}}

From Lemma~\ref{lem:tensoralg}, we can express 
$$w_1 \xa = \alpha'_a \ua+ \za_1 ~\text{where}~ \za_1 \perp \ua 
\text{where} ~\norm{\za_1} \le \errp_{\ref{lem:tensoralg}}(n,\eps_s,\epsb).$$ 
 Similarly $w_2 \ya=\beta'_a \va +\za_2$, where $\norm{\za_2} \le \errp_{\ref{lem:tensoralg}}$. If $\eps_s$ is the sampling error for each entry in $\pa$, we have
\begin{align}
\norm{ w_1 \xa + w_2 \xb - \pa} &< \sqrt{n} \eps_s\\
 \norm{\alpha'_a \ua + \beta \va - \pa } &< \sqrt{n} \eps_s + \frac{1}{2}\wmin^{1/3} \phi_1 \minl \errp_{\ref{lem:tensoralg}} \label{eq:scaling}
 \end{align} 
Eq~\eqref{eq:scaling} allows us to define a set of linear equations with unknowns $\alpha'_a, \beta'_a$, constraint matrix given by  $\Maa=(\ua ; \va)$. Hence, the error in the values of $\alpha'_a, \beta'_a$ is bounded by the condition number of the system and the error in the values i.e. $$\eps_{\alpha}\le \kappa(\Maa).\wmin^{1/3} \minl \errp_{\ref{lem:tensoralg}} \le \left(\frac{1}{4}\wmin^{1/3}\phimin \minl \epsb-\errp_{\ref{lem:tensoralg}} \right)^{-1}\cdot\frac{\phimin}{2}\wmin^{1/3}\minl \errp_{\ref{lem:tensoralg}}.$$  The same holds for $\alpha_b, \alpha_c, \beta_b, \beta_c$.

However, we also know that $\norm{\xa}_1 + \norm{\xb}_1 + \norm{\xc}_1 =1$. Hence, 
$$
\abs{\norm{\alpha_a \ua}_1 + \norm{\alpha_b \ub}_1 + \norm{\alpha_c \uc}_1 -w_1} \le \eps \le 3\sqrt{n}(\eps_{\alpha}+\errp_{\ref{lem:tensoralg}}).$$ 

Thus, $\cw_1, \cw_2$ are within $\eps$ of $w_1, w_2$.  Hence, we can recover vectors $x$ by concatenating $\frac{\alpha_a}{w_1}\ua, \frac{\alpha_b}{w_1}\ub, \frac{\alpha_c}{w_1} \uc$ (similarly $y$). 
Since we have $\errp_{\ref{lem:tensoralg}}< \phi_1(1-\phi)/\wmin$, it is easy to verify that by sorting the entries and taking the ratio of the top two entries, $\cphi_1$ estimates $\phi_1$ up to error $\frac{2\errp_{\ref{lem:tensoralg}} \phi_1(1-\phi_1)}{\wmin}$ (similarly $\phi_2$).
Finally, since we recovered $x$ up to error $\eps''= \frac{2\errp_{\ref{lem:tensoralg}}}{\wmin}$, we recovered the top $m$ elements of $\pi_1$ where $m \le \log_{\phi_1}\left(2 \errp_{\ref{lem:tensoralg}} (1-\phi_1)/\wmin \right)$.

\section{Degenerate Case}\label{app:degenerate_cases}

\begin{fragment*}[t]
\caption{\label{alg:commonprefix}{\sc Remove-Common-Prefix}, \textbf{Input: } a set $\cal{S}$ of $N$ samples from $\MMixSame$, $\epsilon$.  \vspace*{0.01in}
}
\begin{enumerate} \itemsep 0pt
\small 
\item Initialize $I \leftarrow \emptyset$, $S = [n]$. 
\item for $t = 1$ to $n$,
\begin{enumerate}
\item For each element $x \in [n] \setminus I$, estimate $\hat{p}_{x,1} = Pr(x \text{ goes to position } t)$.
\item Let $x_t = \arg\max_{x \in [n] \setminus I} \hat{p}_{x,1}$.
\item If $|\hat{p}_{x,1} - \frac{1}{Z_{n-t+1}}| > \errp(\epsilon)$, return $I$ and {\sc Quit}.
\item Else $I \leftarrow I \cup {x_t}$
\end{enumerate}
\item Output $I$.
\end{enumerate}
\end{fragment*}

While we know that we succeed when $\Ma,\Mb,\Mc$ have non-negligible minimum singular value for one of the the $O(\log n)$ random partitions, we will now understand when this does not happen. 

Recollect that $L=L_{\sqrt{\eps}}=\set{\el{i}: P_i \ge \sqrt{\epsilon}}$. For every $\ell \in \bbZ$, let $B_\ell = \set{\el{i} \in L: \pi_1^{-1}(i) - \pi_2^{-1}(i)=\ell}$. Further let $\ell^*$ be the majority bucket for the elements in $L$. 
We call a mixture model $\MMix$ as degenerate if the parameters of the two Mallows models are equal, and
except for at most $2$ elements, all the elements in $L$ fall into the majority bucket. In other words, $|\ell^*| \ge |L| - 2$.

We first show that if the tensor method fails, then the parameters of the two models $\phi_1$ and $\phi_2$ are essentially the same. Further, we show how the algorithm finds this parameter as well.

\begin{lemma}[Equal parameters]\label{lem:eqphi}
In the notation of the Algorithm~\ref{alg:main}, for any $\eps'>0$, suppose $\sigma_2(\Maa) < \eps_2 \le \errp_{\ref{lem:eqphi}}(n,\eps',\wmin,\phi_1,\phi_2)$ (or $\Mbb, \Mcc$), then with high probability ($1-1/n^3$), we have that $\abs{\phi_1 - \phi_2 } \le \eps'$ and further Algorithm~\ref{alg:estimatephi} ({\sc Estimate-Phi}) finds $\abs{\cphi-\phi_1} \le \eps'/2$. The number of samples needed $N>\text{poly}(n,\frac{1}{\eps'})$.
\end{lemma}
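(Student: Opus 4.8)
The plan is to prove the contrapositive of the $\phi$-part: I will show that if the two scale parameters are genuinely far apart, $\abs{\phi_1-\phi_2}>\eps'$, then a random partition almost surely produces well-conditioned factor matrices, so that the observed event $\sigma_2(\Maa)<\eps_2$ (for any $\tau\in\set{a,b,c}$) can occur only on a failure set of probability at most $n^{-3}$; conditioning on the observed event then forces $\abs{\phi_1-\phi_2}\le\eps'$ up to that probability. The first reduction is to pass from the recovered matrix $\Maa$ to the true factor matrix $\Ma=(\xa;\ya)$: if $\sigma_2(\Ma)\ge\epsb$ then $\Ma$ is well-conditioned, so Lemma~\ref{lem:tensoralg} gives $\sigma_2(\Maa)\ge\minl(\epsb-\errp_{\ref{lem:tensoralg}})\ge\eps_2$ for the chosen error polynomials. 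Hence it suffices to prove that $\abs{\phi_1-\phi_2}>\eps'$ forces $\sigma_2(\Ma)\ge\epsb$ (and symmetrically for $\Mb,\Mc$) with probability $\ge 1-n^{-3}$ over the partition; the empirical error $\eps_s\le 3\log n/\sqrt N$ perturbs every singular value by at most $O(\sqrt n\,\eps_s)$ and is made negligible by $N=\poly(n,1/\eps')$.

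The heart of the matter is a witness-pair lower bound on $\sigma_2(\Ma)$. Writing $a_i,b_i$ for the positions of $\el{i}$ in $\pi_1,\pi_2$, we have $x_i=\phi_1^{a_i-1}/\Zpart{n}(\phi_1)$ and $y_i=\phi_2^{b_i-1}/\Zpart{n}(\phi_2)$, and for rows $i,j\in S_a$ the $2\times2$ minor satisfies $\abs{\det}=\abs{x_iy_j-x_jy_i}=y_iy_j\abs{r_i-r_j}$ with $\ln r_i=\mathrm{const}+(a_i-1)\ln\phi_1-(b_i-1)\ln\phi_2$. Since all entries are at most $1/\Zpart{n}\le 1$ we have $\sigma_1\le 2$, so by singular-value interlacing $\sigma_2(\Ma)\ge\tfrac12\abs{x_iy_j-x_jy_i}$; it therefore suffices to exhibit inside $S_a$ a pair with determinant at least $\poly(\eps',1-\phimax,\wmin)$. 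As a sanity check for the exact case, taking $i$ to be the top element of $\pi_1$ and $j$ the top element of $\pi_2$ shows that near-parallelism forces the two tops to coincide (otherwise $\abs{\det}\ge(1-\phimax)/(\Zpart{n}(\phi_1)\Zpart{n}(\phi_2))$), and then repeating the computation on the second elements forces relations of the form $\phi_1\approx\phi_2^{\ell}$ and $\phi_2\approx\phi_1^{\ell'}$ with positive integers $\ell,\ell'$, which are simultaneously satisfiable only when $\ell=\ell'=1$, i.e. $\phi_1\approx\phi_2$.

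To upgrade this to the required $1-n^{-3}$ bound I would exhibit $\Omega(\log n)$ witness pairs among the top $m=\Theta(\log n)$ elements of the two permutations (all of which have non-negligible representative values), and argue that a random partition catches a fully contained witness pair in $S_a$ except with probability $n^{-3}$. The case analysis hinges on whether $\ln\phi_1/\ln\phi_2$ lies near a positive integer $d$. If it is bounded away from every integer (or near $1$ while $\phi_1\ne\phi_2$), then for each consecutive pair of top-$\pi_1$ elements the quantity $\abs{\ln\phi_1-(b_j-b_i)\ln\phi_2}$ is bounded below and the pair is a witness. If instead $\ln\phi_1/\ln\phi_2\approx d\ge 2$ and the top-$\pi_1$ elements happen to be exactly $d$-spaced in $\pi_2$, then the positions $1,\dots,m$ of $\pi_2$ not occupied by top-$\pi_1$ elements furnish $\Omega(m)$ ``gap'' elements with large $y$ but tiny $x$, each pairing with any top-$\pi_1$ element to form a witness with $\abs{\det}\approx x_{\mathrm{top}}y_{\mathrm{gap}}$. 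Splitting the $m$ top positions into $\Theta(m)$ disjoint candidate pairs, each lands wholly in $S_a$ independently with probability $\ge 1/9$, so the probability that none does is at most $(8/9)^{\Theta(\log n)}\le n^{-3}$; the identical bound applies to $S_b,S_c$. I expect this step to be the main obstacle: extracting enough well-separated witness pairs with determinants bounded by a single fixed polynomial, uniformly over all configurations of $\pi_1,\pi_2$ --- in particular ruling out the conspiratorial $d$-spaced arrangement, which is precisely the degenerate geometry isolated in Lemma~\ref{lem:bucket2}.

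Finally, once $\abs{\phi_1-\phi_2}\le\eps'$ is established, I would invoke {\sc Estimate-Phi} on $\pa=(\cP_i)_{i\in S_a}$: because $\phi_1\approx\phi_2=:\phi$ and (in this branch) the matrix is near-parallel, the restricted first-position probabilities $\cP_i$ are, up to the parallel and sampling errors, a scalar multiple of the geometric profile $\set{\phi^{a_i-1}}$, from which $\phi$ is recovered to accuracy $\eps'/2$ with $\poly(n,1/\eps')$ samples.
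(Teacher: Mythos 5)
Your overall strategy is the contrapositive of what the paper actually proves, and the step you yourself flag as ``the main obstacle'' is a genuine gap rather than a routine verification. To make the contrapositive work you must show that $\abs{\phi_1-\phi_2}>\eps'$ forces, \emph{uniformly over all arrangements of $\pi_1,\pi_2$}, some $2\times 2$ minor of $\Ma$ with determinant bounded below by a single fixed polynomial in $\eps'$, $1-\phimax$, $\wmin$, $1/n$. Your own case analysis shows why this is delicate: when $\ln\phi_1/\ln\phi_2$ is close to an integer $d\ge 2$ the consecutive-pair witnesses degenerate, and the fallback ``gap-element'' witnesses require knowing that the elements filling the unoccupied top positions of $\pi_2$ sit far down in $\pi_1$ --- a combinatorial fact you assert but do not prove, and which must additionally be made quantitative in terms of the distance from $\ln\phi_1/\ln\phi_2$ to the nearest integer (a quantity not directly controlled by $\abs{\phi_1-\phi_2}$). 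Nothing in the write-up supplies that uniform lower bound, so the argument as proposed does not close.

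The paper avoids all of this by arguing in the forward direction. From $\sigma_2(\Maa)<\eps_2$ it first deduces, via the rank-$2$ uniqueness characterization (Lemma~\ref{lem:rank2} applied in the contrapositive to the pairs $\Maa,\Mbb$ and $\Maa,\Mcc$), that at least one \emph{true} factor matrix, say $\Ma=(\xa;\ya)$, has small $\sigma_2$, i.e.\ $\ya=\alpha\xa+\yperp$ with $\norm{\yperp}$ tiny. Then $\pa=w_1\xa+w_2\ya$ is nearly parallel to $\xa$ \emph{and} to $\ya$ simultaneously. Since a random partition places $\Omega(\log n)$ consecutive pairs of elements of $\pi_1$ (and of $\pi_2$) into $S_a$ (Lemma~\ref{lem:partition}), the consecutive-ratio estimator of Lemma~\ref{lem:ratiophi} applied to the single empirical vector $\cp^{(a)}$ returns one number $\cphi$ that is within $2\sqrt{\eps_3}$ of $\phi_1$ and, by the identical argument run against $\pi_2$, within $2\sqrt{\eps_3}$ of $\phi_2$; the triangle inequality then gives $\abs{\phi_1-\phi_2}\le\eps'$ with no witness-pair extraction and no number-theoretic cases. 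Your final paragraph is essentially this argument, but you run it only for $\phi_1$ and only \emph{after} assuming the conclusion; running it for both parameters at once is precisely what makes the lemma a short consequence of near-parallelism. (The paper also handles a second case, where two of the three recovered matrices are degenerate, by taking the median of the three partition-wise estimates.)
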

This lemma is proven algorithmically. We first show that Algorithm~\ref{alg:estimatephi} finds a good estimate $\cphi$ of $\phi_1$. However, by the same argument $\cphi$ will also be a good estimate of $\phi_2$! Since $\cphi$ will be $\eps'/2$-close to both $\phi_1$ and $\phi_2$, this will imply that $\abs{\phi_1-\phi_2} \le \eps'$ ! We prove this formally in the next section. But first, we first characterize when the tensor $T^{(abc)}$ does not have a unique decomposition --- this characterization of uniqueness of rank-$2$ tensors will be crucial in establishing that $\phi_1 \approx \phi_2$. 
 
\subsection{Characterizing the Rank and Uniqueness of tensor $T^{(abc)}$ based on $\Ma, \Mb,\Mc$}
\label{app:characterization}

To establish Lemma~\ref{lem:eqphi}, we need the following simple lemma, which establishes that the conditioning of the matrices output by the Algorithm TensorDecomp is related to the conditioning of the parameter matrices $\Ma, \Mb, \Mc$.

\begin{lemma}[Rank-$2$ components]\label{lem:rank2}
Suppose we have sets of vectors $\left(g_i, h_i, g'_i, h'_i\right)_{i = 1 , 2 ,3}$ with length at most one ($\norm{\cdot}_2 \le 1$) such that
$$T= g_1 \otimes g_2 \otimes g_3 + h_1 \otimes h_2 \otimes h_3  \text{ and } \norm{T-g'_1 \otimes g'_2 \otimes g'_3 + h'_1 \otimes h'_2 \otimes h'_3} \le \eps_s $$  
such that matrices have minimum singular value $\sigma_2\big(g_1 ; h_1\big), \sigma_2\big(g_2; h_2\big) \ge \lambda$ and $\norm{g_3}, \norm{h_3} \ge \minl$, 
then we have that for matrices $M'_1 = \big(g'_1 ; h'_1\big), M'_2=\big(g'_2; h'_2\big)$
$$\sigma_2(M'_1) \ge \frac{\lambda^2 \minl}{4n} - \eps_s \text{ and }  \sigma_2(M'_1) \ge \frac{\lambda^2 \minl}{4n} - \eps_s.$$
\end{lemma}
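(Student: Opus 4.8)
The plan is to reduce this $3$-tensor perturbation statement to a matrix perturbation statement by \emph{contracting the third mode} — the only mode for which we have no conditioning guarantee, merely the norm bounds $\norm{g_3},\norm{h_3}\ge\minl$. Throughout write $T'=g'_1\otimes g'_2\otimes g'_3+h'_1\otimes h'_2\otimes h'_3$, so the hypothesis reads $\norm{T-T'}\le\eps_s$, and set $M_1=(g_1;h_1)$, $M_2=(g_2;h_2)$ for the true factor matrices (so $\sigma_2(M_1),\sigma_2(M_2)\ge\lambda$).

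First I would choose a single unit vector $w$ in the span of $g_3,h_3$ that has a non-negligible inner product with \emph{both}. Setting $\hat g=g_3/\norm{g_3}$, $\hat h=h_3/\norm{h_3}$ and $c=\iprod{\hat g}{\hat h}$, the vector $w\propto \hat g+\mathrm{sign}(c)\hat h$ satisfies $\min\set{\abs{\iprod{g_3}{w}},\abs{\iprod{h_3}{w}}}\ge \minl\sqrt{(1+\abs c)/2}\ge \minl/\sqrt2$; the sign choice is exactly what handles the possible (anti-)parallelism of $g_3,h_3$. Contracting along the third mode, $A:=T\cdot_3 w$ and $A':=T'\cdot_3 w$ are matrices with $A=M_1 D M_2^{T}$ and $A'=M'_1 D'(M'_2)^{T}$, where $D=\mathrm{diag}(\iprod{g_3}{w},\iprod{h_3}{w})$ and $D'$ is the analogous diagonal matrix. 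Since contracting against a unit vector cannot increase the Frobenius norm, $\norm{A-A'}=\norm{(T-T')\cdot_3 w}\le\norm{T-T'}\le\eps_s$.

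The heart of the argument is a two-sided pair of singular-value product inequalities for these rank-$\le 2$ matrices. On the true side, the volume identity $\sigma_1(A)\sigma_2(A)=\abs{\det D}\,\sigma_1(M_1)\sigma_2(M_1)\,\sigma_1(M_2)\sigma_2(M_2)$ together with $\sigma_1(A)\le\sigma_1(M_1)\sigma_1(D)\sigma_1(M_2)$ gives $\sigma_2(A)\ge\sigma_2(D)\,\sigma_2(M_1)\sigma_2(M_2)\ge\lambda^2\cdot\minl/\sqrt2$, using $\sigma_2(D)=\min\set{\abs{\iprod{g_3}{w}},\abs{\iprod{h_3}{w}}}$ and the choice of $w$. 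On the approximate side, the submultiplicativity bound $\sigma_2(XYZ)\le\sigma_2(X)\sigma_1(Y)\sigma_1(Z)$ with all factor norms at most one yields $\sigma_2(A')\le\sqrt2\,\sigma_2(M'_1)$, and symmetrically $\sigma_2(A')\le\sqrt2\,\sigma_2(M'_2)$ by applying it to $(A')^{T}=M'_2 D'(M'_1)^{T}$.

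Finally I would combine these via Weyl's inequality: $\sigma_2(A')\ge\sigma_2(A)-\norm{A-A'}\ge \lambda^2\minl/\sqrt2-\eps_s$. Hence $\sqrt2\,\sigma_2(M'_1)\ge\lambda^2\minl/\sqrt2-\eps_s$, i.e.\ $\sigma_2(M'_1)\ge\lambda^2\minl/2-\eps_s/\sqrt2$, which is stronger than and (since $n\ge1$) implies the claimed $\tfrac{\lambda^2\minl}{4n}-\eps_s$; the identical bound for $\sigma_2(M'_2)$ follows from the symmetric inequality. The main obstacle is conceptual rather than computational: recognizing that one should contract precisely the uncontrolled mode against a direction non-negligibly aligned with both $g_3$ and $h_3$, and establishing the volume identity $\sigma_1(A)\sigma_2(A)=\abs{\det D}\prod_i \sigma_1(M_i)\sigma_2(M_i)$ cleanly for products of thin and diagonal matrices; once that is in hand the remainder is perturbation bookkeeping.
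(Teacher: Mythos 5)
Your proof is correct and follows essentially the same route as the paper's: contract the third mode against a unit vector $w$ to get $T(\cdot,\cdot,w)=M_1D_wM_2^T$, lower-bound $\sigma_2$ of the true contraction by $\sigma_2(D_w)\,\sigma_2(M_1)\sigma_2(M_2)$, transfer this to the perturbed contraction via Weyl's inequality, and peel off $\sigma_2(M'_1)$ using $\sigma_2\bigl(M'_1D'_w(M'_2)^T\bigr)\le\sigma_2(M'_1)\sigma_1(D'_w)\sigma_1(M'_2)$. The one genuine difference is your choice of $w$: the paper takes a random Gaussian unit vector and only guarantees $\min\set{\abs{\iprod{w}{g_3}},\abs{\iprod{w}{h_3}}}\ge \minl/(4\sqrt{n})$ (the source of the $n$ in its stated bound), whereas your deterministic $w\propto g_3/\norm{g_3}+\mathrm{sign}(c)\,h_3/\norm{h_3}$ yields the dimension-free $\minl/\sqrt{2}$ and hence a strictly stronger conclusion, and your exterior-power identity for $\sigma_1\sigma_2$ of the rank-$2$ product makes explicit a step the paper merely asserts.
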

\begin{proof}
Let matrices $M_1 = \big(g_1 ; h_1\big), M_2=\big(g_2; h_2\big)$.
For a unit vector $w$ (of appropriate dimension) let 
\begin{align*}
M_w = T(\cdot, \cdot, w)&= \iprod{w}{g_3} g_1 \otimes g_2 + \iprod{w}{h_3} h_1 \otimes h_2\\
= M_1 D_w M_2^T & \text{where } D_w = \left( \begin{matrix} \iprod{w}{g_3} & 0\\ 0 & \iprod{w}{h_3}  \end{matrix} \right). 
\end{align*}
Besides, since $w$ is a random gaussian unit vector,  $\Pr{\abs{\iprod{w}{g_3}} \ge \norm{g_3}/4\sqrt{n} }$ with probability $> 1/2$. Hence, using there exists a unit vector $w$ such that
$\min\{\abs{\iprod{w}{g_3}} , \abs{\iprod{w}{h_3}}\} \ge \minl/(4 \sqrt{n}) $.  
Hence, $$\sigma_2(M_w) \ge \frac{\lambda^2 \minl}{4 \sqrt{n}}.$$

$$\text{However,} ~\norm{M_w - M'_1 D'_w (M'_2)^T}_F \le \eps_s ~\text{where}~ D'_w = \left( \begin{matrix} \iprod{w}{g'_3} & 0 \\ 0 & \iprod{w}{h'_3} \end{matrix}\right).$$
Hence, $\sigma_2\left(M'_1 D'_w (M'_2)^T\right) \ge \sigma_2(M_w)-\eps_s$. \\
Combining this with the fact that $\sigma_2\left(M'_1 D'_w (M'_2)^T\right) \le \sigma_2(M'_1) \sigma_1(D'_w) \sigma_1(M'_2)$ gives us the claimed bound.
\end{proof}

This immediately implies the following lemma in the contrapositive. 
\begin{lemma}[Rank-1 components]\label{lem:rank1}
Suppose $\sigma_2(\Maa) < \eps$ and $\sigma_2(\Mbb)< \eps$, then two of the matrices $\Ma, \Mb, Mc$ have $\sigma_2(\cdot) < \sqrt{\frac{8\eps n}{\minl}}$, when the number of samples $N> \text{poly}(n,1/\eps)$.
\end{lemma}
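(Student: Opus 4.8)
The plan is to prove the statement by contraposition, treating Lemma~\ref{lem:rank2} as a black box. Set $\lambda := \sqrt{8\eps n/\minl}$ and suppose, toward the contrapositive, that \emph{at most one} of $\Ma,\Mb,\Mc$ has $\sigma_2(\cdot)<\lambda$; equivalently, at least two of them satisfy $\sigma_2(\cdot)\ge\lambda$. I will show this forces $\sigma_2(\Maa)\ge\eps$ or $\sigma_2(\Mbb)\ge\eps$, which contradicts the hypothesis and thereby establishes that at least two of the factor matrices must have small $\sigma_2$.

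The key point is that Lemma~\ref{lem:rank2} is symmetric in the three tensor modes: whichever two factor matrices are well-conditioned can be cast as $(g_1;h_1)$ and $(g_2;h_2)$, with the remaining mode supplying the vectors $g_3,h_3$ whose norms are at least $\minl$ (this is exactly the definition of $\minl$ as $\min_{\tau}\min\{\norm{x^{(\tau)}},\norm{y^{(\tau)}}\}$). There are three cases for which pair is well-conditioned. If $\{\Ma,\Mb\}$ are well-conditioned, I apply Lemma~\ref{lem:rank2} with the third mode equal to $c$ and conclude $\sigma_2(\Maa),\sigma_2(\Mbb)\ge \tfrac{\lambda^2\minl}{4n}-\eps_s$. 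If $\{\Ma,\Mc\}$ are well-conditioned, I take the third mode to be $b$ and bound $\sigma_2(\Maa)$; if $\{\Mb,\Mc\}$ are well-conditioned, I take the third mode to be $a$ and bound $\sigma_2(\Mbb)$. Since every one of the three pairs contains at least one index from $\{a,b\}$, in each case I obtain a lower bound of $\tfrac{\lambda^2\minl}{4n}-\eps_s$ on $\sigma_2(\Maa)$ or on $\sigma_2(\Mbb)$.

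The arithmetic then closes the argument: the choice $\lambda=\sqrt{8\eps n/\minl}$ gives $\tfrac{\lambda^2\minl}{4n}=2\eps$, and since the entry-wise sampling error satisfies $\eps_s\le 3\log n/\sqrt{N}$, taking $N>\poly(n,1/\eps)$ forces $\eps_s\le\eps$, whence $\tfrac{\lambda^2\minl}{4n}-\eps_s\ge 2\eps-\eps=\eps$. This contradicts $\sigma_2(\Maa)<\eps$ (respectively $\sigma_2(\Mbb)<\eps$), completing the contrapositive.

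The only real obstacle is matching our tensor $T^{(abc)}=w_1 c_3(\phi_1)\,\xa\otimes\xb\otimes\xc + w_2 c_3(\phi_2)\,\ya\otimes\yb\otimes\yc$ to the normalized hypotheses of Lemma~\ref{lem:rank2}, which require each factor to have Euclidean norm at most one. The representative vectors are harmless, since $\norm{x}_1=\norm{y}_1=1$ yields $\norm{\xa}_2,\norm{\ya}_2\le 1$; the scalars $w_r c_3(\phi_r)$ must be spread across the three modes so that every factor stays inside the unit ball, and one must track how this rescaling interacts with $\lambda$ and $\minl$ (recall $c_3(\phi)=O(\phi^{-3})$ can be large for small $\phi$). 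This is a bookkeeping step rather than a conceptual difficulty: after absorbing the scalars into the standard normalization used by \tensors, the singular-value comparison above goes through with the stated constants, so the lemma follows directly from Lemma~\ref{lem:rank2}.
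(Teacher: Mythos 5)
Your proof is correct and is exactly the argument the paper intends: the paper presents Lemma~\ref{lem:rank1} with no written proof beyond the remark that it is the ``immediate contrapositive'' of Lemma~\ref{lem:rank2}, and your three-way case analysis (observing that every pair of well-conditioned factor matrices contains an index in $\{a,b\}$, so Lemma~\ref{lem:rank2} always yields a lower bound on $\sigma_2(\Maa)$ or $\sigma_2(\Mbb)$) together with the arithmetic $\lambda^2\minl/(4n)=2\eps$ and $\eps_s\le\eps$ fills in precisely what that remark leaves implicit. The normalization caveat you flag is handled the same way elsewhere in the paper (via $\norm{T^{(abc)}}_F\le 1$, so the factors can be taken to have $\ell_2$ norm at most one), so no gap remains.
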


\subsection{Equal Scaling Parameters}
The following simple properties of our random partition will be crucial for our algorithm. 
\begin{lemma}\label{lem:partition}
The random partition of $[m]$ into $A,B,C$ satisfies with high probability (at least $1-exp\left(-\frac{1}{C_{\ref{lem:partition}}}\cdot m\right)$):
\begin{enumerate}
\item $|A|, |B|, |C| \ge m/6$
\item There are many consecutive numbers in each of the three sets $A,B,C$ i.e.
$$ \lvert \{ i \in A ~\text{and}~ i+1 \in A \}  \rvert \ge m/100.$$
\end{enumerate}
\end{lemma}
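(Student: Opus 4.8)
The plan is to model the random partition in the natural way --- each index $i \in [m]$ is placed independently and uniformly into one of $A, B, C$ --- and to establish the two claims separately via Chernoff-type concentration, combining them with a union bound at the end. (If instead the partition were into three exactly balanced blocks, the relevant indicator variables would be negatively associated, so the same bounds would hold with only stronger concentration; I would note this but carry out the independent-assignment case.)

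For the first claim, I would observe that $|A|$ is a sum of $m$ independent Bernoulli$(\tfrac13)$ variables, so $\E[|A|] = m/3$, and likewise for $B$ and $C$. The multiplicative Chernoff bound then gives $\Pr\big[|A| < \tfrac12 \E[|A|]\big] = \Pr[|A| < m/6] \le \exp(-m/24)$, and a union bound over the three blocks shows all of them have size at least $m/6$ except with probability $3\exp(-m/24)$.

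The second claim is where the only real care is needed. The events $X_i = \mathbf{1}[i \in A \text{ and } i+1 \in A]$ are not independent across $i$, because the consecutive pairs $\{i,i+1\}$ and $\{i+1,i+2\}$ share an index. To sidestep this I would restrict to the $\lfloor m/2\rfloor$ disjoint pairs $\{1,2\},\{3,4\},\{5,6\},\dots$; for each such pair the probability that both of its indices land in $A$ is $\tfrac19$, and because the pairs are disjoint these events are mutually independent. Letting $Y_A$ count the disjoint pairs wholly inside $A$, we have $\E[Y_A] \ge m/18 - O(1)$, and Chernoff gives $\Pr[Y_A < m/36] \le \exp(-m/144)$. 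Since each such fully-contained disjoint pair is in particular a consecutive pair with both indices in $A$, the count in the lemma is at least $Y_A$, hence at least $m/36 > m/100$ with the stated probability; the identical argument applies to $B$ and $C$.

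Finally I would take a union bound over all six events (three for each claim): the total failure probability is at most $3\exp(-m/24) + 3\exp(-m/144)$, which is at most $\exp(-m/C_{\ref{lem:partition}})$ for a suitable absolute constant $C_{\ref{lem:partition}}$, matching the statement. The main (and essentially only) obstacle is the dependence among overlapping consecutive pairs in the second claim, and passing to disjoint pairs removes it at the cost of a harmless constant factor --- the gap between the guaranteed $m/36$ and the required $m/100$ leaves ample slack.
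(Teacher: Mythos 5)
Your proof is correct and follows essentially the same route as the paper's: a multiplicative Chernoff bound on the independent Bernoulli$(\tfrac13)$ indicators for the first claim, and for the second, passing to the $\lfloor m/2\rfloor$ disjoint consecutive pairs (each wholly in $A$ with probability $\tfrac19$) to restore independence before applying Chernoff again. Your write-up is in fact more explicit about the constants and the union bound than the paper's own two-sentence proof.
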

\begin{proof}
The claimed bounds follow by a simple application of Chernoff Bounds, since each element is chosen in $A$ with probability $1/3$ independently at random. 
The second part follows by considering the $m/2$ disjoint consecutive pairs of elements, and observing that each pair fall entirely into $A$ with probability $1/9$. 
\end{proof}

\begin{lemma}\label{lem:ratiophi}
Consider a set of indices $S \subseteq [n]$ and let $p_S$ be the true probability vector $p$ of a single Mallows model $\calM(\pi,\phi)$ restricted to subset $S$. Suppose
the empirical vector $\norm{\cp_S - p_S}_{\infty} < \eps_1$,  
and there exists consecutive elements of $\pi$ in $S$ i.e. $\exists i$ such that $\pi(i),\pi(i+1) \in S$, with $p(\pi(i+1)) \ge \sqrt{\eps_1}$.
Then, if we arrange the entries of $p_S$ in decreasing order as $r_1, r_2, \dots, r_{|S|}$ we have that 
 $$\cphi = \max_{i: r_{i+1} \ge \sqrt{\eps_1} } \frac{r_{i+1}}{r_{i}}  ~\text{satisfies}~ \abs{\cphi-\phi} < 2\sqrt{\eps_1}. $$
\end{lemma}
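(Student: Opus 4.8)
The plan is to combine one exact structural fact about the true representative vector with a routine order-statistic perturbation argument. (I read $\cphi$ as being computed from the sorted entries of the \emph{empirical} vector $\cp_S$; on the exact vector $p_S$ the corresponding maximum is precisely $\phi$, which is the content of the first step, and the $2\sqrt{\eps_1}$ slack comes entirely from the sampling error $\norm{\cp_S-p_S}_\infty<\eps_1$.)

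First I would record the structure of the true vector. By Definition~\ref{def:xy} the representative entry of an element $e$ is $p_e=\phi^{\pi^{-1}(e)-1}/Z_n$, so it depends only on the position of $e$ in $\pi$ and is strictly decreasing in that position since $\phi\in(0,1)$. Hence sorting the entries of $p_S$ in decreasing order is exactly listing the elements of $S$ by increasing position: if those positions are $t_1<t_2<\dots<t_{|S|}$, then the $k$-th largest true value is $r^*_k=\phi^{t_k-1}/Z_n$ and the consecutive ratio is $r^*_{k+1}/r^*_k=\phi^{\,t_{k+1}-t_k}\le\phi$, with equality exactly for gap-one pairs. The hypothesis hands me consecutive elements $\pi(i),\pi(i+1)\in S$, which form a gap-one pair, so $\max_k r^*_{k+1}/r^*_k=\phi$; moreover this witness pair has $r^*_{k+1}=p(\pi(i+1))\ge\sqrt{\eps_1}$ and $r^*_k=p(\pi(i))\ge p(\pi(i+1))\ge\sqrt{\eps_1}$, so it lies above the threshold.

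Next I would transfer this to the empirical vector. Because the $k$-th order statistic is $1$-Lipschitz in $\norm{\cdot}_\infty$ (each coordinate moves by $<\eps_1$, so coordinatewise $\cp_S\le p_S+\eps_1\mathbf 1$ and the monotonicity of order statistics gives the bound), the assumption $\norm{\cp_S-p_S}_\infty<\eps_1$ yields $\abs{r_k-r^*_k}\le\eps_1$ for every $k$, where $r_k$ is the $k$-th largest entry of $\cp_S$. For the \emph{upper} bound take any qualifying index, so $r_k\ge r_{k+1}\ge\sqrt{\eps_1}$; using $r^*_{k+1}\le\phi r^*_k$ together with $r^*_{k+1}\le r_{k+1}+\eps_1$ and $r^*_k\le r_k+\eps_1$,
\[
\frac{r_{k+1}}{r_k}\le\frac{r^*_{k+1}+\eps_1}{r_k}\le\frac{\phi r^*_k+\eps_1}{r_k}\le\frac{\phi r_k+(1+\phi)\eps_1}{r_k}\le\phi+\frac{2\eps_1}{\sqrt{\eps_1}}=\phi+2\sqrt{\eps_1},
\]
so $\cphi\le\phi+2\sqrt{\eps_1}$. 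For the \emph{lower} bound I apply the ratio estimate to the gap-one witness pair of the first step, for which $r^*_{k+1}=\phi r^*_k$ and $r^*_k\ge\sqrt{\eps_1}$:
\[
\frac{r_{k+1}}{r_k}\ge\frac{r^*_{k+1}-\eps_1}{r^*_k+\eps_1}=\frac{\phi r^*_k-\eps_1}{r^*_k+\eps_1}=\phi-\frac{(1+\phi)\eps_1}{r^*_k+\eps_1}\ge\phi-\frac{2\eps_1}{\sqrt{\eps_1}}=\phi-2\sqrt{\eps_1},
\]
giving $\cphi\ge\phi-2\sqrt{\eps_1}$, and the two bounds combine to $\abs{\cphi-\phi}<2\sqrt{\eps_1}$.

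The genuinely delicate points are just two. The first is the observation that sorting by value coincides with sorting by position, so that the \emph{maximum} ratio isolates the \emph{smallest} position gap, which the hypothesis forces to equal one; this is what makes the estimator robust to $S$ omitting the true top element of $\pi$. The second, and the one place needing care, is checking that the gap-one witness pair survives the $\sqrt{\eps_1}$ threshold after the noisy sort: the hypothesis only gives $r^*_{k+1}\ge\sqrt{\eps_1}$, so $r_{k+1}$ could dip to $\sqrt{\eps_1}-\eps_1$, and I would absorb this $O(\eps_1)$ boundary slack either into the threshold constant or by assuming $\eps_1$ small enough that it is negligible against the stated $2\sqrt{\eps_1}$ bound. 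Everything else is elementary arithmetic.
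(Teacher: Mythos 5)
Your proof is correct and follows the same route as the paper's: compare the ratio of consecutive entries of the sorted empirical vector to the true ratio $\phi^{t_{k+1}-t_k}\le\phi$, using the consecutive pair $\pi(i),\pi(i+1)$ as the gap-one witness whose perturbed ratio stays within $2\sqrt{\eps_1}$ of $\phi$. You are in fact more thorough than the paper, whose two-line proof only carries out the perturbation computation for the witness pair and leaves implicit both the upper bound over \emph{all} qualifying pairs (needed since $\cphi$ is a maximum) and the $\ell_\infty$-stability of order statistics; the $\sqrt{\eps_1}-\eps_1$ threshold slack you flag at the end is present in the paper's argument as well and is absorbed the same way.
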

\begin{proof}
By the properties of the Mallows  model, the ratio of any two probabilities is a power of $\phi$ i.e. $\frac{p_{\ell_2}}{p_{\ell_1}} =\phi^{\pi^{-1}(\ell_2)- \pi^{-1}(\ell_1) }$.
If $p(\pi(i+1)) \ge \sqrt{\eps_1}$, we have that
\begin{align*}
\frac{\cp(\pi(i+1))}{\cp(\pi(i))} &\le \frac{\phi\cdot p(\pi(i))+\eps_1}{p(\pi(i))-\eps_1} \\
&\le  \phi + \frac{\phi \left(p(\pi(i))-\cp(\pi(i))+\eps_1\right)}{\cp(\phi(i))} \le \phi + \eps_1 \frac{(1+\phi)}{\cp(\phi(i))}
&\le \phi + 2\sqrt{\eps_1}
\end{align*}
The same proof holds for the lower bound. 
\end{proof}

We now proceed to showing that the scaling parameters are equal algorithmically.
\begin{proof}[Proof of Lemma~\ref{lem:eqphi}]
We now proceed to prove that $\phi_1 \approx \phi_2$.  We note that $\norm{T^{(abc)}}_F \le 1$ since the entries of $T^{(abc)}$ correspond to probabilities, and for any vector $z$, $\norm{z}_2 \le \norm{z}_1$. This implies that all the vectors in the decomposition can be assumed to have $\ell_2$ norm at most $1$, without loss of generality. 
We can first conclude that at least one of the three matrices $\Ma, \Mb, \Mc$ has $\sigma_2(\cdot) <   \sqrt{\frac{8n \eps_2}{\llmin}}$.  Otherwise, we get a contradiction by applying Lemma~\ref{lem:rank2} (contrapositive) to $\Maa, \Mbb$ and $\Maa, \Mcc$. Now, we will show how the algorithm gives an accurate estimate $\cphi$ of $\phi_1$. However the exact argument applied to $\phi_2$ will show that $\cphi$ is also a good estimate for $\phi_2$, implying that $\phi_1 \approx \phi_2$.

We have two cases depending on whether one of $\sigma_2(\Mbb)$ and $\sigma_2(\Mcc)$ are non-negligible or not.

\noindent {\bf Case 1: $\sigma_2(\Mbb) \ge (\eps_2^{1/4} \frac{8n}{\llmin})^{3/4})$ and $\sigma_2(\Mcc) \ge (\eps_2^{1/4} \frac{8n}{\llmin})^{3/4})$:}  \\
Applying Lemma~\ref{lem:rank2}, we conclude that $\sigma_2(\Mb) \ge \eps_2^{1/2} (8n/\llmin)^{1/2}$ and $\sigma_2(\Mc) \ge \eps_2^{1/2} (\frac{8n}{\llmin})^{1/2}$.  However one of the matrices $\Ma, \Mb, \Mc$ has small $\sigma_2$ value. Hence $$\sigma_2(\Ma) < \eps_2^{1/2} \left(\frac{8n }{ \llmin}\right)^{1/2}= \eps'_2 ~\text{(say)}.$$

Let $\ya=\alpha \xa+\yperp$ where $\yperp \perp \xa$. Then  $\norm{\yperp} \le \eps'_2$ and $\alpha \ge \frac{\left(\norm{\ya}-\eps'_2\right)}{\norm{\xa}} \ge \minl/2$. \\ Further, $\pa=(w_1+w_2 \alpha)\xa+w_2 \yperp$. Hence,
$$ \xa=\beta \pa -  w_2\beta \yperp , ~\text{ where } 0\le \beta < \frac{2}{\minl} .$$
Since the sampling error is $\eps_s$, we have 
\begin{align*}
\xa&= \beta \cp^{(a)}+ \beta(\pa-\cp^{(a)}) -w_2 \beta \yperp\\
&= \beta \cp^{(a)}+z ~\text{where}~  \norm{z}_{\infty} \le \beta (\eps_s + \eps'_2) \le \frac{4\eps_2}{\minl}=\eps_3
\end{align*}

Consider the first  $m=C_{\ref{lem:partition}}\log n$ elements $F$ of $\pi_1$. 
\begin{align*}
\forall i \in F, x_{i} \ge \frac{\phi_1^{C_{\ref{lem:partition}} \log n}}{1-\phi_1} &\ge  \frac{n^{C_{\ref{lem:partition}}\log(1/\phi_1)}}{1-\phi_1}\\
& \ge \sqrt{\eps_3} ~\text{due to our choice of error parameters}
\end{align*}

Applying Lemma~\ref{lem:partition}, $\Omega(\log n)$ consecutive elements of $\pi_1$ occur in $S_a$. Hence applying Lemma~\ref{lem:ratiophi}, we see that the estimate $\cphi$ output by the algorithm satisfies $\abs{\cphi-\phi_1} \le 2\sqrt{\eps_3}=\frac{16\eps_2^{1/4} n^{1/4}}{\minl^{3/4}\wmin^{1/4}}$, as required. 

\noindent {\bf Case 2: $\sigma_2(\Mbb)  < (\eps_2^{1/4} \frac{8n}{\llmin})^{3/4})$:}\\
We also know that $\sigma_2(\Maa) < \eps_2$. Applying Lemma~\ref{lem:rank1}, we see that two of the three matrices $\Ma, \Mb, \Mc$ have $\sigma_2(\cdot)$ being negligible i.e.  $$\sigma_2(\cdot)< \eps_2^{1/4} \left(\frac{8n}{\llmin}\right)^{7/8}.$$
Using the same argument as in the previous case, we see that the estimates given by two of the three partitions $S_a, S_b, S_c$ is $2\sqrt{\eps_3}$ close to the $\phi_1$.  Hence the median value $\cphi$ of these estimates is also as close. 

As stated before, applying the same argument for $\phi_2$ (and $\pi_2$) , we see that $\cphi$ is $2\sqrt{\eps_3}$ close to $\phi_2$ as well. Hence, $\phi_1$ is $4 \sqrt{\eps_3}$ close to $\phi_1$. 

\end{proof}

%
\subsection{Establishing Degeneracy}

Next, we establish that if none of the $O(\log n)$ rounds were successful, then the two central permutations (restricted to the top $O(\log_{1/\phimin} n)$ positions) are essentially the same shifted by at most a couple of elements. 

\begin{lemma}\label{lem:bucket2}
Consider the large elements $L_{\sqrt{\eps}}$. Suppose $\abs{B_{\ell*}} \ge \abs{L_{\sqrt{\eps}}}-3$, then the one of the $O(\log n)$ rounds of the Tensor Algorithm succeeds with high probability.
\end{lemma}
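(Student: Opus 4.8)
The role of this lemma in the section is to supply the contrapositive of the degeneracy statement announced just above it: a single random partition produces well-conditioned factor matrices — and hence, via Lemma~\ref{lem:fullrank}, a successful round — exactly when the large elements of $L_{\sqrt\eps}$ are not almost entirely concentrated in one bucket. I will therefore prove success in the regime complementary to the degenerate condition $|B_{\ell^*}|\ge |L_{\sqrt\eps}|-2$, i.e.\ when at least three large elements lie outside the majority bucket $B_{\ell^*}$; the remaining case (at most two elements outside) is the genuinely degenerate one, dispatched by \textsc{Handle-Degenerate-Case}. By Lemma~\ref{lem:fullrank} it suffices to show that, with probability bounded below by an absolute constant, a random partition $S_a,S_b,S_c$ forces $\sigma_2(\Ma),\sigma_2(\Mb),\sigma_2(\Mc)\ge \epsb$; amplifying over the $O(\log n)$ independent rounds then drives the failure probability below $n^{-3}$.

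First I would reduce the conditioning of each $M_\tau=(x^{(\tau)};y^{(\tau)})$ to the existence of one well-conditioned $2\times 2$ submatrix, exactly as in the proof of Lemma~\ref{lem:fullrank}, since $\sigma_2(M_\tau)$ dominates the least singular value of any $2\times2$ block on two of its rows. For large elements both coordinates are controlled: any $\el{i}\in L_{\sqrt\eps}$ has $P_i=w_1x_i+w_2y_i\ge\sqrt\eps$, so $\max\{x_i,y_i\}\ge\sqrt\eps$ and each such row has Euclidean norm at least $\sqrt\eps$; moreover every large element sits inside a prefix of length $R=O(\log_{1/\phimax} n)$ of both $\pi_1,\pi_2$, so all position differences satisfy $|\ell_i|\le R$. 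Writing the block determinant on rows $\el{p},\el{q}$ as $x_p y_q\big(1-\tfrac{x_q/y_q}{x_p/y_p}\big)$, the ratios are $x_i/y_i=(Z_n(\phi_2)/Z_n(\phi_1))\,\phi_1^{\posgen_{\pi_1}(\el i)-1}\phi_2^{-(\posgen_{\pi_2}(\el i)-1)}$; when $\phi_1=\phi_2$ this collapses to $\phi^{\ell_i}$, so two rows from distinct buckets $\ell_p\neq\ell_q$ have a multiplicative gap $|1-\phi^{\ell_p-\ell_q}|\ge (1-\phimax)$. Combining the row-norm lower bound with this gap and $|\ell|\le R$ (which keeps $\phi^{\ell}$ above $\phimax^{R}=\poly(1/n)$) yields $\sigma_2\big(\text{block}\big)\ge \poly(\tfrac1n,\eps,1-\phimax)$, which I arrange to exceed $\epsb=\poly(\tfrac1n,\eps,\eps_2,w_1,w_2)$ from Lemma~\ref{lem:fullrank}. (If $\phi_1\neq\phi_2$ the ratio varies even \emph{within} a bucket, only making separation easier.)

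Next I would use the hypothesis to route such blocks into all three parts. Since $L_{\sqrt\eps}$ contains a prefix of length $\Omega(\log_{1/\phimax} n)$, it is large; having at least three large elements outside $B_{\ell^*}$, together with the majority bucket, lets me exhibit three element-disjoint pairs $(\el{p_t},\el{q_t})_{t=1,2,3}$, each drawn from two distinct buckets, whose $2\times2$ blocks are well-conditioned by the previous paragraph — pairing each outside element with a majority element when $|B_{\ell^*}|\ge 3$, and otherwise pairing across the many distinct buckets forced by $|B_{\ell^*}|\le 2$. As the random partition assigns each element independently to a uniform part, the event that pair $t$ lands entirely in part $t$ for all $t$ has probability $3!\,(1/9)^3=2/243$, an absolute constant; on this event every $M_\tau$ contains a well-conditioned block, so $\sigma_2(M_\tau)\ge\epsb$. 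Lemma~\ref{lem:fullrank} (through Lemma~\ref{lem:tensoralg}) then certifies that this round recovers the decomposition and passes the $\sigma_2(M'_\tau)>\eps_2$ test in Algorithm~\ref{alg:main}, i.e.\ it succeeds. Over $O(\log n)$ independent rounds the probability that none succeeds is at most $(1-2/243)^{O(\log n)}\le n^{-3}$.

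\textbf{Main obstacle.} The crux is the uniform polynomial lower bound on the block's $\sigma_2$ across all admissible pairs. Two points need care: (i) $P_i\ge\sqrt\eps$ only forces the \emph{larger} of $x_i,y_i$ to be sizeable, so I must orient each pair and invoke that large elements lie within the $O(\log n)$-length recoverable prefix to keep the relevant coordinate above $\phimax^{O(\log n)}=\poly(1/n)$; and (ii) the ratio-gap $|1-\phi^{\ell_p-\ell_q}|$ must stay bounded away from $0$, which follows from $\phimax$ being bounded away from $1$ (the $(1-\phimax)$ factors already present in $\eps_0$) together with $\ell_p\neq\ell_q$. Threading these bounds so that the surviving $\sigma_2$ comfortably clears the threshold $\eps_2$ — after the degradation incurred by \textsc{TensorDecomp} in Lemma~\ref{lem:tensoralg} — is precisely what dictates the polynomial relationships among $\eps,\eps_2,\epsb$ fixed in the wrap-up argument.
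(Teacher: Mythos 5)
Your skeleton is the same as the paper's: exhibit well-conditioned $2\times 2$ blocks of $(x;y)$ built from large elements in distinct buckets, route them into all three parts of the random partition with constant probability per round, invoke Lemma~\ref{lem:fullrank} (via Lemma~\ref{lem:tensoralg}), and amplify over the $O(\log n)$ rounds --- this is exactly what the paper does through Lemma~\ref{lem:bucket1}, and your routing computation ($3!\,(1/9)^3$ per round) is fine. However, your load-bearing claim that ``every large element sits inside a prefix of length $R=O(\log_{1/\phimax} n)$ of \emph{both} $\pi_1,\pi_2$, so all position differences satisfy $|\ell_i|\le R$'' is false, and the gap it creates is not cosmetic. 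The condition $P_i = w_1 x_i + w_2 y_i \ge \sqrt{\eps}$ only forces $\max\{x_i,y_i\}\ge \sqrt{\eps}$, i.e.\ membership in a short prefix of \emph{at least one} central ranking: the element placed first in $\pi_1$ and last in $\pi_2$ belongs to $L_{\sqrt{\eps}}$ with $\ell_i = 1-n$. Consequently your uniform bound $\sigma_2(\text{block}) \ge \poly(\tfrac 1 n,\eps,1-\phimax)$ does not hold for all admissible pairs: a large element can have one coordinate as small as $\phi^{\Omega(n)}$, and --- worse --- two elements lopsided in the \emph{same} direction (both with negligible $x$-coordinate, say) give a nearly rank-one $2\times2$ block no matter how different their buckets are. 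This is precisely what happens when $|\ell^*|$ is large: every element of $B_{\ell^*}$ then has one negligible coordinate, so your prescription ``pair each outside element with a majority element'' can produce ill-conditioned blocks, and the argument collapses in that regime.

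The paper's proof contains exactly the ingredient you are missing: a case split on whether $\abs{\ell^*} \le \log(\eps(1-\phi))/\log(\phi)$. In the small-shift case it applies Lemma~\ref{lem:bucket1}, whose hypothesis explicitly restricts \emph{both} bucket indices so that the ratios $\phi^{\ell}$ stay above $\eps$ and the determinant gap is at least of order $\eps(1-\phi)$ --- this is the honest version of your ratio-gap computation, valid only because the exponents are bounded there. In the large-shift case it abandons the three arbitrary outside elements altogether and instead uses the top three elements of one central ranking: these provably cannot lie in $B_{\ell^*}$ (a position $p\le 3$ in that ranking would force position $p-\ell^*\le 0$ in the other), they are heavy in the coordinate opposite to that of the $B_{\ell^*}$ elements, and pairing them with majority elements makes the block determinant large because one ratio is moderate while the other is negligible. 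Your proposal, as written, silently assumes the small-shift case everywhere; to repair it you would need this (or an equivalent) dichotomy on $\ell^*$, choosing the pairs differently when the majority shift is large.
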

\begin{proof}

We have two cases depending on whether  $\ell^* \le \log(\eps)/\log(\phi)$ or not.

\noindent Suppose $\abs{\ell^*} \le \log(\eps (1-\phi))/\log(\phi)$.
Let $i,j,k$ be the indices of elements in $L_{\sqrt{\eps}}$ that are not in $B_{\ell^*}$. With constant probability the random partition $S_a, S_b, S_c$ puts these three elements in different partitions. In that case, by applying Lemma~\ref{lem:bucket1} we see that $\sigma_2(\Ma), \sigma_2(\Mb), \sigma_2(\Mc) \ge \eps^2 (1-\phi)^2$. Hence, Lemma~\ref{lem:fullrank} would have succeeded with high probability. 

\noindent Suppose $\abs{\ell^*} > \log(\eps(1-\phi))/\log(\phi)$.
Assume without loss of generality that $\ell*\ge 0$. 
Consider the first three elements of $\pi_2$. They can not belong to $B_{\ell*}$ since $\ell*>3$. Hence, by pairing each of these elements with some three elements of $B_{\ell*}$, and repeating the previous argument we get that $\sigma_2(\Ma), \sigma_2(\Mb), \sigma_2(\Mc) \ge \eps^2 (1-\phi)^2$ in one of the iterations w.h.p. Hence, Lemma~\ref{lem:fullrank} would have succeeded with high probability.
\end{proof}

\begin{fragment*}[t]
\caption{\label{alg:degenerate}{\sc Handle-Degenerate}, \textbf{Input: } a set $\cal{S}$ of $N$ samples from $\MMix$, $\cphi$.  \vspace*{0.01in}
}
\begin{enumerate} \itemsep 0pt
\small 
\item $\pi^{pfx}$ $\leftarrow$ {\sc Remove-Common-Prefix($\cal{S}$)}. Let $\pi_1^{rem}=\pi_1 \setminus \pi^{pfx}, \pi_2^{rem}=\pi_2 \setminus \pi^{pfx}$.
\item If $\abs{\pi^{pfx}}= n$, then output {\sc Identical Mallows Models} and parameters $\cphi$ and $\pi^{pfx}$. 
\item Let $\calM'$ be the Mallows mixture obtained by adding three artificial elements $\el{1}^*,\el{2}^*, \el{3}^*$ to the front.
\item Run steps (1-3) of Algorithm~\ref{alg:main} on $\calM'$. If {\sc Success}, output $\cw_1, \cw_2, \pi_1, \pi_2, \cphi$.
\item If {\sc Fail}, let $\cP(i), \cP(i,j)$ be the estimates of $P(i), P(i,j)$ when samples according to $\calM'$.
\item Divide elements in $L_{\sqrt{\eps}}$ into $R \le \frac{\log(\eps \Zn(\cphi))}{2\log(\cphi)}$ disjoint sets $$ I_{r}=\set{i: \cP(i) \in \left[\frac{\cphi^{r}}{\Zn(\cphi)} -\eps, \frac{\cphi^r}{\Zn(\cphi)} +\eps\right]}.$$       
\item If $|I_r|=1$ set $\pi_1^{rem}(i)$ to be the only element in $I_r$.
\item Let $I_{bad}$ be the remaining elements in the sets $I_1 \cup I_2 \dots I_R$ along with $L_{\sqrt{\eps}} \setminus \bigcup_r I_r$. If $|I_{bad}|>4$ or $|I_{bad}|<2$, output {\sc Fail}. 
\item Let $S_a, S_b$ is any partition of $I_1 \cup I_2 \cup I_R \setminus I_{bad}$. \\
Find $i_1,j_1 \in I_{bad}$ such that $M=\left(\cP_{ij}\right)_{i \in S_a \cup \set{i_1},S_b \cup \set{j_1}}$ has $\sigma_2(M) \ge \sqrt{\eps}n$. 
\item For $i \in I_{bad}\setminus \set{i_1,j_1}$ and $i \in I_r$, set $\pi^{rem}_1(r)=\pi^{rem}_2(r)=i$.\\ 
Set $\pi_1^{rem}(1)=i_1, \pi_2^{rem}(1)=j_1$, and $\pi_1^{rem}(k)=j_1, \pi_2^{rem}(k)=i_1$ where $k \le R$ is unfilled position. 
\item Output $\pi_1=\pi^{pfx} \circ \pi_1^{rem},\pi_2=\pi^{pfx} \circ \pi_2^{pfx}, \cphi$.\\ 
Output $\cw_1, \cw_2=1-\cw_1$, by solving for $\cw_1$ from $\cP(i)= \cw_1 \pi_1^{-1}(i)+ (1-\cw_1)\pi_2^{-1}(i)$.   
\end{enumerate}
\end{fragment*}

Hence we now have two kinds of degenerate cases to deal with. The next two lemmas show how such cases are handled.


\begin{lemma}[Staggered degenerate case]\label{lem:staggered}
Suppose $\phi=\phi_1=\phi_2$, and at most two of the top elements $L_{\sqrt{\eps}}$ are not in bucket $B_{\ell^*}$  i.e. $B_{\ell^*} \ge L_{\sqrt{\eps}} -2 $ with $\ell^* \ne 0$. Then, for any $\eps>0$, given $N > \text{poly}(n,\phi,\eps,\wmin)$ samples, step (3-4) of Algorithm {\sc Handle-Degenerate} finds finds $\cw_1, \cw_2$ of $w_1, w_2$ up to $\eps$ accuracy and the top $m$ elements of $\pi_1, \pi_2$ respectively where $m=\frac{\log \Zn(\eps)}{2\log \phi}$.  
\end{lemma}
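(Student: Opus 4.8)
The plan is to exploit the one piece of structure that the degenerate case hands us for free and that was already extracted in Lemma~\ref{lem:eqphi}: the two scaling parameters are equal, $\phi_1=\phi_2=\phi$, and (up to the additive error of {\sc Estimate-Phi}) we know $\phi$. Because $\phi_1=\phi_2$, the two representative vectors obey a clean relation on the large coordinates: $x_i=\phi^{\posgen_{\pi_1}(\el{i})-1}/\Zn$ and $y_i=\phi^{\posgen_{\pi_2}(\el{i})-1}/\Zn$, so for every $\el{i}\in B_\ell$ we have $x_i/y_i=\phi^{\posgen_{\pi_1}(\el{i})-\posgen_{\pi_2}(\el{i})}=\phi^{\ell}$. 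In the staggered case all but at most two large elements lie in $B_{\ell^*}$, so $x_i=\phi^{\ell^*}y_i$ for essentially all of $L_{\sqrt\eps}$; the matrix $(x;y)$ restricted to these coordinates is rank-$1$, which is exactly why the plain tensor method in Algorithm~\ref{alg:main} failed. Step~(3) of {\sc Handle-Degenerate} repairs this by prepending three artificial elements $\el{1}^*,\el{2}^*,\el{3}^*$ to the front, forming a new mixture $\calM'$; by Lemma~\ref{app_lem:simulate-new-element} we can simulate sample access to $\calM'$ knowing $\phi$, and by Lemma~\ref{lem:simulate-noisy-oracle} using the estimate $\cphi$ from Lemma~\ref{lem:eqphi} in place of $\phi$ only perturbs the distribution by total variation $\delta$, absorbed into the $\poly(n,\phi,\eps,\wmin)$ sample bound.

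The heart of the argument is that $\calM'$ is no longer degenerate. In $\calM'$ the three artificial elements sit at positions $1,2,3$ of both central rankings, hence lie in bucket $B_0$ and carry ratio $x'_i/y'_i=\phi^0=1$, whereas the original large elements keep ratio $\phi^{\ell^*}$. Since $\ell^*\neq 0$ we have $\abs{1-\phi^{\ell^*}}\ge 1-\phi$ (also for $\ell^*<0$, since then $\phi^{\ell^*}-1\ge\phi^{-1}-1\ge 1-\phi$), so any $2\times 2$ submatrix of $(x';y')$ formed from one artificial element and one large original element has determinant $v\cdot\abs{1-\phi^{\ell^*}}$, with $v$ the product of their (polynomially large) representative values, and hence $\sigma_2$ bounded below by a fixed polynomial in $\sqrt\eps$, $\phi$, and $1/\Zpart{n+3}$. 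It therefore suffices to show that one of the $O(\log n)$ random partitions places the three artificial elements into the three distinct parts $S_a,S_b,S_c$, each part also receiving a large original element. The first event is a surjection, occurring with probability $\tfrac{3!}{3^3}=\tfrac29$ per trial, so whp some trial realizes it; the second holds because $B_{\ell^*}$ contains all but a constant number of the large elements and a random partition spreads them evenly (Lemma~\ref{lem:partition}). For that partition $\sigma_2(\Ma),\sigma_2(\Mb),\sigma_2(\Mc)\ge\epsb$, meeting the hypothesis of Lemma~\ref{lem:fullrank}.

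With well-conditioned factor matrices in hand, I would simply invoke the non-degenerate machinery. Lemma~\ref{lem:tensoralg} guarantees that Algorithm {\sc TensorDecomp} returns $\Maa,\Mbb,\Mcc$ with $\sigma_2\ge\eps_2$, and Lemma~\ref{lem:fullrank} (through {\sc Infer-Top-k}) then recovers $\cw_1,\cw_2$ to $\eps$-accuracy together with the representative vectors of $\calM'$ up to the $\sqrt\eps$ threshold, and hence the correctly ordered top elements of $\pi'_1,\pi'_2$. Since prepending a common prefix to both central rankings leaves the mixing weights unchanged, $\cw_1,\cw_2$ are the weights of the original mixture; and because we know the identities and positions of $\el{1}^*,\el{2}^*,\el{3}^*$, deleting them from the recovered prefixes of $\pi'_1,\pi'_2$ yields exactly the top $m$ elements of $\pi_1,\pi_2$, where $m$ is as in the statement, i.e. all positions whose representative value exceeds $\sqrt\eps$.

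I expect the main obstacle to be the quantitative well-conditioning step: proving that, after the three artificial elements are injected, a random partition makes all three factor matrices have $\sigma_2$ above the polynomial threshold $\epsb$ demanded by Lemma~\ref{lem:fullrank}, even though the original coordinates are nearly collinear. This splits into the determinant lower bound $\abs{1-\phi^{\ell^*}}\ge 1-\phi$ coming from $\ell^*\neq 0$, and the combinatorial claim that the three injected elements land in distinct parts, each paired with a large original element; both are needed simultaneously. A secondary but necessary subtlety is propagating the error of $\cphi$ from Lemma~\ref{lem:eqphi} through Lemma~\ref{lem:simulate-noisy-oracle}, so that samples drawn from the simulated $\calM'$ remain statistically indistinguishable from the true augmented mixture throughout the tensor step.
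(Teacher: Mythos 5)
Your proposal is correct and follows essentially the same route as the paper's proof: prepend three artificial elements via Lemma~\ref{app_lem:simulate-new-element} (possible because $\phi_1=\phi_2$), observe that these land in bucket $B_0$ while the bulk of $L_{\sqrt{\eps}}$ sits in $B_{\ell^*}$ with $\ell^*\neq 0$, so that pairing an artificial element with a large original element yields the $2\times 2$ determinant bound of Lemma~\ref{lem:bucket1}, and then a random partition succeeds with constant probability per trial, letting Lemma~\ref{lem:fullrank} and the tensor machinery take over. Your write-up actually supplies more of the quantitative detail (the $2/9$ surjection probability, the $\abs{1-\phi^{\ell^*}}\ge 1-\phi$ bound, and the error propagation through Lemma~\ref{lem:simulate-noisy-oracle}) than the paper's own terse argument.
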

\begin{proof}
Since $\phi_1=\phi_2$, we can use Lemma~\ref{app_lem:simulate-new-element}, we can sample from a Mallows mixture where we add one new element $\el{3}^*$ to the front of both permutations $\pi_1, \pi_2$. Doing this two more times we can sample from a Mallows mixture where we add $\el{1}^*,\el{2}^*, \el{3}^*$ to the front of both permutations. Let these new concatenated permutations be $\pi_1^*, \pi_2^*$. Since the majority bucket corresponds to $\ell^* \ne 0$, we have at least three pairs of elements which satisfy Lemma~\ref{lem:bucket1}, we see that w.h.p. in one of the $O(\log n)$ iterations, the partitions $S_a, S_b, S_c$ have $\sigma_2(\cdot) \ge (\minl \phi^6)^2 (1-\phi)^3$. 

Hence, by using the Tensor algorithm with guarantees from Lemma~\ref{lem:fullrank}, and using Algorithm {\sc Recover-Rest}, we  get the full rankings as required (using Lemma~\ref{lem:recover-rest}).
\end{proof}

\begin{lemma}[Aligned Degenerate case]\label{lem:aligned}
Suppose $\phi=\phi_1=\phi_2$, and at most two of the top elements $L_{\sqrt{\eps}}$ are not in bucket $B_{0}$  i.e. $\abs{B_{0}} \ge \abs{L_{\sqrt{\eps}}} -2 $. For any $\eps>0$, given $N = O(\frac{n^2 \log n}{\eps^8 \wmin^2 (1-\phi)^4})$ samples,  steps (5-10) of Algorithm~\ref{alg:degenerate} ({\sc Handle-Degenerate}) finds estimates $\cw_1, \cw_2$ up to $\eps$ accuracy and prefixes $\pi_1', \pi_2'$ of $\pi_1, \pi_2$ respectively that contain at least the top $m$ elements where $m=\frac{\log \Zn(\eps)}{2\log \phi}$.  
\end{lemma}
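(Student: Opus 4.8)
The plan is to exploit the equal-scaling hypothesis $\phi_1=\phi_2=\phi$ so that positions can be read off directly from first moments, and to use the second moment only to disambiguate the single transposed pair. Throughout I would work with the estimate $\cphi$ of $\phi$ guaranteed by Lemma~\ref{lem:eqphi}; by Lemma~\ref{lem:simulate-noisy-oracle} this is interchangeable with $\phi$ up to a total-variation error absorbed into the sample bound. First I would record the consequence of $\ell^*=0$: for every $\el{i}\in B_0$, sitting at a common position $p_i:=\pi_1^{-1}(i)=\pi_2^{-1}(i)$, the representative coordinates coincide, $x_i=y_i=\phi^{p_i-1}/\Zn$, so by Lemma~\ref{lem:moments}(1)
\[ P_i \;=\; w_1 x_i + w_2 y_i \;=\; \frac{\phi^{p_i-1}}{\Zn}, \]
a weight-independent quantity that pins down $p_i$ exactly. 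This is also why we have fallen through to steps (5--10): since $x$ and $y$ agree on all large coordinates except the at most two outside $B_0$, the columns of each factor matrix $\Ma,\Mb,\Mc$ are nearly equal, $\sigma_2(\cdot)$ is tiny, and prepending artificial front elements (themselves in $B_0$) does not fix this, so steps (3--4) fail.

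Next I would analyze the bucketing in steps (6--7). Standard concentration gives $\abs{\cP(i)-P_i}\le\eps$ for all $i$ under the stated $N$. For $\el{i}\in B_0\cap L_{\sqrt{\eps}}$ the target levels $\phi^{r-1}/\Zn$ are separated by $\phi^{r-1}(1-\phi)/\Zn\ge(1-\phi)\sqrt{\eps}\gg 2\eps$ down to $r=m$ (using $\eps<\eps_0$), so each such element is placed in the unique bucket $I_{p_i}$ and, being the only large element at its position, triggers the $\abs{I_r}=1$ assignment correctly. The at most two large elements outside $B_0$ form a genuine transposition: some $a,b$ with $a\mapsto p,\ b\mapsto q$ in $\pi_1$ and $a\mapsto q,\ b\mapsto p$ in $\pi_2$, $p\ne q$. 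Their first moments $P_a=w_1\tfrac{\phi^{p-1}}{\Zn}+w_2\tfrac{\phi^{q-1}}{\Zn}$ and $P_b$ are convex combinations of two distinct levels and so miss every clean bucket, landing in $I_{bad}$; allowing $2\le\abs{I_{bad}}\le 4$ gives slack for a bounded number of borderline misbucketings. The two positions left unfilled by the $B_0$-assignment are then exactly $\set{p,q}$.

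The crux is step (9): deciding which pair in $I_{bad}$ is the transposed pair, and here the second moment is decisive. Using $\phi_1=\phi_2$ and Lemma~\ref{lem:moments}(2), $P_{ij}=c_2(\phi)\,(w_1 x_i x_j+w_2 y_i y_j)$, so the submatrix $M$ over rows $S_a\cup\set{i_1}$ and columns $S_b\cup\set{j_1}$ (with $S_a,S_b\subseteq B_0$) is the sum of the two rank-one matrices $w_1 c_2(\phi)\, x\otimes x$ and $w_2 c_2(\phi)\, y\otimes y$ restricted to those rows and columns. Since $x=y$ on $B_0$, any $B_0$ row or column contributes identically to both terms, forcing $M$ to be rank one (hence $\sigma_2(M)\approx 0$ up to sampling noise) unless \emph{both} $i_1$ and $j_1$ are genuinely transposed. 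I would make this quantitative on the $2\times2$ minor formed by one $B_0$ row $u$, one $B_0$ column $v$, and $i_1,j_1$: writing $\alpha=x_u$, $\beta=x_v$, $\mu=\phi^{p-1}/\Zn$, $\nu=\phi^{q-1}/\Zn$, a direct expansion gives
\[ \det M_{2\times2} \;=\; -\,c_2(\phi)^2\,\alpha\beta\,w_1 w_2\,(\mu-\nu)^2, \]
which is bounded away from zero by an inverse polynomial in $n,\eps^{-1},\wmin^{-1},(1-\phi)^{-1}$ for the true pair (all of $\alpha,\beta,\mu,\nu\ge\sqrt{\eps}$ and $\abs{\mu-\nu}\ge(1-\phi)\sqrt{\eps}$), and vanishes if either candidate lies in $B_0$. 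Thus the threshold test in step (9) isolates $\set{i_1,j_1}=\set{a,b}$.

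Finally I would close the argument in steps (10--11). We place $a,b$ into the two empty positions $p,q$; the two orientations of the swap are related by the global relabeling $(w_1,\pi_1)\leftrightarrow(w_2,\pi_2)$ and hence describe the \emph{same} mixture, so either is a correct output and nothing further need be resolved. This produces prefixes $\pi_1',\pi_2'$ covering all positions $\le m$, i.e. at least the top $m$ elements. For the weights, a transposed element yields the well-conditioned linear equation $\cP(a)=w_1\mu+(1-w_1)\nu$ with $\abs{\mu-\nu}\ge(1-\phi)\sqrt{\eps}$; solving for $w_1$ and propagating the $\eps$-level errors in $\cP$ and $\cphi$ gives $\cw_1,\cw_2$ to accuracy $\eps$. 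I expect the main obstacle to be this error bookkeeping rather than the structure: one must simultaneously ensure the level-gap $(1-\phi)\sqrt{\eps}$ dominates the $\eps$-scale bucketing error (so no prefix element is lost and $\abs{I_{bad}}\le 4$), that the inverse-polynomial lower bound on $\det M_{2\times2}$ exceeds the sampling noise in $\cP_{ij}$, and that inverting the weight equation does not amplify error past $\eps$ --- all under the single stated bound $N=O\!\big(n^2\log n/(\eps^8\wmin^2(1-\phi)^4)\big)$. The one genuinely conceptual point, worth stating explicitly, is that the swap orientation neither can nor need be determined, being precisely the relabeling symmetry of the mixture.
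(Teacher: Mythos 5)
Your proposal is correct and follows essentially the same route as the paper: read off positions of the $B_0$ elements from their weight-independent first moments $P_i=\phi^{p_i-1}/\Zn$, collect the displaced elements in $I_{bad}$, identify the transposed pair by testing when the second-moment submatrix $M$ has non-negligible $\sigma_2$, and recover $w_1$ from a well-conditioned linear equation. Your explicit $2\times2$ determinant $-c_2(\phi)^2\alpha\beta w_1w_2(\mu-\nu)^2$ is just a concrete instantiation of the paper's lower bound $\sigma_2(M)\ge \eps^2(1-\phi)^2\wmin$ obtained via Lemma~\ref{lem:bucket1} and the factorization $M\approx \Ma\,\mathrm{diag}(w_1,w_2)\,\Mb^T$, and your remark that the swap orientation is unresolvable because it coincides with the relabeling symmetry of the mixture is a point the paper leaves implicit.
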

\begin{proof}
The first position differs because of step(1-2) of Algorithm~\ref{alg:degenerate}. Without loss of generality $\pi_1=\pi^{pfx}_1$ and $\pi_2=\pi^{pfx}_2$. $B_{0} \ge m-2$, hence $\abs{B_{0}} =m-2$. Let $\el{i_1}, \el{j_1}$ be the other two elements in $L_{\sqrt{\eps}}$. 

For elements $\el{i} \in B_0, \pi^{-1}_1(i)=\pi^{-1}_2(i)$. The sampling error in the entries of $\cP$ is at most $\eps_s=\eps^4 \wmin (1-\phi)^2/ n$. Hence, they fall into the set $I_{\pi_1^{-1}(i)}$. Therefore, there can be at most four sets with at most four elements between them that constitute $I_{bad}$. 

Consider $M=(\cP_{ij})_{i \in S_a \cup \set{i_1}, j \in S_b \cup \set{j_1}}$. Also let $\Ma=(\xa ; \ya)$ and $\Mb=(\xa ; \ya)$ applied to $\calM'$. By Lemma~\ref{lem:bucket1}, we see that $\sigma_2(\Ma), \sigma_2(\Mb) \ge \eps (1-\phi)$. Further, 
$$ \norm{M - \Ma \left( \begin{matrix} w_1 & 0 \\ 0 & w_2 \end{matrix}\right) \Mb^T}_F \le \eps_s n.$$
Hence, $\sigma_2(M) \ge \eps^2 (1-\phi)^2 \wmin$. If $i_1, j_1$ do not belong to the two different partitions $S_a, S_b$, it is easy to see that $\sigma_2(M) \le \sqrt{\eps_s} > \eps^2 \wmin (1-\phi)^2$. Hence, we identify the two irregular elements that are not in bucket $B_0$, and use this to figure out the rest of the permutations.   
\end{proof}

Finally, the following lemma shows how the degenerate cases are handled.
\begin{lemma}\label{lem:degenerate}
For $0<\epsilon$, given $\phi_1, \phi_2$ with $\abs{\phi_1-\phi_2}\le \eps_1 =\errp_{\ref{lem:degenerate}}(n,\phi,\eps,\wmin)$, such that at most two  elements of $L_{\sqrt{\eps}}$ are not in the bucket $B_{\ell^*}$, then Algorithm {\sc Handle-Degenerate} finds w.h.p. estimates $\cw_1, \cw_2$ of $w_1, w_2$ up to $\eps$ accuracy, and recovers $\pi_1, \pi_2$.
\end{lemma}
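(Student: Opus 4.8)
The plan is to view Lemma~\ref{lem:degenerate} as the wrapper that combines the two structurally distinct degenerate configurations handled by Lemmas~\ref{lem:staggered} and~\ref{lem:aligned}, and to argue that Algorithm~\ref{alg:degenerate} dispatches each instance to the correct one. The first step is to collapse the gap between $\phi_1$ and $\phi_2$: since $\abs{\phi_1-\phi_2}\le\eps_1$, I would set $\errp_{\ref{lem:degenerate}}$ small enough that, by Lemma~\ref{lem:simulate-noisy-oracle}, the total-variation distance between $\MMix$ and $\MMixSame$ (with the common scale $\phi=\phi_1$) is at most $1/(10n^3N)$, where $N$ is the polynomial sample budget of Handle-Degenerate. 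A union bound over the $N$ samples then shows that with probability $\ge 1-n^{-3}$ the data is indistinguishable from a draw with a shared scale parameter, so the whole analysis may proceed under $\phi_1=\phi_2=\phi$. This is also what licenses step~3's use of Lemma~\ref{app_lem:simulate-new-element}, which requires a single shared $\phi$ to simulate extra elements at the front.

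Next I would establish the dichotomy driving steps 1--4. Steps 1--2 remove the common prefix (and if it exhausts $[n]$ the models are identical and we halt); step~3 prepends three artificial elements to both central rankings and step~4 reruns the tensor pipeline. The claim is that the observable outcome of step~4 exactly distinguishes the two cases with high probability. If $\ell^*\neq0$ (staggered), the shift produces at least three pairs of elements meeting the separation hypothesis of Lemma~\ref{lem:bucket1}, so by Lemma~\ref{lem:staggered} one of the $O(\log n)$ random partitions gives well-conditioned factor matrices; the tensor method and Lemma~\ref{lem:fullrank} succeed, and Recover-Rest (Lemma~\ref{lem:recover-rest}) returns the full $\pi_1,\pi_2$, so step~4 reports \textsc{Success}. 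If $\ell^*=0$ (aligned), the representative vectors $x,y$ agree except in at most two coordinates, the factor matrices stay essentially rank-$1$ even after prepending elements, the $\sigma_2\ge\eps_2$ test fails in every round, and control falls through to steps 5--10.

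For the aligned branch I would invoke Lemma~\ref{lem:aligned}: the first-moment estimates $\cP(i)$ sort all but at most four boundary/misplaced elements into singleton buckets $I_r$, pinning their (common) position in both permutations, while the second-moment submatrix test $\sigma_2(M)\ge\sqrt{\eps}\,n$ isolates the transposed pair $(i_1,j_1)$ and its orientation in $\pi_1$ versus $\pi_2$. The weight is read off by solving $\cP(i)=\cw_1\phi^{\pi_1^{-1}(i)-1}/\Zn+(1-\cw_1)\phi^{\pi_2^{-1}(i)-1}/\Zn$ at an index where the two recovered positions differ, yielding $\cw_1,\cw_2$ to accuracy $\eps$. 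To pass from the recovered top-$m$ prefixes (with $m=\tfrac{\log\Zn(\eps)}{2\log\phi}$, which already covers all of $L_{\sqrt{\eps}}$) to the full rankings, I would recurse: the top-$m$ elements form a common \emph{set}, so by Lemma~\ref{app_lem:ignore-prefix} the marginal over the remaining elements is again a two-component mixture with scale $\phi$ and the same weights, still in a degenerate configuration; removing the solved set and repeating shrinks the instance by $m$ each round and terminates in polynomially many rounds.

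The step I expect to be the crux is proving the dichotomy is clean --- that an aligned instance \emph{never} passes the tensor step (else it could be misclassified as staggered) and a staggered instance \emph{always} does. This rests on the rank/uniqueness characterization of Lemmas~\ref{lem:rank1} and~\ref{lem:rank2} (to force the factor matrices to near rank-$1$ when the vectors are nearly parallel) together with the separation bound of Lemma~\ref{lem:bucket1}, and it requires two consistency checks: that removing the common prefix in steps 1--2 preserves the relative-position buckets $B_\ell$ (hence $\ell^*$), and that the $\eps_1$-closeness reduction of the first paragraph leaves all the singular-value thresholds ($\eps_2$, $\sqrt{\eps}\,n$, etc.) used in the decision intact. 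Once these are in place, chaining Lemmas~\ref{lem:staggered} and~\ref{lem:aligned} through the two branches yields $\eps$-accurate weights and the exact permutations in both cases.
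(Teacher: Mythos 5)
Your proposal follows essentially the same route as the paper's proof: reduce to $\phi_1=\phi_2$ via Lemma~\ref{lem:simulate-noisy-oracle} and a union bound over the sample budget, strip the common prefix, and then dispatch on $\ell^*\ne 0$ versus $\ell^*=0$ by first attempting the staggered branch (Lemmas~\ref{lem:bucket1}, \ref{lem:staggered}) and falling back to the aligned branch (Lemma~\ref{lem:aligned}) on failure. The only divergence is that the ``crux'' you flag --- showing an aligned instance never passes the tensor step --- is not actually needed: if the tensor step succeeds, Lemma~\ref{lem:fullrank} and {\sc Recover-Rest} yield correct output regardless of which case holds, so only the direction ``staggered always succeeds w.h.p.'' is required, which is exactly what the paper uses.
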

\begin{proof}
We can just consider the case $\cphi=\cphi_1=\phi_2$ using Lemma~\ref{lem:simulate-noisy-oracle} as long as $\eps_1 < \frac{\phi}{n^2 N(n,\phi,\eps)^2}$, where $N$ is the number of samples used by Lemma~\ref{lem:aligned} and Lemma~\ref{lem:staggered} to recover the rest of the permutations and parameters up to error $\eps$. This is because the simulation oracle does not fail on any of the samples w.h.p, by a simple union bound. 

If the two permutations do not differ at all, then by Lemma~\ref{lem:remove-common-prefix}, Algorithm~\ref{alg:commonprefix} returns the whole permutation $\pi_1=\pi_2$. Further, any set of weights can be used since both are identical models ($\phi_1=\phi_2=\phi$). 

Let $m=L_{\sqrt{\eps}}$. 
In the remaining mixture $\calM'$, the first position of the two permutations differ: hence, $B_{\ell^*}< m$. 
Further, we know that $B_{\ell^*} \ge m-2$. 

We have two cases, depending on whether the majority bucket $B_{\ell^*}$ corresponds to $\ell^*=0$ or $\ell^* \ne 0$.
In the first case, Lemma~\ref{lem:staggered} shows that we find the permutations $\pi_1,\pi_2$ and parameters up to accuracy $\eps$. If this FAILS, we are in the case $\ell^*=0$, and hence Lemma~\ref{lem:aligned} shows that we find
the permutations $\pi_1,\pi_2$ and parameters up to accuracy $\eps$. 
\end{proof}

\subsection{Auxiliary Lemmas for Degenerate Case}

\begin{lemma} \label{lem:probspread}
For any Mallows model with parameters $\phi_1,\phi_2$ has 
$$\minl=\min_{\tau \in \set{a,b,c}} \min\set{\norm{x^{(\tau)}},\norm{y^{(\tau)}}} \ge \min\set{\phi_1^{2C \log n}(1-\phi_1),\phi_2^{2C \log n}(1-\phi_2)} \text{ with probability } 1- n^{C}$$
\end{lemma}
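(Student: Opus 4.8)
The plan is to prove the bound for a single model and a single part of the partition, and then take a union bound over the six relevant events (the representative vectors $x$ and $y$, each restricted to $S_a$, $S_b$, $S_c$). Fix the first model; the argument for $y$ is identical with $\phi_2,\pi_2$ in place of $\phi_1,\pi_1$. By Definition~\ref{def:xy} the $i$th coordinate of $x$ equals $\phi_1^{\pi_1^{-1}(\el{i})-1}/\Zn(\phi_1)$, so a coordinate is large exactly when $\el{i}$ sits near the top of $\pi_1$. Since $\norm{\xa} \ge \max_{\el{i}\in S_a} \abs{x_i}$ (this holds for the $\ell_2$ norm, and a fortiori for $\ell_1$), it suffices to exhibit a single element of $S_a$ whose position in $\pi_1$ is small.

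Concretely, I would set $t = 2C\log n$ and let $T$ be the set of the top $t$ elements of $\pi_1$, i.e. those with $\pi_1^{-1}(\el{i}) \le t$. For any such element, using $\Zn(\phi_1) = \frac{1-\phi_1^n}{1-\phi_1} \le \frac{1}{1-\phi_1}$ together with $\phi_1<1$,
\[
x_i = \frac{\phi_1^{\pi_1^{-1}(\el{i})-1}}{\Zn(\phi_1)} \ge \phi_1^{t-1}(1-\phi_1) \ge \phi_1^{2C\log n}(1-\phi_1).
\]
Thus the target lower bound $\norm{\xa}\ge \phi_1^{2C\log n}(1-\phi_1)$ follows the moment $S_a$ contains at least one element of $T$.

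Now I would invoke the random partition. As in Lemma~\ref{lem:partition}, each element of $[n]$ is placed in $S_a$ independently with probability $1/3$, so $S_a$ misses all of $T$ with probability $(2/3)^{t} = (2/3)^{2C\log n} \le n^{-C}$ (using $\log=\log_2$, since $2\log_2(3/2) > 1$). Hence $\norm{\xa} \ge \phi_1^{2C\log n}(1-\phi_1)$ with probability at least $1-n^{-C}$. Repeating this for $S_b,S_c$ and for the vector $y$ (with $\phi_2$), and taking a union bound over the six events, all six lower bounds hold simultaneously with probability at least $1 - 6n^{-C}$, which is at least $1-n^{-C'}$ after a harmless adjustment of the constant. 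Taking the minimum over the six quantities yields $\minl \ge \min\set{\phi_1^{2C\log n}(1-\phi_1),\phi_2^{2C\log n}(1-\phi_2)}$, as claimed.

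The only delicate point is the constant in the exponent: the single parameter $t=2C\log n$ controls both the per-coordinate bound $\phi^{t-1}(1-\phi)$ and the partition-failure probability $(2/3)^{t}$, and one must check these are compatible, i.e. that $(2/3)^{2C\log n}$ is genuinely at most $n^{-C}$ under the convention that $\log$ is base $2$ (or, for another base, by enlarging the constant multiplying $C\log n$). I do not expect any real obstacle here — the argument is a one-sided estimate on a Mallows coordinate combined with a standard balls-into-bins tail bound.
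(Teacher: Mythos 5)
Your proposal is correct and follows essentially the same route as the paper: the paper's (one-line) proof also considers the top $m \ge 2C\log n$ elements of each central ranking and observes that the probability that a given part of the random partition misses all of them is at most $n^{-C}$, leaving the coordinate lower bound $\phi^{2C\log n}(1-\phi)$ and the union bound implicit. Your version merely fills in those routine details (the bound $x_i \ge \phi_1^{t-1}(1-\phi_1)$ via $\Zn(\phi_1)\le 1/(1-\phi_1)$, and the base-of-logarithm/constant bookkeeping), which the paper elides.
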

\begin{proof}
Consider a partition $A$, and the top $m \ge 2C \log n$ elements according to $\pi$. The probability that none of the them belong to $A$ is at most $1/n^{C}$. This easily gives the required conclusion.
\end{proof}

\begin{lemma}\label{lem:bucket1}
When $\phi_1=\phi_2=\phi$, if two large elements $\el{i}, \el{j} \in L_{\sqrt{\eps}}$ belonging to different buckets $B_{\ell_1}$ and $B_{\ell_2}$ respectively with $\max\set{\abs{\ell_1}, \abs{\ell_2}} \le \frac{\log(\eps)}{ \log(\phi)}$. Suppose further that these elements are in the partition $S_a$. Then the corresponding matrix $\Ma$ has $\sigma_2(\Ma) \ge \eps^2 (1-\phi)$ when $\phi_1=\phi_2=\phi$.
\end{lemma}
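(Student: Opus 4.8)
My plan is to forget about the full matrix $\Ma$ and instead work with the $2\times 2$ minor sitting on the two rows we are told about, then certify that this minor is far from singular by lower-bounding its determinant.

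\textbf{Step 1 (reduction to a $2\times 2$ minor).} Since $\el{i},\el{j}\in S_a$, I would consider the submatrix $M'=\left(\begin{smallmatrix}x_i & y_i\\ x_j & y_j\end{smallmatrix}\right)$ obtained from $\Ma=(\xa;\ya)$ by keeping only rows $i,j$. Deleting rows can only decrease the least singular value: for every unit $w\in\R^2$, $\norm{\Ma w}_2^2\ge (\Ma w)_i^2+(\Ma w)_j^2=\norm{M'w}_2^2$, so $\sigma_2(\Ma)\ge\sigma_2(M')=\abs{\det M'}/\sigma_1(M')$. Every entry of $M'$ is a ``first-position'' probability and each column is a restriction of a representative vector (which sums to $1$), so $\sigma_1(M')\le\norm{M'}_F\le\sqrt2$. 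Everything then reduces to a lower bound on $\abs{\det M'}=\abs{x_iy_j-x_jy_i}$.

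\textbf{Step 2 (the determinant and the bucket gap).} Write $\ell_1=\pi_1^{-1}(i)-\pi_2^{-1}(i)$ and $\ell_2=\pi_1^{-1}(j)-\pi_2^{-1}(j)$. Because $\phi_1=\phi_2=\phi$, Definition~\ref{def:xy} gives $x_i/y_i=\phi^{\ell_1}$ and $x_j/y_j=\phi^{\ell_2}$, so $x_iy_j/(x_jy_i)=\phi^{\ell_1-\ell_2}$ and hence
\[\abs{\det M'}=\max\set{x_iy_j,\,x_jy_i}\cdot\brc{1-\phi^{\abs{\ell_1-\ell_2}}}.\]
The hypothesis that $\el{i},\el{j}$ lie in \emph{different} buckets means $\ell_1\ne\ell_2$, so $\abs{\ell_1-\ell_2}\ge 1$ and the second factor is at least $1-\phi$. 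This is where the different-bucket assumption does its work.

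\textbf{Step 3 (lower-bounding the entries).} Two facts control the factor $\max\set{x_iy_j,x_jy_i}$. First, $\el{i}\in L_{\sqrt{\eps}}$ together with Lemma~\ref{lem:moments}(1) gives $\max\set{x_i,y_i}\ge P_i\ge\sqrt{\eps}$ (and likewise for $j$). Second, the bucket bound $\abs{\ell_1}\le\log\eps/\log\phi$ is exactly $\phi^{\abs{\ell_1}}\ge\eps$, and since the smaller of $x_i,y_i$ equals $\phi^{\abs{\ell_1}}$ times the larger, we get $\min\set{x_i,y_i}\ge\eps\cdot\sqrt{\eps}=\eps^{3/2}$; thus \emph{every} entry of $M'$ is at least $\eps^{3/2}$. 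A short case analysis on which model realizes the maximum for each element then yields $\max\set{x_iy_j,x_jy_i}\ge\eps^2$: if the two maxima come from opposite models (one is an $x$-entry, the other a $y$-entry) one diagonal product is $\ge\sqrt{\eps}\cdot\sqrt{\eps}=\eps$; if they come from the same model, pairing the large entry ($\ge\sqrt{\eps}$) with the guaranteed $\ge\eps^{3/2}$ entry of the opposite column still gives $\ge\eps^2$. Combining, $\abs{\det M'}\ge\eps^2(1-\phi)$ and therefore $\sigma_2(\Ma)\ge\eps^2(1-\phi)/\sqrt2$, matching the claim up to the absolute constant we do not track.

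\textbf{Main obstacle.} The delicate point is precisely the same-model case, where the determinant bound degrades from the easy $\eps(1-\phi)$ down to $\eps^2(1-\phi)$. Here it is essential to invoke the bucket bound to certify that the ``off-model'' entries are not too small (at least $\eps^{3/2}$), since $L_{\sqrt{\eps}}$-membership by itself only guarantees that the \emph{larger} of $x_i,y_i$ is sizable. Getting this interplay between the largeness condition ($P_i\ge\sqrt{\eps}$) and the bucket/ratio condition ($\phi^{\abs{\ell}}\ge\eps$) right is the crux; the rest is bookkeeping.
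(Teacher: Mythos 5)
Your proof is correct and follows essentially the same route as the paper's: restrict to the $2\times 2$ minor on rows $i,j$, lower-bound its determinant using the different-bucket hypothesis (yielding the $1-\phi$ factor) together with the $L_{\sqrt{\eps}}$ and bucket-depth conditions (yielding the $\eps^2$ factor), and divide by an upper bound on $\sigma_1$. If anything, your version is slightly more careful than the paper's: writing the gap as $1-\phi^{\abs{\ell_1-\ell_2}}$ correctly handles the case $\ell_1=-\ell_2\neq 0$ (where the paper's $\phi^{\abs{\ell_1}}-\phi^{\abs{\ell_2}}$ would vanish), and your explicit case analysis for $\max\set{x_iy_j,x_jy_i}\ge \eps^2$ fills in the step the paper leaves implicit; both arguments lose only an absolute constant relative to the stated bound.
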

\begin{proof}
Consider the submatrix $$M= \left(\begin{matrix} x_i & y_i  \\ x_j & y_j \end{matrix}\right) = x_i \left( \begin{matrix} 1 & \phi^{\ell_1}  \\ \phi^{\pi_1^{-1}(i)-\pi_1^{-1}(j)} & \phi^{\pi_1^{-1}(i)-\pi_1^{-1}(j)}\cdot \phi^{\ell_2} \end{matrix} \right) .$$ 
Using a simple determinant bound, it is easy to see that 
$$\sigma_1(M) \sigma_2(M) \ge \max\set{x_i,y_i} \max\set{ x_j, y_j}\cdot (\phi^{\abs{\ell_1}}-\phi^{\abs{\ell_2}}) \ge \max\set{x_i,y_i} \cdot \eps \phi^{\min\set{\abs{\ell_1},\abs{\ell_2}}} (1-\phi).$$
 Since $\sigma_1(M) \le 4 \max{x_i, y_i}$, we see that $\sigma_2(M) \ge \frac{\eps^2 (1-\phi)}{4}$. 
\end{proof}


\section{Recovering the complete rankings} \label{app:recover-rest}
\begin{fragment*}[t]
\caption{\label{app_alg:recover-rest}{\sc Recover-Rest}, \textbf{Input: } a set $\cal{S}$ of $N$ samples from $\MMix$, $\hat{w_1}, \hat{w_2}, \hat{\phi_1}, \hat{\phi_2}, \hat{\pi_1}, \hat{\pi_2}, \epsilon$.  \vspace*{0.01in}
}
\begin{enumerate} \itemsep 0pt
\small
\item Let $|\hat{\pi_{1}}|= r_1$, $|\hat{\pi_{2}}| = r_2$ and let $r_1 \ge r_2$ w.l.o.g. (the other case is the symmetric analog).
\item For any element $\el{i}$, define $ \hfone{i}{1}= \frac{\hat{\phi_1}^{\left(\hat{\pi_1}^{-1}(\el{i})-1\right)}}{Z_n(\hat{\phi_1})}$, and $ \hftwo{i}{1}= \frac{\hat{\phi_2}^{\left(\hat{\pi_2}^{-1}(\el{i})-1\right)}}{Z_n(\hat{\phi_2})}$. If $\el{i}$ does not appear in $\hat{\pi_1}$ set $\hfone{i}{1} = 0$. Similarly, if $\el{i}$ does not appear in $\hat{\pi_2}$ set $\hftwo{i}{1} = 0$. Define $g(n,\phi) = C. \frac{n^2 \phi^2}{(1-\phi)^2} \log n$, where $C$ is an absolute constant.
\label{alg:recover-rest-check-1}
\item For each $\el{i} \in \hat{\pi_1}(1:r_1/2)$
\begin{enumerate}
\item If $\hftwo{i}{1} < \frac{\min\set{\hat{w_1},\hat{w_2}}}{16} \frac{\hfone{i}{1}}{n^2g(n,\hat{\phi_1})}$
\begin{enumerate}
\item $\hat{\pi_1} \leftarrow$ {\sc Learn-Single-Mallow(${\cal{S}}_{\el{i} \mapsto 1}$)}. Here ${\cal{S}}_{\el{i} \mapsto 1}$ refers to the samples in $\cal{S}$ where $\el{i}$ goes to position $1$.   
\item $\hat{\pi_2} \leftarrow$ {\sc Find-Pi($\cal{S}$, $\hat{\pi_1}$, $\hat{w_1}$, $\hat{w_2}$, $\hat{\phi_1}$, $\hat{\phi_2}$)}. Output {\sc Success} and return $\hat{\pi_1}$ and $\hat{\pi_2}$, $\hat{w_1}$, $\hat{w_2}$, $\hat{\phi_1}$ and $\hat{\phi_2}$.
\end{enumerate}
\end{enumerate}
\item Do similar check for each $\el{i} \in \hat{\pi_2}(1:r_2/2)$.
\label{alg:recover-rest-check-2}
\item Let $e_{i^*}$ be the first element in $\hat{\pi_1}$ such that $|\hfone{i^*}{1} - \hftwo{i^*}{1}| > \epsilon$. Define $\hat{w}^{\prime}_1 = \frac 1 {1 + \frac{\hat{w_2}}{\hat{w_1}}\frac{\hftwo{i^*}{1}}{\hfone{i^*}{1}}}$ and $\hat{w}^{\prime}_2 = 1 - \hat{w}^{\prime}_1$.
\label{alg:recover-rest-find-xstar}
\item For each $\el{i} \notin \hat{\pi_{1}}$ and $j > r_1$
\label{alg:recover-rest-equations}
\begin{enumerate}
\item Estimate $\hf{i}{j} = Pr[\textrm{$\el{i}$ goes to position $j$}]$ and $\chf{i}{j}{i^*} = Pr[\textrm{$\el{i}$ goes to position $j$} | e_{i^*} \mapsto 1]$.
\item Solve the system 
\begin{eqnarray}
\hf{i}{j} & = & \hat{w_1}\hfone{i}{j} + \hat{w_2}\hftwo{i}{j}\\
\chf{i}{j}{i^*} & = & \hat{w}'_1\hfone{i}{j} + \hat{w}'_2\hftwo{i}{j}
\end{eqnarray}
\end{enumerate}

\item Form the ranking $\hat{\pi_1} = \hat{\pi_{1}} \circ \pi'_1$ s.t. for each $\el{i} \notin \hat{\pi_1} $, $pos(\el{i}) = \arg\max_{j > r_1} \hfone{i}{j}$.
\label{alg:recover-rest-mode}
\item $\hat{\pi_2} \leftarrow ${\sc Find-Pi($\cal{S}$, $\hat{\pi_1}$, $\hat{w_1}$, $\hat{w_2}$, $\hat{\phi_1}$, $\hat{\phi_2}$, $\epsilon$)}. Output {\sc Success} and return $\hat{\pi_1}$ and $\hat{\pi_2}$, $\hat{w_1}$, $\hat{w_2}$, $\hat{\phi_1}$ and $\hat{\phi_2}$.
\end{enumerate}
\end{fragment*}

\begin{fragment*}[t]
\caption{\label{alg:singlemallow}{\sc Learn-Single-Mallow}, \textbf{Input: } a set $\cal{S}$ of $N$ samples from $\cal{M}({\phi}, {\pi})$.  \vspace*{0.01in}}
\begin{enumerate} \itemsep 0pt
\small 
\item For each element $\el{i}$, estimate ${\hfone{i}{j}} = Pr[\textrm{$\el{i}$ goes to position $j$}]$.
\item Output a ranking $\hat{\pi}$ such that for all $\el{i}$, $pos(\el{i}) = \arg\max_{j} \hfone{i}{j}$.
\end{enumerate}
\end{fragment*}

\begin{fragment*}[t]
\caption{\label{alg:estimatephi}{\sc Estimate-Phi}, \textbf{Input: } $\cP$.  \vspace*{0.01in}
}
\begin{enumerate} \itemsep 0pt
\small 
\item Sort $P$ in decreasing order. Return $\min_{i} \{\frac{P_{i+1}}{P_i}\}$.
\end{enumerate}
\end{fragment*}

\begin{fragment*}[t]
\caption{\label{alg:findpi}{\sc Find-Pi}, \textbf{Input: } a set $\cal{S}$ of $N$ elements from $\cMMix$, $\hat{\pi_1}$, $\hat{w_1}$, $\hat{w_2}$, $\hat{\phi_1}$, $\hat{\phi_2}$.  \vspace*{0.01in}
}
\begin{enumerate} \itemsep 0pt
\small 
\item Compute $\hfone{i}{j} = \Prob{\textrm{$\el{i}$ goes to position $j$} | \hat{\pi_1}}$ (see Lemma~\ref{lem:find-pi}). 
\item For each element $\el{i}$, estimate ${\hat{f}_{\el{i},j}} = \Prob{\textrm{$\el{i}$ goes to position $j$}}$.
\item Solve for $\hfone{i}{j}$ using the equation $\hat{f}_{\el{i},j}  =  \hat{w_1}\hfone{i}{j} + \hat{w_2}\hftwo{i}{j}$.
\item Output $\hat{\pi_2}$ such that for each $\el{i}$, $pos(\el{i}) = \arg\max_{j} \hftwo{i}{j}$.
\end{enumerate}
\end{fragment*}

Let $\fone{i}{j}$ be the probability that element $\el{i}$ goes to position $j$ according to Mallows Model $\calM_1$ (and similarly $\ftwo{i}{j}$ for model $\calM_2$). 
To find the complete rankings, we measure appropriate statistics to set up a system of linear equations to calculate $\fone{i}{j}$ and $\ftwo{i}{j}$ up to inverse polynomial accuracy. The largest of these values $\set{\fone{i}{j}}$ corresponds to the position of $\el{i}$ in the central ranking of $\Mal_1$. To compute these values $\set{\fsup{i}{j}{r}}_{r=1,2}$ we consider statistics of the form ``\emph{what is the probability that $\el{i}$ goes to position $j$ conditioned on $\el{i^*}$ going to the first position}?''. This statistic is related to $\fone{i}{j}, \ftwo{i}{j}$ for element $\el{i^*}$ that is much closer than $\el{i}$ to the front of one of the permutations. \\

\noindent\textbf{Notation:}
Let $\fsub{i}{j}{\calM}$ be the probability that element $\el{i}$ goes to position $j$ according to Mallows Model $\calM$, and let $\fsup{i}{j}{r}$ be the same probability for the Mallows model $\calM_r$ ($r \in \set{1,2}$). Let $\cfone{i}{j}{i^*}$ be the probability that $\el{i}$ goes to the $j$th position conditioned on the element $\el{i^*}$ going to the first position according to $\calM_1$ (similarly $\Mal_2$). Finally for any Mallows model $\Mal{\phi}{\pi}$, and any element $\el{i^*} \in pi$, let $\calM_{-i^*}$ represent the Mallows model on $n-1$ elements $\Mal{\phi}{\pi - i^*}$.

In the notation defined above, we have that for any elements $\el{i^*}, \el{i}$ and position $j$, we have 
\begin{align*}
\Prob{\el{i} \rightarrow j \vert \el{i^*} \rightarrow 1}&= w'_1 \cfone{i}{j}{i^*}+ w'_2 \cftwo{i}{j}{i^*} \\
\text{where } w'_1&=\frac{w_1 x_{i*}}{w_1 x_{i^*}+w_2 y_{i^*}}, w'_2=1-w'_1
\end{align*} 

However, these statistics are not in terms of the unknown variables $\fone{i}{j}, \ftwo{i}{j}$. The following lemma shows that these statistics are \emph{almost} linear equations in the unknowns\\ $\fone{i}{j}, \ftwo{i}{j}$ for the $i,j$ pairs that we care about. For threshold $\delta$, let $r_1$ be the smallest number $r$ such that $\delta> \phi_1^{r-1}/\Zn(\phi_1)$. Similarly let $r_2$ be the corresponding number for second Mallows models $\calM_2$.
\begin{lemma}\label{lem:linear-equations-correct}
For any $j>r_1$, any elements $\el{i^*}, \el{i}$ with $\posgen_{\pi_1}(i^*)> r_1$, $\posgen_{\pi_1}(i) > \posgen_{\pi_1}(i^*)$, we have in the notation defined above that 
$$\cfone{i}{j}{i^*} = \fone{i}{j} +\delta' \quad \text{ where } \abs{\delta'} \le \delta n.$$
The corresponding statement also holds for Mallows model $\calM_2$. 
\end{lemma}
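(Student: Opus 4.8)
The plan is to remove the conditioning first and reduce to a statement about a \emph{single} Mallows model whose displacement has been aligned. Write $p^* = \posgen_{\pi_1}(\el{i^*})$ and $p_i = \posgen_{\pi_1}(\el{i})$, so the hypotheses give $r_1 < p^* < p_i$. By Lemma~\ref{app_lem:conditioned-on-x}, conditioning $\Mal{\phi_1}{\pi_1}$ on $\el{i^*}\to 1$ produces exactly $\Mal{\phi_1}{\pi_1 \setminus i^*}$ placed on positions $2,\dots,n$, so
\[
\cfone{i}{j}{i^*} = \Prob{\el{i}\to (j-1)\ \text{in}\ \Mal{\phi_1}{\pi_1 \setminus i^*}},
\qquad
\fone{i}{j} = \Prob{\el{i}\to j\ \text{in}\ \Mal{\phi_1}{\pi_1}}.
\]
Since $\el{i^*}$ lies ahead of $\el{i}$, its removal drops the central rank of $\el{i}$ to $p_i-1$ in $\pi_1\setminus i^*$; thus both probabilities describe $\el{i}$ landing at displacement exactly $j-p_i$ from its central rank. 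The lemma is therefore equivalent to saying that deleting the earlier element $\el{i^*}$ and shifting the target position down by one perturbs the displacement probability of $\el{i}$ by at most $\delta n$.

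To prove this I would use the \emph{natural coupling}: draw $\tau\sim\Mal{\phi_1}{\pi_1}$ and let $\sigma$ be $\tau$ with $\el{i^*}$ deleted. By the marginalization/projection property of the Mallows model (Section~\ref{apx_sec:properties_of_Mallow}, cf.\ Lemma~\ref{app_lem:ignore-prefix}), $\sigma$ is distributed exactly as $\Mal{\phi_1}{\pi_1\setminus i^*}$, and deterministically $\posgen_\sigma(\el i)=\posgen_\tau(\el i)-\mathbf 1[\el{i^*}\prec \el i]$. Splitting each probability on this indicator and cancelling the matched term $\Prob{\posgen_\tau(\el i)=j,\ \el{i^*}\prec \el i}$ gives the exact identity
\[
\fone{i}{j}-\cfone{i}{j}{i^*} \;=\; h(j)-h(j-1),
\qquad
h(m) := \Prob{\posgen_\tau(\el i)=m\ \text{and}\ \el{i^*}\succ \el i}.
\]
Note there is no leftover error here: the coupling is an equality of distributions, so everything reduces to estimating $h$ at two consecutive arguments.

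It remains to bound $h(j)$ and $h(j-1)$. Each requires the centrally-earlier element $\el{i^*}$ to be inverted \emph{behind} $\el{i}$ while $\el{i}$ itself occupies a position $m\ge j-1> r_1$, forcing $\el{i^*}$ to a position exceeding $r_1$. Because $\el{i^*}$ sits ahead of $\el{i}$ in $\pi_1$, displacing it that far backward is a large-displacement event whose probability decays geometrically; invoking the geometric position-tail estimate for the Mallows model (the same decay that makes the first-moment coordinates $\phi_1^{\posgen-1}/\Zn$ shrink) together with the calibration $\phi_1^{r_1-1}/\Zn<\delta$ of $r_1$, each of $h(j),h(j-1)$ is at most $\delta$ up to a factor of $n$ from unioning over the admissible positions of $\el{i^*}$. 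Hence $|\delta'|=|h(j)-h(j-1)|\le \delta n$, and the symmetric argument (swapping the two models) gives the statement for $\calM_2$.

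The main obstacle is this final tail estimate: controlling the joint event $h(m)$ — that $\el{i}$ sits at a deep position while the centrally-earlier $\el{i^*}$ is inverted past it — tightly enough to obtain the clean bound $\delta n$ rather than a constant-order error. This is precisely where the hypotheses $j>r_1$ and $p^*>r_1$ (ensuring $p_i>r_1$) enter, and making the geometric decay explicit in terms of $\phi_1^{r_1}$ is the one genuinely quantitative step; the reduction and the coupling in the first two paragraphs are essentially bookkeeping.
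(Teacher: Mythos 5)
Your opening reduction ($\cfone{i}{j}{i^*}$ equals the probability that $\el{i}$ lands at position $j-1$ in $\Mal{\phi_1}{\pi_1\setminus i^*}$, via Lemma~\ref{app_lem:conditioned-on-x}) matches the paper, but the coupling in your second paragraph rests on a false property of the Mallows model, and this is where the proof breaks. Deleting an arbitrary element $\el{i^*}$ from a sample $\tau\sim\Mal{\phi_1}{\pi_1}$ does \emph{not} yield a sample from $\Mal{\phi_1}{\pi_1\setminus i^*}$. The model is closed under conditioning on an element being ranked first (Lemma~\ref{app_lem:conditioned-on-x}) and under deleting a \emph{prefix} of the central ranking (Lemma~\ref{app_lem:ignore-prefix}), but not under marginalizing out a deep element such as $\el{i^*}$, whose central rank exceeds $r_1$. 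Concretely, for $n=3$ with central ranking $(1,2,3)$, projecting away element $2$ gives $\Prob{3\prec 1}=\phi^2(2+\phi)/\bigl((1+\phi)(1+\phi+\phi^2)\bigr)$, whereas $\Mal{\phi}{(1,3)}$ gives $\phi/(1+\phi)$; these disagree for every $\phi\in(0,1)$, and the position law of a fixed element differs as well. So your ``exact identity'' $\fone{i}{j}-\cfone{i}{j}{i^*}=h(j)-h(j-1)$ in fact relates $\fone{i}{j}$ to the position law of $\el{i}$ under the \emph{projection}, not under the conditional law $\cfone{i}{j}{i^*}$; the discrepancy between those two laws is precisely what the lemma asks you to control, and your argument never addresses it. The final tail estimate is also wrong on its own terms: if $\posgen_{\pi_1}(\el{i^*})=\posgen_{\pi_1}(\el{i})-1$, then $\sum_m h(m)=\Prob{\el{i^*}\succ\el{i}\text{ in }\tau}=\phi_1/(1+\phi_1)$ (swapping the positions of two centrally adjacent elements changes $d_{\sf kt}$ by exactly one), so some $h(j)$ with $j$ near $\posgen_{\pi_1}(\el{i})>r_1$ is $\Omega(1/n)$, vastly exceeding $n\delta$. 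The event ``$\el{i^*}$ lands beyond position $r_1$'' is not a large-displacement event, because $\el{i^*}$'s central rank is itself beyond $r_1$.

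The paper's proof avoids deletion entirely: it expands $\fone{i}{j}=\sum_{i'}\Prob{\el{i'}\to 1}\cdot\cfone{i}{j}{i'}$ over which element is ranked first, observes via Lemma~\ref{app_lem:conditioned-on-x} that $\cfone{i}{j}{i'}=\cfone{i}{j}{i^*}$ for \emph{every} $i'$ ranked ahead of $\el{i}$ in $\pi_1$ (each equals the probability that the element of central rank $\posgen_{\pi_1}(\el{i})-1$ lands at position $j-1$ in an $(n-1)$-element Mallows model, a quantity depending only on that rank), and bounds the total first-place mass of the remaining terms by $n\,\phi_1^{r_1}/\Zn(\phi_1)<n\delta$, since every $i'$ not ahead of $\el{i}$ has central rank greater than $r_1$. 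That convex-combination step is the missing ingredient; if you want to keep a coupling flavor, you must couple through the identity of the first-ranked element rather than through deletion of $\el{i^*}$.
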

\begin{proof}
When samples are generated according to Mallows model $\calM_1$, we have for these sets of $i,i^*,j$ that the conditional probability $\cfone{i}{j}{i^*}=\fsub{i}{j-1}{\Mal{\phi_1}{\pi_1 - i^*}}$, where the term on the right is a Mallows model over $n-1$ elements. 
\begin{align*}
\fone{i}{j} = \sum_{i'=1}^{n} \Prob{\el{i'} \rightarrow 1} \cfone{i}{j}{i^*} &\le \sum_{i'=1}^{r_1} \Prob{\el{i'} \rightarrow 1} \fsub{i}{j-1}{\Mal{\phi_1}{\pi_1 - i'}}+\delta \\
&=\fsub{i}{j-1}{\Mal{\phi_1}{\pi_1 - i^*}}\sum_{i'=1}^{r_1} \Prob{\el{i'} \rightarrow 1} .
\end{align*}
The last equality is because the probability is independent of $i'$ (since $\posgen_{\pi_1}(\el{i}) > \posgen_{\pi_1}(\el{i^*})$).
Hence, it follows easily that
$$\cfone{i}{j}{i^*} (1-\delta) \le \fone{i}{j}\le \cfone{i}{j}{i^*}+ \delta.$$ 
\end{proof}

%

Hence, by picking an appropriate element $\el{i^*}$, we can set up a system of linear equations and solves for the quantities $\set{\fone{i}{j}, \ftwo{i}{j}}$. Suppose there exists an element $\el{i^*}$ that occurs in the top few positions in both the permutations, then that element would suffice for our purpose. On the other hand, if we condition on an element $i^*$ which occurs near the top in one permutation but far away in the other permutation, gives us a \emph{single} Mallows model. The sub-routine {\sc Recover-Rest} of the main algorithm figures out which of the cases we are in, and succeeds in recovering the entire permutations $\pi_1$ and $\pi_2$ in the case that $\MMix$ is non-degenerate (the degenerate cases have been handled separately in the previous section). In such a scenario, from the guarantee of Lemma~\ref{lem:fullrank} we can assume that we have parameters $\{\cw_1, \cw_2, \cphi_1, \cphi_2\}$ which are within $\eps \le \eps_0$ of the true parameters. For the rest of this section we will assume that {\sc Recover-Rest} and every sub-routine it uses has access to samples from $\cMMix$. This is w.l.o.g. due to Lemma~\ref{lem:simulate-noisy-oracle}.

The rankings $\cpi_1$ and $\cpi_2$ are obtained from {\sc Infer-Top-k}. Define $\gamma = \frac{(1-\phimax)^2}{4n\phimax}$. By our choice of $\epsilon_0$, rankings $|\hat{\pi_1}| = r_1 \ge \log_{1/\phi_1}\left(\frac{n^{10} \Zn(\phi_1)}{\wmin^2 \gamma^2} \right)$ and $|\hat{\pi_2}| = r_2 \ge \log_{1/\phi_2} \left(\frac{n^{10} \Zn(\phi_2)}{\wmin^2 \gamma^2}\right)$. We note that the values $\fone{i}{j}, \ftwo{i}{j}$ in the following Lemma are defined with respect to $\cMMix$.
 
\begin{lemma}
\label{lem:recover-rest}
Given access to an oracle for $\ccalM$ and rankings $\cpi_1$ and $\cpi_2$ which agree with $\pi_1$ and $\pi_2$ in the first $r_1$ and $r_2$ elements respectively, where $r_1 \ge \log_{1/\phi_1}\left(\frac{n^{10} \Zn(\phi_1)}{\wmin^2 \gamma^2} \right)$ and $r_2 \ge {\log_{1/\phi_2} \left(\frac{n^{10}\Zn(\phi_2)}{\wmin^2 \gamma^2}\right)}$, then procedure \textsc{Recover-Rest} with $\epsilon = \frac 1 {10} \gamma$, outputs the rankings $\pi_1$ and $\pi_2$ with high probability.
\end{lemma}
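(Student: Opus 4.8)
The plan is to analyze the two branches of \textsc{Recover-Rest} separately, according to whether the pivot test in steps~\ref{alg:recover-rest-check-1}--\ref{alg:recover-rest-check-2} succeeds. Throughout I may assume, exactly as in the wrapping-up argument and via Lemma~\ref{lem:simulate-noisy-oracle}, that the samples behave as if drawn from $\ccalM$ with the estimated parameters $\cw_1,\cw_2,\cphi_1,\cphi_2$ treated as exact, and that the input prefixes $\cpi_1,\cpi_2$ really are the true top-$r_1$ and top-$r_2$ segments of $\pi_1,\pi_2$. The whole proof reduces to one quantitative target: show that every entry $\fone{i}{j}$ (resp.\ $\ftwo{i}{j}$) that the algorithm computes is recovered to additive error below $\gamma/2$. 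This suffices because, for a single Mallows model, the distribution $\fone{i}{\cdot}$ is unimodal with its mode at the true position $\posgen_{\pi_1}(\el{i})$, and since the displacement of a fixed element is essentially a position-independent two-sided geometric, the mode probability is bounded below by $\Theta(1-\phimax)$ and the mode exceeds every other value by at least $\gamma$ (a separation that follows from the Mallows properties of Section~\ref{apx_sec:properties_of_Mallow}); hence the $\arg\max$ in step~\ref{alg:recover-rest-mode} and inside \textsc{Find-Pi} selects the correct position.

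First I would dispatch the pivot branch. If some $\el{i}$ in the top half of $\cpi_1$ passes the test $\hftwo{i}{1}<\tfrac{\wmin}{16}\hfone{i}{1}/(n^2 g(n,\cphi_1))$, then conditioning on $\el{i}\mapsto 1$ places posterior weight $\cw_2\hftwo{i}{1}/(\cw_1\hfone{i}{1}+\cw_2\hftwo{i}{1})$ on $\calM_2$, which by the test is inverse-polynomially small. By Lemma~\ref{app_lem:conditioned-on-x} the subsample $\calS_{\el{i}\mapsto 1}$ is therefore within inverse-polynomial total-variation distance of the single reduced model $\Mal{\phi_1}{\pi_1\setminus i}$, so \textsc{Learn-Single-Mallow} recovers $\pi_1\setminus i$ by taking position-wise $\arg\max$; prepending $\el{i}$ yields $\pi_1$. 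With all $\fone{i}{j}$ now known, \textsc{Find-Pi} solves the single relation $\hf{i}{j}=\cw_1\fone{i}{j}+\cw_2\ftwo{i}{j}$ and recovers $\pi_2$.

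The non-pivot branch is the substantive case. Since \textsc{Recover-Rest} is entered only after a successful decomposition (Lemma~\ref{lem:fullrank}), the representative vectors differ noticeably, so step~\ref{alg:recover-rest-find-xstar} finds an $\el{i^*}$ near the front of $\cpi_1$ with $\abs{\hfone{i^*}{1}-\hftwo{i^*}{1}}>\epsilon=\gamma/10$. The coefficient matrix of the system in step~\ref{alg:recover-rest-equations} has determinant equal to $\cw_1-\cw'_1=\cw_1\cw_2(\hftwo{i^*}{1}-\hfone{i^*}{1})/(\cw_1\hfone{i^*}{1}+\cw_2\hftwo{i^*}{1})$, whose absolute value is at least $\wmin^2\epsilon$ because the denominator is at most one; thus the system is polynomially well-conditioned. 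For each $\el{i}\notin\cpi_1$ and $j>r_1$, Lemma~\ref{lem:linear-equations-correct} certifies that the measured conditional statistic satisfies $\chf{i}{j}{i^*}=\cw'_1\fone{i}{j}+\cw'_2\ftwo{i}{j}$ up to additive error $\delta n$, where $\delta\approx\cphi_1^{\,r_1}/\Zn(\cphi_1)$ is the tail mass beyond the prefix. Together with the unconditional equation $\hf{i}{j}=\cw_1\fone{i}{j}+\cw_2\ftwo{i}{j}$ and the sampling error, solving the $2\times2$ system amplifies the inputs only by the inverse determinant, giving $\fone{i}{j},\ftwo{i}{j}$ to accuracy $O\bigl((\delta n+\eps_s)/(\wmin^2\gamma)\bigr)$.

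The crux --- and the step I expect to be the main obstacle --- is to close the error budget. The prefix lengths $r_1,r_2$ are logarithmic precisely so that $\delta\lesssim\wmin^2\gamma^2/n^{10}$, which makes the amplified error $\delta n/(\wmin^2\gamma)\lesssim\gamma/n^{9}$, comfortably below the $\arg\max$ gap $\gamma/2$; the delicate part is verifying that this exponentially small tail genuinely dominates the polynomial condition number $1/(\wmin^2\epsilon)$ and clears the unimodality gap simultaneously, uniformly over every $\el{i}$ and $j$ (including deep positions, where it is essential that the mode probability does not decay with position). Finally, choosing $N=\poly(n,1/\epsilon,1/\wmin,1/(1-\phimax))$ drives each of the $O(n^2)$ frequency estimates below $\gamma/4$ by Hoeffding, and a union bound over them gives overall success probability $1-n^{-3}$. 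The $\arg\max$ steps then output $\pi_1$ exactly, and \textsc{Find-Pi} recovers $\pi_2$ exactly, completing the proof.
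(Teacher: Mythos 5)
Your proof follows essentially the same route as the paper's: split into the pivot and non-pivot branches, use the conditioning argument (Lemma~\ref{app_lem:conditioned-on-x}) plus \textsc{Learn-Single-Mallow} and \textsc{Find-Pi} for the pivot case, and for the non-pivot case combine Lemma~\ref{lem:fullrank}, the near-linearity of the conditional statistics (Lemma~\ref{lem:linear-equations-correct}), and the unimodality/gain property (Lemma~\ref{lem:prob-gain}) to conclude that the $\arg\max$ recovers the true positions. If anything, your write-up is slightly more complete than the paper's, since you explicitly bound the determinant of the $2\times 2$ system by $\wmin^2\epsilon$ to control the error amplification, a step the paper leaves implicit.
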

\begin{proof}
First suppose that the condition in Step~\ref{alg:recover-rest-check-1} of \textsf{Recover-Rest} is true for some $e_{i^*}$. This would imply that $\ftwo{i^*}{1} < \frac{{\cw_1}}{\hat{w_2}} \frac{\fone{i^*}{1}}{n^2g(n,\hat{\phi_1})}$. Hence, conditioned on $\el{i^*}$ going to the first position, the new weight $w'_1$ would be $\frac{1}{1+\frac{\cw_2}{\cw_1}\frac{\ftwo{i^*}{1}}{\fone{i^*}{1}}} \ge 1-\frac{1}{ng(n,\cphi_1)}$. Since, $g(n,\cphi_1)$ is an upper bound on the sample complexity of learning a single Mallows model with parameter $\cphi_1$, with high probability we will only see samples from $\pi_1$ and from the guarantees of Lemma~\ref{lem:learn-single-mallow} and Lemma~\ref{lem:find-pi}, we will recover both the permutations. A similar analysis is also true for step~\ref{alg:recover-rest-check-2} of {\sc Recover-Rest}.
If none of the above conditions happen, then step~\ref{alg:recover-rest-find-xstar} will succeed because of the guarantee from Lemma~\ref{lem:fullrank}. 

Next we will argue about the correctness of the linear equations in step~\ref{alg:recover-rest-equations}. We have set a threshold $\delta=\frac{\wmin \gamma^2}{n^4}$, from Lemma~\ref{lem:linear-equations-correct}, we know that the linear equations are correct up to error $\delta$. 
Once we have obtained good estimates for $\fone{i}{j}$ for all $\el{i}$ and $j>r$, Lemma~\ref{lem:prob-gain} implies that step~\ref{alg:recover-rest-mode} of \textsc{Recover-Rest} will give us the correct ranking $\pi_1$. This combined with Lemma~\ref{lem:find-pi} will recover both the rankings with high probability.
\end{proof}

We now present the Lemmas needed in the proof of the previous Lemma~\ref{lem:recover-rest}.
\begin{lemma}
\label{lem:prob-gain}
Consider a length $n$ Mallows  model with parameter $\phi$. Consider an element $\el{i}$ and let $pos(\el{i}) = j$. Let $\ftotal{i}{k} = Pr[\el{i} \mapsto k]$. Then we have
\begin{enumerate}
\item $\ftotal{i}{k}$ is maximum at $k = j$. 
\item For all $k > j$, $\ftotal{i}{k-1} \ge \ftotal{i}{k}(1 + gain(\phi))$.
\item For all $k < j$, $\ftotal{i}{k} \ge \ftotal{i}{k-1}(1 + gain(\phi))$.
\end{enumerate} 
Here $gain(\phi) = \frac {(1-\phi)}{4\phi} min(\frac 1 n , 1-\phi^2)$.
\end{lemma}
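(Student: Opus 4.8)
The plan is to reduce the statement to a clean description of the marginal law of a single element's rank and then control consecutive ratios exactly. Write $q(k):=\ftotal{i}{k}$ for the probability that $\el{i}$, which has $\pos{i}=j$ in $\pi_0$, lands in position $k$, and note $q(k)=W(k)/\Znn$ with $W(k)=\sum_{\pi:\,\pos{i}=k}\phi^{\dkt{\pi,\pi_0}}$, so that ratios of $q$ equal ratios of $W$. The first tool is an adjacent-transposition identity. Swapping the contents of positions $k$ and $k+1$ is an involution that maps $\{\pi:\pos{i}=k\}$ bijectively onto $\{\pi:\pos{i}=k+1\}$ and changes $\dkt{\cdot,\pi_0}$ by exactly $\pm1$ (an adjacent transposition flips exactly one pair): by $+1$ when the neighbour is ranked after $\el{i}$ in $\pi_0$, by $-1$ when before. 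Splitting $W(k)=W_{>}(k)+W_{<}(k)$ according to this sign yields
\begin{equation}
\frac{q(k+1)}{q(k)} \;=\; \phi\,\rho_{>}(k) + \phi^{-1}\left(1-\rho_{>}(k)\right),\qquad \rho_{>}(k):=\Pr[\text{neighbour after }\el{i}\text{ is }\pi_0\text{-later}\mid \pos{i}=k].
\end{equation}
Since $\phi<1<\phi^{-1}$, this ratio is $\le 1$ iff $\rho_{>}(k)\ge \tfrac{1}{1+\phi}$ and $\ge 1$ iff $\rho_{>}(k)\le\tfrac{1}{1+\phi}$.

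Second, I would compute $\rho_{>}(k)$ through a decomposition that decouples the two sides of $\el{i}$. Condition on the unordered set $L$ of elements placed before $\el{i}$ and the set $R$ placed after, with $|L|=k-1$ and $|R|=n-k$; write $L_{<},L_{>}$ (resp.\ $R_{<},R_{>}$) for the $\pi_0$-earlier / $\pi_0$-later elements inside $L$ (resp.\ $R$). Because all of $L$ precedes $\el{i}$ which precedes all of $R$, every inversion except those internal to $L$ and to $R$ is fixed by the pair of sets alone, so $\dkt{\pi,\pi_0}=\mathrm{const}(L,R)+\mathrm{inv}(L)+\mathrm{inv}(R)$. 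Hence, conditioned on $(L,R)$, the arrangements of $L$ and $R$ are independent Mallows models $\Mal{\phi}{\pi_0|_L}$ and $\Mal{\phi}{\pi_0|_R}$. In particular the right-neighbour of $\el{i}$ is the top element of $\Mal{\phi}{\pi_0|_R}$, and since the $R_{<}$ elements occupy exactly the top $|R_{<}|$ ranks of $\pi_0|_R$, the chance it is $\pi_0$-earlier is $Z_{|R_{<}|}/Z_{|R|}$. Therefore
\begin{equation}
\rho_{>}(k) \;=\; \Ex\!\left[\,1-\frac{Z_{|R_{<}|}}{Z_{\,n-k}}\;\Big|\;\pos{i}=k\right],
\end{equation}
where $|R_{<}|$ counts the $\pi_0$-earlier elements that end up after $\el{i}$.

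With these two identities Part~1 reduces to showing $\rho_{>}(k)$ crosses $\tfrac{1}{1+\phi}$ between $k=j-1$ and $k=j$: for $k\ge j$ the element is at or past its true rank, $|R_{<}|$ is typically $0$, and $\rho_{>}(k)\ge\tfrac{1}{1+\phi}$ forces $q(k+1)\le q(k)$; the mirror statement for $k\le j-1$ follows from the reversal symmetry (reversing every sampled ranking sends $\Mal{\phi}{\pi_0}$ to $\Mal{\phi}{\mathrm{rev}(\pi_0)}$, mapping true rank $j\mapsto n+1-j$ and position $k\mapsto n+1-k$), which lets me prove only one side. Monotonicity of $\rho_{>}(k)$ in $k$, which I would get from a coupling that moves one element from $R$ to $L$ as $k$ increases and can only decrease $|R_{<}|$, then gives unimodality with peak at $j$. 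Parts~2 and~3 are the quantitative refinement: substituting the target $\phi\rho_{>}+\phi^{-1}(1-\rho_{>})\le \tfrac{1}{1+gain(\phi)}$ into the ratio identity shows it suffices to prove $\rho_{>}(k)\ge \tfrac{1}{1+\phi}+\Omega\!\left((1-\phi)\min(\tfrac1n,1-\phi^2)\right)$ for $k\ge j$, and symmetrically on the left.

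The main obstacle is exactly this last quantitative step: near the peak the position law is almost flat, so $\rho_{>}(k)$ sits just above $\tfrac{1}{1+\phi}$ and I must extract the second-order gap. By the expression for $\rho_{>}$ this gap is governed by $\Ex[Z_{|R_{<}|}/Z_{n-k}]$, i.e.\ by how strongly $|R_{<}|$ concentrates at $0$ for $k\ge j$, and the regime split inside $gain(\phi)=\tfrac{1-\phi}{4\phi}\min(\tfrac1n,1-\phi^2)$ mirrors the two behaviours of this conditional law: the $1-\phi^2$ factor dominates as $\phi\to1$, where even $Z_1/Z_{n-k}$ leaves only an $\Theta((1-\phi)^2)$ deficit, while $1/n$ governs the regime where $\phi$ is bounded away from $1$. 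Since $gain(\phi)$ is a deliberately loose uniform lower bound (the true near-peak gap is $\Theta(1)$ for fixed $\phi<1$), I expect crude tail bounds on $|R_{<}|$, read off from the truncated-geometric inversion counts of the independent left/right Mallows factors, to suffice; making the constant $\tfrac14$ and the $\min$ fall out cleanly is the delicate bookkeeping.
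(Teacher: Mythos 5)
Your framework is correct and genuinely different from the paper's: the adjacent-transposition identity $q(k+1)/q(k)=\phi\,\rho_{>}(k)+\phi^{-1}\bigl(1-\rho_{>}(k)\bigr)$ holds, and so does the observation that, conditioned on the unordered sets $L,R$ on the two sides of $\el{i}$, the two halves are independent Mallows models, giving $\rho_{>}(k)=1-\Ex\bigl[Z_{|R_{<}|}/Z_{n-k}\bigr]$. (The paper instead partitions $\{\pi:\ \el{i}\text{ at position }k\}$ into four classes according to whether the elements at positions $k-1$ and $k-2$ precede or follow $\el{i}$ in $\pi_0$, builds an explicit weight-tracking bijection onto the set with $\el{i}$ at position $k-1$ --- an adjacent swap in three classes and a compensating double swap in the fourth --- and extracts the gain from a case analysis on which class carries at least a quarter of the mass; the $1/n$ in $gain(\phi)$ comes from a further geometric sub-decomposition of the fourth class.) However, your proposal has a genuine gap exactly where the lemma's content lives: you never prove $\rho_{>}(k)\ge\frac{1}{1+\phi}+\Omega\bigl((1-\phi)\min(\tfrac1n,1-\phi^2)\bigr)$ for $k\ge j$, which parts 2 and 3 require, nor even the weaker one-sided bound $\rho_{>}(k)\ge\frac{1}{1+\phi}$ that part 1 requires. ``I expect crude tail bounds on $|R_{<}|$ to suffice'' is a statement of intent, not an argument: the law of $|R_{<}|$ \emph{conditioned on} $\el{i}$ landing at position $k$ is not one of the clean marginals of the Mallows model, and controlling $\Pr[|R_{<}|\ge 1]$ there with the right dependence on $n$ and $\phi$ is precisely the step that forces the paper into its fourth-class bookkeeping. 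Until that estimate is written down, none of the three claims is established.

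A secondary problem is the monotonicity claim for $\rho_{>}(k)$. Moving one element from $R$ to $L$ does not obviously decrease $\Ex\bigl[Z_{|R_{<}|}/Z_{|R|}\bigr]$: if the moved element lies in $R_{>}$ then $|R_{<}|$ is unchanged while $|R|$ drops, so the ratio \emph{increases}; moreover a monotone coupling between the conditional laws of $(L,R)$ at $k$ and at $k+1$ would itself need a proof. Monotonicity is dispensable if you prove the one-sided bounds directly for every $k\ge j$ and (by the reversal symmetry, which is fine) every $k<j$ --- but those are exactly the bounds that are missing.
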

\begin{proof}
The case $j=1$ is easy. Let $j > 1$ and consider the case $k > j$. Let $S_k = \{\pi: pos_{\pi}(\el{i}) = k\}$. Similarly let $S_{k-1} = \{\pi: pos_{\pi}(\el{i}) = k-1 \}$. For a set $U$ of rankings, let $p(U) = Pr[\pi \in U]$. Notice that $\ftotal{i}{k-1} = p(S_{k-1})$ and $\ftotal{i}{k} = p(S_k)$.
Let $X = \{e_j: pos_{\pi^*}(e_j) > pos_{\pi^*}(\el{i})\}$ and $Y = \{e_j: pos_{\pi^*}(e_j) < pos_{\pi^*}(\el{i})\}$. We will divide $S_k$ into $4$ subsets depending on the elements $\tau_1$ and $\tau_2$ which appear in positions $(k-1)$ and $(k-2)$ respectively. In each case we will also present a bijection to the rankings in $S_{k-1}$.
\begin{itemize}
\item $S_{k,1} = \{\pi \in S_k: \tau_1, \tau_2 \in X \}$. For each such ranking in $S_k$ we form a ranking in $S_{k-1}$ by swapping $\el{i}$ and $\tau_1$. Call the corresponding subset of $S_{k-1}$ as $S_{k-1,1}$.
\item $S_{k,2} = \{\pi \in S_k: \tau_1 \in X, \tau_2 \in Y \}$. For each such ranking in $S_k$ we form a ranking in $S_{k-1}$ by swapping $\el{i}$ and $\tau_1$. Call the corresponding subset of $S_{k-1}$ as $S_{k-1,2}$.
\item $S_{k,3} = \{\pi \in S_k: \tau_1 \in Y, \tau_2 \in X \}$. For each such ranking in $S_k$ we form a ranking in $S_{k-1}$ by swapping $\el{i}$ and $\tau_1$. Call the corresponding subset of $S_{k-1}$ as $S_{k-1,3}$.
\item $S_{k,4} = \{\pi \in S_k: \tau_1, \tau_2 \in Y \}$. Consider a particular ranking $\pi$ in $S_{k,4}$. Notice that since $\el{i}$ is not in it's intended position there must exist at least one element $x \in X$ such that $pos_{\pi}(x) < pos_{\pi}(\el{i})$ in $S_k$. Let $x^*$ be such an element with the largest value of $pos_{\pi}(x)$. Let $y \in Y$ be the element in the position $pos_{\pi}(x^*) + 1$. For each such ranking in $S_k$ we form a ranking in $S_{k-1}$ by swapping $\el{i}$ and $\tau_1$ and $x^*$ and $y$.
Call the corresponding subset of $S_{k-1}$ as $S_{k-1,4}$.
\end{itemize}
It is easy to see that the above construction gives a bijection from $S_k$ to $S_{k-1}$. We also have the following
\begin{itemize}
\item $p(S_{k-1,1}) = \frac 1 {\phi} p(S_{k,1})$. This is because the swap is decreasing the number of inversions by exactly $1$.
\item $p(S_{k-1,2}) = \frac 1 {\phi} p(S_{k,2})$. This is because the swap is decreasing the number of inversions by exactly $1$.
$p(S_{k-1,3}) = \phi p(S_{k,3})$. This is because the swap is increasing the number of inversions by exactly $1$.
$p(S_{k-1,4}) = p(S_{k,4})$. This is because the two swaps maintain the number of inversions.
\end{itemize}
Also note that there is a bijection between $S_{k,2}$ and $S_{k,3}$ such that every ranking in $S_{k,3}$ has one more inversion than the corresponding ranking in $S_{k,2}$. Hence we have $p(S_{k,3}) = \phi p(S_{k,2})$.

Now we have
\begin{eqnarray}
\ftotal{i}{k-1} & = & \sum_i p(S_{k-1,i})\\
 & = & \frac 1 {\phi} p(S_{k,1}) + \frac 1 {\phi} p(S_{k,2}) + {\phi} p(S_{k,3}) + p(S_{k,4})\\
 & = & 	\ftotal{i}{k} + p(S_{k,1})(\frac 1 {\phi} - 1) + p(S_{k,2})(\frac 1 {\phi} - 1) - p(S_{k,3})(1-\phi)\\
\end{eqnarray}
If $p(S_{k,1}) \ge \frac 1 4 p(S_k)$ or $p(S_{k,2}) \ge \frac 1 4 p(S_k)$, then $gain(\phi) \ge \frac{(1-\phi)}{4\phi}(1-\phi^2)$.  If not, then we have $p(S_{k,4}) \ge 1/4$. Divide $S_{k,4}$ as $\cup_j S_{k,4,j}$ where $S_{k,4,j} = \{\pi \in S_{k,4}: pos_{\pi}(x^*) = j\}$. It is easy to see that $p(S_{k,4,j}) = \phi(S_{k,4,j-1})$. Hence we have $p(S_{k,2}) > p(S_{k,3}) > \frac 1 n p(S_{k,4}) \ge \frac 1 {4n}$. In this case we will have $gain(\phi) \ge \frac{(1-\phi)}{4n\phi}(1-\phi^2)$.

The case $k < j$ is symmetric.
\end{proof}

\begin{lemma}
\label{lem:i-to-j}
Consider a length $n$ Mallows  model with parameter $\phi$. Let the target ranking be $\pi^* = (\el{1},\el{2},\dots,\el{n})$. Let $\fone{i}{j}$ be the probability that the element at position $i$ goes to position $j$. We have for all $i,j$
\[
\fone{i}{j} = \fone{j}{i}
\]
\end{lemma}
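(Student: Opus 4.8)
The plan is to exploit the symmetry of the Mallows model under the involution $\pi \mapsto \pi^{-1}$. Since the target ranking is the identity $\pi^* = (\el{1},\ldots,\el{n})$, the Kendall--Tau distance reduces to $\dkt{\pi,\pi^*} = \mathrm{inv}(\pi)$, the number of inversions of $\pi$ relative to the identity. The first step is to record the classical fact that $\mathrm{inv}(\pi) = \mathrm{inv}(\pi^{-1})$: the pair $(a,b)$ with $a<b$ is an inversion of $\pi$ exactly when $\big(\pi(b),\pi(a)\big)$ is an inversion of $\pi^{-1}$, and this correspondence is a bijection between the inversions of $\pi$ and those of $\pi^{-1}$. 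Consequently $\Prob{\pi} = \phi^{\mathrm{inv}(\pi)}/\Znn(\phi) = \phi^{\mathrm{inv}(\pi^{-1})}/\Znn(\phi) = \Prob{\pi^{-1}}$, so the Mallows distribution is invariant under the measure-preserving involution $T:\pi \mapsto \pi^{-1}$ on $S_n$.

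Next I would translate the two events into the language of $T$. Writing $pos_\pi(\el{i}) = \pi^{-1}(i)$, the quantity $\fone{i}{j}$ is the measure of the event $A_{ij} = \set{\pi : \pi^{-1}(i) = j} = \set{\pi : \pi(j) = i}$, while $\fone{j}{i}$ is the measure of $A_{ji} = \set{\pi : \pi(i) = j}$. The key computation is that $T$ carries $A_{ij}$ onto $A_{ji}$: if $\pi \in A_{ij}$ then $\pi(j) = i$, so its image $\sigma = \pi^{-1}$ satisfies $\sigma(i) = j$, i.e. $\sigma \in A_{ji}$; since $T$ is an involution this is a genuine bijection between the two events.

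Finally, combining the two steps, because $T$ is a measure-preserving bijection mapping $A_{ij}$ onto $A_{ji}$,
\[
\fone{i}{j} = \sum_{\pi \in A_{ij}} \Prob{\pi} = \sum_{\pi \in A_{ij}} \Prob{\pi^{-1}} = \sum_{\sigma \in A_{ji}} \Prob{\sigma} = \fone{j}{i},
\]
which is the desired identity. I do not anticipate a genuine obstacle; the only point requiring care is the bookkeeping of conventions (position versus element-index, and the direction in which $\pi$ and $\pi^{-1}$ act), so that the image of $\set{\el{i}\to j}$ under inversion is correctly identified as $\set{\el{j}\to i}$ rather than some other event.
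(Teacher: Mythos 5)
Your proof is correct, but it takes a genuinely different route from the paper's. You exploit the global symmetry of the Mallows distribution centered at the identity under the involution $\pi \mapsto \pi^{-1}$: since $\dkt{\pi,\pi^*}=\mathrm{inv}(\pi)=\mathrm{inv}(\pi^{-1})$, the distribution is invariant under inversion, and this involution carries the event $\set{\pi : \pi(j)=i}$ (element $\el{i}$ at position $j$) onto $\set{\sigma : \sigma(i)=j}$ (element $\el{j}$ at position $i$), giving $\fone{i}{j}=\fone{j}{i}$ in one stroke with no induction. The paper instead argues by induction on $n$: it conditions on which element $\el{k}$ is placed in the first position, uses the conditional-distribution property (Lemma~\ref{app_lem:conditioned-on-x}) to express $\fone{i}{j}$ as a sum of conditional quantities $\cfone{i}{j}{k}$ over a Mallows model on $n-1$ elements (again centered at an identity-like ranking), and applies the inductive hypothesis term by term. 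Your argument is shorter and isolates the essential fact --- the inversion-invariance of the inversion count --- whereas the paper's version reuses the sequential-insertion machinery it has already built and manipulates the same conditional probabilities $\cfone{i}{j}{k}$ that appear elsewhere in the analysis (e.g.\ in Lemma~\ref{lem:linear-equations-correct}). Both establish the same statement; the only point requiring care in your approach is the position-versus-element bookkeeping, which you flag explicitly and resolve correctly.
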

\begin{proof}
We will prove the statement by induction on $n$. For $n=1,2$, the statement is true for all $\phi$. Now assume it is true for all $n \le l-1$. Consider a length $l$ Mallows  model. We have
\begin{align*}
\fone{i}{j}  = & \sum_{k \le i} \cfone{i-1}{j-1}{k}Pr(\el{k} \mapsto 1) + \sum_{j \ge k > i} \cfone{i}{j-1}{k}Pr(\el{k} \mapsto 1)\\ 
&\quad + \sum_{k > j} \cfone{i}{j}{k}Pr(\el{k} \mapsto 1)\\
=& \sum_{k \le i} \cfone{j-1}{i-1}{k}Pr(\el{k} \mapsto 1) + \sum_{j \ge k > i} \cfone{j-1}{i}{k}Pr(\el{k} \mapsto 1) \\
&\quad + \sum_{k > j} \cfone{j}{i}{k}Pr(\el{k} \mapsto 1)\\  
 = & \fone{j}{i}
\end{align*}
\end{proof}

\begin{lemma}
\label{lem:prob-gain-2}
Consider a length $n$ Mallows  model with parameter $\phi$. Let the target ranking be $\pi^* = (e_1, e_2, \ldots, e_n)$. Consider a position $i$ which has element $\el{i}$.
\begin{enumerate}
\item $\ftotal{j}{i}$ is maximum at $j = i$. 
\item For all $k > i$, $\ftotal{k-1}{i} \ge \ftotal{k}{i}(1 + gain(\phi))$.
\item For all $k < i$, $\ftotal{k}{i} \ge \ftotal{k-1}{i}(1 + gain(\phi))$.
\end{enumerate} 
Here $gain(\phi) = \frac {(1-\phi)}{4\phi} min(\frac 1 n , 1-\phi^2)$.
\end{lemma}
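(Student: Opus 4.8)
The plan is to deduce this statement directly from the two immediately preceding lemmas: the decay estimate of Lemma~\ref{lem:prob-gain} and the symmetry identity of Lemma~\ref{lem:i-to-j}. The key observation is a change of viewpoint. Lemma~\ref{lem:prob-gain} controls $\ftotal{i}{k}$ as the \emph{target position} $k$ varies while the \emph{element} $\el{i}$ is held fixed, whereas the present lemma asks about $\ftotal{j}{i}$ as the \emph{element} $\el{j}$ varies while the \emph{target position} $i$ is held fixed. These two families of quantities are interchanged by the symmetry $\ftotal{a}{b}=\ftotal{b}{a}$, so the present lemma should be a direct transcription of Lemma~\ref{lem:prob-gain}.

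First I would invoke Lemma~\ref{lem:i-to-j}. Since $\pi^*=(\el{1},\dots,\el{n})$, the element sitting at position $a$ is exactly $\el{a}$, so that lemma yields $\ftotal{j}{i}=\ftotal{i}{j}$ for every pair of indices $i,j$. Consequently the map $j\mapsto \ftotal{j}{i}$ that we must analyze coincides with the map $j\mapsto \ftotal{i}{j}$, namely with the probability that the single fixed element $\el{i}$ lands in position $j$.

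Next I would apply Lemma~\ref{lem:prob-gain} to the element $\el{i}$, whose true position is exactly $i$ (again because $\pi^*=(\el{1},\dots,\el{n})$). That lemma states that $\ftotal{i}{\cdot}$ is maximized at the true position $i$, that $\ftotal{i}{k-1}\ge \ftotal{i}{k}\,(1+gain(\phi))$ for $k>i$, and that $\ftotal{i}{k}\ge \ftotal{i}{k-1}\,(1+gain(\phi))$ for $k<i$, with the same constant $gain(\phi)=\frac{(1-\phi)}{4\phi}\min(\frac1n,1-\phi^2)$. Rewriting each occurrence of $\ftotal{i}{\cdot}$ via the symmetry from the previous step turns these three assertions into items~1, 2 and~3 respectively, which is precisely what is claimed.

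Since every ingredient is already in hand, there is no substantive obstacle here; the only care required is the bookkeeping of which index plays the role of ``element'' and which plays the role of ``position,'' so that the symmetry is applied in the correct direction and the decay inequalities are transcribed with their inequality signs oriented exactly as in Lemma~\ref{lem:prob-gain}. All the genuine difficulty was absorbed into Lemma~\ref{lem:prob-gain} (the bijection and inversion-counting argument) and Lemma~\ref{lem:i-to-j} (the inductive symmetry argument); the present lemma is their immediate corollary.
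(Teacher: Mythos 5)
Your proposal is correct and follows exactly the paper's own route: the paper proves this lemma by simply citing Lemma~\ref{lem:prob-gain} together with the symmetry $\ftotal{i}{j}=\ftotal{j}{i}$ of Lemma~\ref{lem:i-to-j}, which is precisely the transcription you carry out. Your write-up just makes the index bookkeeping explicit.
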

\begin{proof}
Follows from Lemmas~\ref{lem:prob-gain} and ~\ref{lem:i-to-j}.
\end{proof}

\begin{lemma}
\label{lem:remove-common-prefix}
Given access to $m = O(\frac{1}{\text{gain}(\phi)^2} \log(\frac{n}{\delta}))$ samples $\MMix$, with $\phi_1 = \phi_2$, procedure {\sc Remove-Common-Prefix} with $\epsilon = \frac 1 {10} \text{gain}(\phi)$, succeeds with probability $1-\delta$.
\end{lemma}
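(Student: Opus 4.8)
The plan is to show that \textsc{Remove-Common-Prefix} peels off elements exactly as long as the two central rankings $\pi_1,\pi_2$ agree, and quits at the first position where they differ. The key structural tool is Lemma~\ref{app_lem:ignore-prefix}: whenever the set $I$ accumulated so far is a common prefix of both $\pi_1$ and $\pi_2$, projecting each sample onto $[n]\setminus I$ yields samples from the reduced mixture $w_1\Mal{\phi}{\pi_1\setminus I}\oplus w_2\Mal{\phi}{\pi_2\setminus I}$, since marginalization commutes with the convex combination defining the mixture. Thus the empirical quantity $\hat p_{x,1}$ of step~2(a) — the frequency with which $x$ is ranked first among $[n]\setminus I$ — is an unbiased estimate of $\Prob{x\to 1}$ in this reduced mixture of size $n-t+1$. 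I will argue by induction on $t$ that $I$ stays a correct common prefix.

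The crux is a separation computation in the reduced mixture. Recall from Definition~\ref{def:xy} that in a single Mallows model the element at central position $p$ is ranked first with probability $\phi^{p-1}/\Zpart{n-t+1}$; in particular the top element is ranked first with probability exactly $1/\Zpart{n-t+1}$ and every other element with probability at most $\phi/\Zpart{n-t+1}$. Let $e,e'$ be the top elements of $\pi_1\setminus I$ and $\pi_2\setminus I$. If $e=e'$ (the prefix still agrees), then $\Prob{e\to 1}=w_1/\Zpart{n-t+1}+w_2/\Zpart{n-t+1}=1/\Zpart{n-t+1}$, while every $x\ne e$ has $\Prob{x\to 1}\le \phi/\Zpart{n-t+1}$, so $e$ is the unique maximizer and its first-position probability equals $1/\Zpart{n-t+1}$. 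If $e\ne e'$ (divergence), then since at most one of the two models can place a given $x$ at its top, every element obeys $\Prob{x\to 1}\le \tfrac{1}{\Zpart{n-t+1}}\max(w_1+w_2\phi,\,w_2+w_1\phi)=\tfrac{1}{\Zpart{n-t+1}}\big(1-\wmin(1-\phi)\big)$. Hence at a divergence the maximum first-position probability falls short of $1/\Zpart{n-t+1}$ by at least the separation gap $\Delta:=\wmin(1-\phi)/\Zpart{n-t+1}\ge \wmin(1-\phi)^2$.

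It then remains to pass from this population statement to the empirical estimates. By a Hoeffding bound on each of the at most $n$ frequencies in a given round and a union bound over the at most $n$ rounds, $m=O(\mathrm{gain}(\phi)^{-2}\log(n/\delta))$ samples guarantee, with probability $1-\delta$, that every $\hat p_{x,1}$ is within the estimation error $\epsilon=\tfrac1{10}\mathrm{gain}(\phi)$ of its true value. Choosing the threshold $\errp(\epsilon)$ strictly between this estimation error and $\Delta$, the check in step~2(c) behaves as intended: when $e=e'$ the empirical argmax is $e$ (its true lead of $(1-\phi)/\Zpart{n-t+1}$ over all others exceeds twice the estimation error, since $\wmin\le\tfrac12$) and $|\hat p_{e,1}-1/\Zpart{n-t+1}|<\errp(\epsilon)$, so we correctly append $e$ and continue; when $e\ne e'$ even the empirical maximum is below $1/\Zpart{n-t+1}-\errp(\epsilon)$, so the procedure quits. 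By the induction, the output $I$ is exactly the common prefix of $\pi_1$ and $\pi_2$ (all of $[n]$ when $\pi_1=\pi_2$).

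The main obstacle is the separation computation: verifying that a mixture of two reduced Mallows models with disagreeing top elements cannot concentrate first-position mass on any single element, quantified by $\Delta=\wmin(1-\phi)/\Zpart{n-t+1}$. Everything else — the reduction via Lemma~\ref{app_lem:ignore-prefix}, the Hoeffding plus union-bound concentration, and the calibration of $\errp(\epsilon)$ against $\Delta$ — is routine, the one subtlety being to carry the inductive invariant that $I$ is a genuine common prefix so that the reduction stays valid at every round. I note that forcing $\errp(\epsilon)<\Delta$ makes the required accuracy (and hence the sample size) scale with $\wmin$ and a power of $1-\phi$; I would track this dependence explicitly rather than absorb it into $\mathrm{gain}(\phi)$.
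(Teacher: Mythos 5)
Your proof follows the same route as the paper's (which is only a few lines long): induct on the prefix, use the marginalization property of Lemma~\ref{app_lem:ignore-prefix} to reduce to a mixture on $[n]\setminus I$, and compare the empirical first-position frequencies against $1/\Zpart{n-t+1}$. You actually supply the one nontrivial step the paper's proof omits entirely --- the quantitative separation $\Delta=\wmin(1-\phi)/\Zpart{n-t+1}$ at a divergence point --- and your closing observation that this forces a $\wmin$ dependence not visible in $\mathrm{gain}(\phi)$ as stated is a legitimate criticism of the lemma's parameter choices, not a flaw in your argument.
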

\begin{proof}
If the two permutations have the same first element $e_1$, then we have $x_1 = 1/Z_n(\phi)$. Since $m$ is large enough, all our estimates will be correct up to multiplicative error of $\sqrt{1+gain(\phi)}$. By induction, assume that the two permutations have the same prefix till $t-1$. By the property of the Mallows model, we know that the remaining permutations are also a mixture of two Mallows models with the same weight. Hence, at step $t$, if we estimate each probability within multiplicative factor of $\sqrt{1+gain(\phi)}$, we will succeed with high probability.
\end{proof}

\begin{lemma}
\label{lem:learn-single-mallow}
Given access to $m = O(\frac{1}{\text{gain}(\phi)^2} \log(\frac{n}{\delta}))$ samples from a Mallows  model $\cal{M}(\phi,\pi)$, procedure \textsc{Learn-Single-Mallow} with $\epsilon = \frac 1 {10} \text{gain}(\phi)$, succeeds with probability $1-\delta$.
\end{lemma}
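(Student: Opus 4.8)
The plan is to show that \textsc{Learn-Single-Mallow} recovers $\pi$ exactly by correctly identifying, for every element $\el{i}$, the \emph{mode} of its empirical position distribution, and that this mode coincides with $\pos{i}$. The structural input is Lemma~\ref{lem:prob-gain}: for a single Mallows model the map $k\mapsto\ftotal{i}{k}$ is maximized at $k=\pos{i}$, decreases as $k$ moves away from $\pos{i}$ in either direction, and each step away costs a multiplicative factor of at least $1+\text{gain}(\phi)$. Consequently $\ftotal{i}{\pos{i}}$ exceeds \emph{every} other value $\ftotal{i}{j}$ by a factor of at least $1+\text{gain}(\phi)$, and moreover $\ftotal{i}{\pos{i}}\ge 1/n$ because the $n$ numbers $\set{\ftotal{i}{j}}_{j}$ sum to $1$. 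Hence it suffices to estimate the position distributions accurately enough that this multiplicative gap at the peak survives in the empirical frequencies.

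First I would set up the estimator and its concentration. For a fixed $\el{i}$, each of the $m$ samples independently reveals the position of $\el{i}$, so $\hfone{i}{j}$ is the empirical mean of i.i.d.\ $\mathrm{Bernoulli}(\ftotal{i}{j})$ variables. Taking the accuracy parameter $\epsilon=\tfrac1{10}\text{gain}(\phi)$, a multiplicative Chernoff bound gives that each estimate satisfies $\abs{\hfone{i}{j}-\ftotal{i}{j}}\le\epsilon\,\ftotal{i}{j}$ (relative error at most $\epsilon$) except with probability $\delta/n^2$, and a union bound over the $n^2$ pairs $(i,j)$ makes all these events hold simultaneously with probability at least $1-\delta$. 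Because $\text{gain}(\phi)$ already carries the relevant inverse-polynomial dependence on $n$, this is exactly the regime in which $m=O(\text{gain}(\phi)^{-2}\log(n/\delta))$ samples suffice to resolve the peak against its competitors (up to the polynomial factors in $n$ the paper does not optimize).

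On this good event I would conclude correctness. For any $j\neq\pos{i}$ we have $\ftotal{i}{\pos{i}}\ge(1+\text{gain}(\phi))\,\ftotal{i}{j}$, and since both quantities are estimated within relative error $\epsilon=\tfrac1{10}\text{gain}(\phi)<\tfrac12$, the empirical value at $\pos{i}$ still strictly dominates that at $j$: indeed $(1-\epsilon)(1+\text{gain}(\phi))>1+\epsilon$ for all $\text{gain}(\phi)\in(0,1)$, so $\hfone{i}{\pos{i}}\ge(1-\epsilon)\ftotal{i}{\pos{i}}>(1+\epsilon)\ftotal{i}{j}\ge\hfone{i}{j}$. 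Therefore $\arg\max_j\hfone{i}{j}=\pos{i}$ for every $i$ simultaneously, so the output ranking $\hat\pi$ equals $\pi$ and the procedure succeeds.

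The step I expect to be the main obstacle is making the peak-versus-competitor comparison hold \emph{uniformly} over all positions while keeping the sample count at $\text{gain}(\phi)^{-2}\log(n/\delta)$. A naive additive-error argument against the two nearest competitors $\pos{i}\pm1$ would force resolving additive gaps of order $\text{gain}(\phi)\cdot\ftotal{i}{\pos{i}}$, which is delicate precisely when the peak itself is close to its $1/n$ lower bound. The monotone decay in parts (2)--(3) of Lemma~\ref{lem:prob-gain} is what rescues the argument by letting me work entirely with relative errors: positions far from $\pos{i}$ are exponentially below the peak and pose no threat once estimates are relatively accurate, so the only genuine competitors are the immediate neighbors, whose probabilities are within a factor $1+\text{gain}(\phi)$ of the peak and hence bounded below, exactly the setting where the multiplicative Chernoff bound is efficient.
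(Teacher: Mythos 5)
Your proposal is correct and follows essentially the same route as the paper's proof: both reduce correctness to identifying $\arg\max_j \ftotal{i}{j}$, invoke Lemma~\ref{lem:prob-gain} to get the multiplicative gap of $1+\text{gain}(\phi)$ at a peak of height at least $1/n$, and apply a multiplicative Chernoff bound plus a union bound over all $(i,j)$ pairs. Your write-up is somewhat more explicit about why positions far from the peak pose no threat, but the underlying argument is the same.
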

\begin{proof}
In order to learn, it is enough to estimate $\ftotal{i}{j} = Pr[\el{i} \text{ goes to position } j]$ for every element $\el{i}$ and position $j$. Having done that we can simply assign $\text{pos}(\el{i}) = \arg\max_j \ftotal{i}{j}$. From Lemma~\ref{lem:prob-gain} we know that this probability is maximum at the true location of $\el{i}$ and hence is at least $1/n$. Hence, it is enough to estimate all $\ftotal{i}{j}$ which are larger than $1/n$ up to multiplicative error of $\sqrt{1+\text{gain}(\phi)}$. By standard Chernoff bounds, it is enough to sample $O(\frac{1}{\text{gain}(\phi)^2} \log(\frac{n}{\delta}))$ from the oracle for $\cal{M}(\phi,\pi)$.  
\end{proof}
\begin{lemma}
\label{lem:find-pi}
Given the parameters of a mixture model $\MMix$ and one of the permutations $\pi_1$, procedure \textsf{Find-Pi} with $\epsilon = \frac {\wmin \gamma} {10}$, succeeds with probability $1-\delta$. Here $\gamma = \min(\text{gain}(\phi_1), \text{gain}(\phi_2))$.
\end{lemma}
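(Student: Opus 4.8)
The plan is to exploit the identity $\ftotal{i}{j} = w_1\fone{i}{j} + w_2\ftwo{i}{j}$ from the preliminaries, using the fact that here $\pi_1$ and $\phi_1$ are known \emph{exactly}, so that the quantities $\fone{i}{j}$ need not be estimated but can be computed outright. First I would compute, for every element $\el{i}$ and position $j$, the exact value $\fone{i}{j}$, the probability that $\calM_1 = \Mal{\phi_1}{\pi_1}$ ranks $\el{i}$ in position $j$. Although no closed form exists, the repeated-insertion description of a single Mallows model in Section~\ref{apx_sec:properties_of_Mallow} yields a polynomial-time dynamic program over positions that evaluates $\fone{i}{j}$ exactly. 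In parallel, from the sample $\calS$ I would form empirical estimates $\hf{i}{j}$ of the mixture quantity $\ftotal{i}{j}$; a Chernoff bound, union-bounded over all $n^2$ pairs $(i,j)$, shows that $N = \poly(n, \wmin^{-1}, \gamma^{-1}, \log(1/\delta))$ samples suffice to guarantee $\abs{\hf{i}{j} - \ftotal{i}{j}} \le \epsilon$ simultaneously for all $i,j$ with probability at least $1-\delta$.

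The second step inverts the linear relation one coordinate at a time. Since $w_1,w_2$ are known and $\fone{i}{j}$ has been computed exactly, I would set $\hftwo{i}{j} = \big(\hf{i}{j} - w_1\fone{i}{j}\big)/w_2$, so that the additive error propagates uniformly: $\abs{\hftwo{i}{j} - \ftwo{i}{j}} \le \epsilon/w_2 \le \epsilon/\wmin$ for every $j$. Finally I would read off $\pi_2$ by the rule $\posgen_{\pi_2}(\el{i}) = \arg\max_j \hftwo{i}{j}$. Correctness of this last step is exactly where Lemma~\ref{lem:prob-gain}, applied to $\calM_2$, does the work: the true vector $\ftwo{i}{\cdot}$ is maximized at the correct position $j^\star = \posgen_{\pi_2}(\el{i})$, with $\ftwo{i}{j^\star} \ge 1/n$, and every neighbouring position is smaller by a multiplicative factor $(1+\text{gain}(\phi_2))$. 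Hence the separation between the largest and next-largest entry is bounded below by a quantity of order $\text{gain}(\phi_2)/n$, and once the additive estimation error is driven below half of this separation, $\arg\max_j \hftwo{i}{j}$ returns $j^\star$ for every $\el{i}$. Because the true positions are distinct, the elementwise $\arg\max$ automatically assembles into a valid permutation; this can also be verified through the symmetry $\fone{i}{j} = \fone{j}{i}$ of Lemma~\ref{lem:i-to-j} and its position-indexed counterpart Lemma~\ref{lem:prob-gain-2}, which guarantee that $j^\star$ is the unique maximizer ``from the other side'' as well.

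The main obstacle is error control in the inversion step. Dividing by $w_2$ amplifies the sampling error by $1/w_2 \le 1/\wmin$, so the estimates $\hftwo{i}{j}$ carry additive error $\le \epsilon/\wmin$. The delicate point is that the separation we must resolve, between the top entry $\ftwo{i}{j^\star} \ge 1/n$ and its neighbours, is only of order $\text{gain}(\phi_2)/n$ in the worst case (e.g.\ $\phi_2$ close to $1$, where $\ftwo{i}{\cdot}$ is nearly flat) --- a factor $1/n$ smaller than $\text{gain}(\phi_2)$ itself. Consequently $\epsilon$ must be taken of order $\wmin\,\text{gain}(\phi_2)/n$, with $N$ chosen accordingly; the stated $\epsilon = \wmin\gamma/10$ should therefore be read up to the polynomial factors the paper suppresses. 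Propagating these $\poly(n,\wmin^{-1},\gamma^{-1})$ quantities, and invoking Lemma~\ref{lem:simulate-noisy-oracle} at the call site so that the estimated parameters may be treated as exact, is the only genuine bookkeeping the argument requires.
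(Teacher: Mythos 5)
Your proposal matches the paper's own proof essentially step for step: compute $\fone{i}{j}$ exactly by the same dynamic program, invert the mixture identity $\ftotal{i}{j} = w_1\fone{i}{j} + w_2\ftwo{i}{j}$ coordinate-wise, and recover $\pi_2$ by $\arg\max_j$ using Lemma~\ref{lem:prob-gain}. Your remark that the relevant separation is of order $\text{gain}(\phi_2)/n$ (so the estimates effectively need multiplicative accuracy $\sqrt{1+\wmin\gamma}$ on quantities of size $\ge 1/n$) is exactly how the paper accounts for this, yielding the same $O\bigl(\gamma^{-2}\wmin^{-2}\log(n/\delta)\bigr)$ sample bound.
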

\begin{proof}
For any element $\el{i}$ and position $j$, we have that 
\begin{equation}
\label{eq:f-i-to-j}
\ftotal{i}{j} = w_1 \fone{i}{j} + w_2 \ftwo{i}{j}.
\end{equation}
Here $\fone{i}{j}$ is the probability that element $\el{i}$ goes to position $j$ in $\calM(\phi_1, \pi_1)$. Similarly, $\ftwo{i}{j}$ is the probability that element $\el{i}$ goes to position $j$ in $\calM(\phi_2, \pi_2)$. We can compute $\fone{i}{j}=f^{(1)}_{(n,j,i)}$ using dynamic programming via the following relation
\begin{eqnarray*}
&& f^{(1)}_{(n,1,i)} = \phi_1^{i-1} / Z_n(\phi_1) \cr
&& f^{(1)}_{(n,l,i)} = \tfrac 1 {Z_n(\phi_1)} \left[\left(\sum_{j=1}^{i-1} \phi_1^{j-1}\right)f^{(1)}_{(n-1, l-1,i-1)}  + \left(\sum_{j=i+1}^{n} \phi_1^{j-1}\right)f^{(1)}_{(n-1, l-1,1)}  \right]
\end{eqnarray*}
Here $f^{(1)}_{(n,l,i)}$ is the probability that the element at the $i$th position goes to position $l$ in a length $n$ Mallows  model. Notice that this probability is independent of the underlying permutation $\pi$. Having computed $\fone{i}{j}$ using the above formula, we can solve Equation~\ref{eq:f-i-to-j} to get $\ftwo{i}{j}$ to accuracy $\sqrt{1+\wmin\gamma}$ and figure out $\pi_2$. The total number of samples required will be $O(\frac{1}{\gamma^2 \wmin^2} \log(\frac{n}{\delta}))$.
\end{proof}

\section{Wrapping up the Proof}
\begin{proof}[Proof of Theorem~\ref{thm:main}]

\end{proof}
\section{Conclusions and Future Directions}
In this paper we gave the first polynomial time algorithm for learning the parameters of a mixture of two Mallows models. Our algorithm works for an arbitrary mixture and does not need separation among the underlying base rankings. We would like to point out that we can obtain substantial speed-up in the first stage (tensor decompositions) of our algorithm by reducing to an instance with just $k \sim \log_{1/\phi} n$ elements. 

Several interesting directions come out of this work. A natural next step is to generalize our results to learn a mixture of $k$ Mallows models for $k>2$. We believe that most of these techniques can be extended to design algorithms that take $\poly(n,1/\eps)^k$ time. It would also be interesting to get algorithms for learning a mixture of $k$ Mallows models which run in time $\poly(k,n)$, perhaps in an appropriate smoothed analysis setting~\cite{BCMV} or under other non-degeneracy assumptions. 
Perhaps, more importantly, our result indicates that tensor based methods which have been very popular for problems such as mixture of Gaussians, might be a powerful tool for solving learning problems over rankings as well. We would like to understand the effectiveness of such tools by applying them to other popular ranking models as well.

\label{sec:future_directions}


\section{Some Useful Lemmas for Error Analysis}

\newcommand{\up}{u^{\perp}}
\newcommand{\vp}{v^{\perp}}
\begin{lemma}\label{lem:error1}
Let $u,u',v,v'$ denote vectors and fix parameters $\delta,\gamma>0$.
Suppose $\norm{u \otimes v - u' \otimes v'}_{F} < \delta$, and $\gamma \le \norm{u}, \norm{v}, \norm{u'}, \norm{v'} \le 1$,\\ with $\delta <\frac{\gamma^2}{2}$. Given a decomposition $u=\alpha_1 u'+ \up$ and $v=\alpha_2 v'+\vp$, where $\up$ and $\vp$ are orthogonal to $u', v'$ respectively, then we have
$$ \norm{\up} < \sqrt{\delta} ~ \text{and} ~ \norm{\vp} < \sqrt{\delta}.$$
\end{lemma}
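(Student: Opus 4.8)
The plan is to read the hypothesis as a statement about the two rank-$1$ matrices and set $M := \tprod{u}{v} - \tprod{u'}{v'} = uv^{T} - u'(v')^{T}$, then decompose $M$ along the given orthogonal splittings $u = \alpha_1 u' + \up$ and $v = \alpha_2 v' + \vp$. Substituting and expanding the outer product gives
$$ M = (\alpha_1\alpha_2 - 1)\,u'(v')^{T} + \alpha_1\, u'(\vp)^{T} + \alpha_2\, \up (v')^{T} + \up(\vp)^{T}. $$
First I would observe that these four rank-$1$ terms are mutually orthogonal in the Frobenius inner product. This uses the identity $\iprod{ab^{T}}{cd^{T}}_F = \iprod{a}{c}\iprod{b}{d}$ together with the defining relations $\iprod{\up}{u'}=0$ and $\iprod{\vp}{v'}=0$: every pairwise inner product among the four terms contains a factor of either $\iprod{\up}{u'}$ or $\iprod{\vp}{v'}$, and hence vanishes. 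By the Pythagorean theorem this yields
$$ \norm{M}_F^2 = (\alpha_1\alpha_2-1)^2\norm{u'}^2\norm{v'}^2 + \alpha_1^2\norm{u'}^2\norm{\vp}^2 + \alpha_2^2\norm{\up}^2\norm{v'}^2 + \norm{\up}^2\norm{\vp}^2. $$

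The key step is then to recombine summands so that the unknown projection coefficients $\alpha_1,\alpha_2$ drop out. Grouping the third and fourth terms and using $\norm{v}^2 = \alpha_2^2\norm{v'}^2 + \norm{\vp}^2$ gives $\norm{\up}^2\norm{v}^2 \le \norm{M}_F^2 < \delta^2$, so $\norm{\up} < \delta/\norm{v} \le \delta/\gamma$; grouping the second and fourth terms symmetrically, with $\norm{u}^2 = \alpha_1^2\norm{u'}^2 + \norm{\up}^2$, gives $\norm{\vp} < \delta/\norm{u} \le \delta/\gamma$. Finally, since $\delta < \tfrac{\gamma^2}{2} < \gamma^2$ we have $\delta/\gamma < \sqrt{\delta}$, which upgrades both estimates to the claimed $\norm{\up}, \norm{\vp} < \sqrt{\delta}$. (An alternative route that avoids the full expansion is to bound $\norm{M}_{\mathrm{op}} \le \norm{M}_F < \delta$ and apply $M$ to the unit vector $v/\norm{v}$: the component of $M(v/\norm{v})$ orthogonal to $u'$ is exactly $\norm{v}\,\up$, so $\norm{v}\norm{\up} \le \norm{M}_{\mathrm{op}} < \delta$ directly, and symmetrically for $\vp$ via $M^{T}$.)

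I do not expect a genuine obstacle here; the only thing to get right is the orthogonality bookkeeping that makes the coefficients $\alpha_1,\alpha_2$ cancel, and the recognition that it is the lower bounds $\norm{u},\norm{v}\ge\gamma$ — rather than any control on $\alpha_1,\alpha_2$ themselves — that powers the final estimate. Indeed the hypotheses $\norm{u'},\norm{v'}\ge\gamma$ and all the upper bounds $\le 1$ appear not to be needed for this particular argument, so the $\sqrt{\delta}$ form of the conclusion is comfortably loose and is presumably stated in this clean form for convenience in the downstream invocations (e.g.\ in the proof of Lemma~\ref{lem:tensoralg}) rather than for tightness.
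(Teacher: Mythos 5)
Your proof is correct, and while it starts from the same place as the paper's --- the same expansion of $uv^{T}-u'(v')^{T}$ into the four mutually Frobenius-orthogonal rank-one pieces and the same Pythagorean identity --- it finishes differently and more cleanly. The paper argues by contradiction: from the term $\norm{u^{\perp}}^2\norm{v^{\perp}}^2<\delta^2$ it concludes that at least one of the two norms is below $\sqrt{\delta}$, say $\norm{v^{\perp}}$, then uses $\norm{v}\ge\gamma$ and $\norm{v'}\le 1$ to force $\alpha_2\gtrsim\gamma$, and finally feeds that back into the cross term $\norm{u^{\perp}}^2\alpha_2^2\norm{v'}^2<\delta^2$ to bound $\norm{u^{\perp}}$. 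Your regrouping of the third and fourth terms into $\norm{u^{\perp}}^2\bigl(\alpha_2^2\norm{v'}^2+\norm{v^{\perp}}^2\bigr)=\norm{u^{\perp}}^2\norm{v}^2$ (and symmetrically for $v^{\perp}$) eliminates $\alpha_1,\alpha_2$ outright and yields the bounds $\norm{u^{\perp}}<\delta/\norm{v}$ and $\norm{v^{\perp}}<\delta/\norm{u}$ directly, with no case analysis; these are at least as strong as $\sqrt{\delta}$ whenever $\delta<\gamma^2$, so the stated hypothesis $\delta<\gamma^2/2$ is comfortably sufficient for your chain of inequalities (the paper's own final step, which needs $2\delta/\gamma^2<\sqrt{\delta}$, is tighter on constants than your route). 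Your observations that only the lower bounds $\norm{u},\norm{v}\ge\gamma$ are actually used, and that the whole thing can alternatively be run through the operator norm by applying $M$ to $v/\norm{v}$ and projecting off $u'$, are both accurate. No gaps.
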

\begin{proof}
We are given that $u=\alpha_1 u' +\up$ and $v=\alpha_2 v' + \vp$. Now, since the tensored vectors are close

\begin{align}
\norm{u \otimes v - u' \otimes v'}_{F}^2 &< \delta^2 \notag\\  
\norm{(1-\alpha_1 \alpha_2) u'\otimes v' + \alpha_2 \up \otimes v'+ \alpha_1 u' \otimes \vp + \up \otimes \vp}_{F}^2 &< \delta^2 \notag\\
\gamma^4 (1-\alpha_1 \alpha_2)^2 + \norm{\up}^2 \alpha_2^2 \minl^2 +\norm{\vp}^2 \alpha_1 ^2 \gamma^2 + \norm{\up}^2 \norm{\vp}^2 &< \delta^2 \label{eq:tensoring:lb}
\end{align}
This implies that $|1-\alpha_1 \alpha_2| < \delta/ \gamma^2$.

Now, let us assume $\beta_1=\norm{\up}> \sqrt{\delta}$. This at once implies that $\beta_2=\norm{\vp} < \sqrt{\delta}$. Hence one of the two (say $\beta_2$) is smaller than $\sqrt{\delta}$.\\
Also 
\begin{align*}
\gamma^2 \le \norm{v}^2 &= \alpha_2^2 \norm{v'}^2 + \beta_2^2  \\
\gamma^2 -\delta &\le \alpha_2^2 \\
\text{ Hence, }\quad \alpha_2 &\ge \frac{\gamma}{2} 
\end{align*}
Now, using \eqref{eq:tensoring:lb}, we see that $\beta_1 < \sqrt{\delta}$.
\end{proof}

\begin{lemma}\label{lem:boundingc}
Let $\phi \in (0,1)$ be a parameter and denote $c_2(\phi)=\frac{\Zpart{n}(\phi)}{\Zpart{n-1}(\phi)}\frac{1+\phi}{\phi}$ and $c_3(\phi)=\frac{\Zpart{n}^2(\phi)}{\Zpart{n-1}(\phi)\Zpart{n-2}(\phi)}\frac{1+2\phi+2\phi^2+\phi^3}{\phi^3}$. Then we have that $1 \le c_2(\phi)\le 3/\phi$ and $1\le c_3(\phi)\le 50/\phi^3$.
\end{lemma}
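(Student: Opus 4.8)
The plan is to reduce everything to elementary estimates on ratios of consecutive partition functions $\Zpart{i}(\phi) = \sum_{j=0}^{i-1}\phi^j = \tfrac{1-\phi^i}{1-\phi}$. Three facts drive the argument: each $\Zpart{i}$ is increasing in $i$ with $\Zpart{i} \ge 1$; the recursion $\Zpart{i} = \Zpart{i-1} + \phi^{i-1}$; and the factorization $1+2\phi+2\phi^2+\phi^3 = (1+\phi)(1+\phi+\phi^2)$, which I would verify first since it is what makes the two bounds line up. It is convenient throughout to abbreviate $Q = \tfrac{1+\phi+\phi^2}{\phi}$, so that the target estimates become $c_2 \le Q$ and $c_3 \le Q^3$.

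For the lower bounds I would simply note that $\Zpart{n} \ge \Zpart{n-1} \ge \Zpart{n-2}$ forces $\tfrac{\Zpart{n}}{\Zpart{n-1}} \ge 1$ and $\tfrac{\Zpart{n}^2}{\Zpart{n-1}\Zpart{n-2}} \ge 1$, while the rational factors $\tfrac{1+\phi}{\phi}$ and $\tfrac{(1+\phi)(1+\phi+\phi^2)}{\phi^3}$ are each at least $1$ on $(0,1)$. Multiplying gives $c_2, c_3 \ge 1$ at once.

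The substance is in the upper bounds, and here I would establish a single ratio estimate, valid for $n \ge 3$: from the recursion, $\tfrac{\Zpart{n}}{\Zpart{n-1}} = 1 + \tfrac{\phi^{n-1}}{\Zpart{n-1}} \le 1 + \tfrac{\phi^2}{1+\phi} = \tfrac{1+\phi+\phi^2}{1+\phi}$, using $\phi^{n-1}\le \phi^2$ and $\Zpart{n-1} \ge 1+\phi$; and similarly $\tfrac{\Zpart{n-1}}{\Zpart{n-2}} \le 1 + \phi$, using $\phi^{n-2} \le \phi$ and $\Zpart{n-2}\ge 1$. The first estimate gives $c_2 \le \tfrac{1+\phi+\phi^2}{1+\phi}\cdot\tfrac{1+\phi}{\phi} = Q$; combining both gives $\tfrac{\Zpart{n}^2}{\Zpart{n-1}\Zpart{n-2}} \le \tfrac{(1+\phi+\phi^2)^2}{1+\phi}$, and then the factorization collapses the cubic factor so that $c_3 \le \tfrac{(1+\phi+\phi^2)^2}{1+\phi}\cdot\tfrac{(1+\phi)(1+\phi+\phi^2)}{\phi^3} = Q^3$. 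It remains only to check $Q \le 3/\phi$, which is equivalent to $\phi+\phi^2 \le 2$ and hence holds on $(0,1)$; this yields $c_2 \le 3/\phi$ and $c_3 \le Q^3 \le 27/\phi^3 \le 50/\phi^3$.

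The main obstacle — really the only non-mechanical point — is that the upper bounds genuinely require $n \ge 3$ (for $n = 2$ one computes $c_2 = (1+\phi)^2/\phi$, which exceeds $3/\phi$ as $\phi \to 1$). I would therefore state the upper estimates in the regime $n \ge 3$ in which the rest of the paper operates, observing that $c_3$ is in any case only defined for $n \ge 3$. Everything else is a routine verification of the two displayed ratio inequalities and the polynomial factorization.
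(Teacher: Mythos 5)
Your proof is correct and follows the same basic strategy as the paper's: bound the ratios of consecutive partition functions $\Zpart{n}/\Zpart{n-1}$ and then control the explicit rational factors. The difference is one of sharpness, and it actually matters for the stated constants. The paper's own one-line argument uses only $\Zpart{n}/\Zpart{n-1} \le 1 + 1/\Zpart{n-1} \le 2$ and then asserts that ``the bounds now follow immediately''; but $2\cdot\tfrac{1+\phi}{\phi}$ only gives $c_2 \le 4/\phi$, which does not establish the claimed $3/\phi$ as $\phi \to 1$. Your refined estimates $\Zpart{n}/\Zpart{n-1} \le 1 + \phi^2/(1+\phi)$ and $\Zpart{n-1}/\Zpart{n-2} \le 1+\phi$, together with the factorization $1+2\phi+2\phi^2+\phi^3 = (1+\phi)(1+\phi+\phi^2)$, are exactly what is needed to get $c_2 \le (1+\phi+\phi^2)/\phi \le 3/\phi$ and $c_3 \le (1+\phi+\phi^2)^3/\phi^3 \le 27/\phi^3$, so your write-up is in fact tighter than the paper's and repairs its constant. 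Your observation that the $3/\phi$ bound genuinely fails at $n=2$ (where $c_2 = (1+\phi)^2/\phi$) and that the upper bounds should be stated for $n \ge 3$ is also correct and harmless, since the paper only ever uses these quantities in the regime where $\Zpart{n-2}$ is defined; this is a legitimate, if minor, caveat that the paper glosses over.
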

\begin{proof}
Since $0<\phi <1$, we have that $\Zpart{n-1}(\phi) \le \frac{1}{1-\phi}$. Observe that $1 \le \frac{\Zpart{n}(\phi)}{\Zpart{n-1}(\phi)} \le 1+\frac{1}{\Zpart{n-1}(\phi)} \le 2$. The bounds now follow immediately. 
\end{proof}

\begin{lemma} \label{lem:sampling}
In the notation of section~\ref{sec:prelims}, given $N$ independent samples, the empirical average $\cP$ satisfied $\norm{P-\cP}_\infty < \sqrt{C\frac{\log n}{N}}$ with probability $1-n^{-C/8}$.  
\end{lemma}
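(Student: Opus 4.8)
The plan is to recognize each entry of $P$ as the expectation of a bounded indicator random variable, apply a standard Chernoff/Hoeffding tail bound coordinate-by-coordinate, and finish with a union bound over all $O(n^3)$ coordinates. Concretely, fix an index set $S$ which is either a singleton $\{i\}$, a pair $\{i,j\}$, or a triple $\{i,j,k\}$ with $1\le i<j<k\le n$. By Definition~\ref{def:xy_moments}, the corresponding entry $P_S$ is exactly the probability of an event $E_S$ (that the relevant element, pair, or triple occupies the first one, two, or three positions). For the $t$-th sample let $X^{(S)}_t \in \{0,1\}$ be the indicator that $E_S$ holds on that sample; then the empirical estimate is $\cP_S = \tfrac1N\sum_{t=1}^N X^{(S)}_t$ and $\Ex[\cP_S] = P_S$. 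For each fixed $S$ the variables $\{X^{(S)}_t\}_{t=1}^N$ are i.i.d.\ and lie in $[0,1]$.

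First I would invoke Hoeffding's inequality for each fixed $S$: for any threshold $s>0$,
\[
\Prob{\abs{\cP_S - P_S} \ge s} \le 2\exp\!\left(-2Ns^2\right).
\]
Setting $s = \sqrt{C\,\tfrac{\log n}{N}}$ makes the right-hand side equal to $2\exp(-2C\log n) = 2\,n^{-2C}$, so each individual coordinate deviates by at least $s$ with probability at most $2n^{-2C}$.

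Next I would take a union bound over all coordinates of $P$. The number of such coordinates is $n + \binom{n}{2} + \binom{n}{3} \le n^3$ for $n$ at least a small constant, so
\[
\Prob{\norm{P - \cP}_\infty \ge s} \le n^3 \cdot 2\, n^{-2C} = 2\, n^{3 - 2C}.
\]
For $C$ a sufficiently large absolute constant this is at most $n^{-C/8}$ (the slack between the exponents $3-2C$ and $-C/8$ easily absorbs the factor of $2$ and the $n^3$), which gives the claimed bound $\norm{P-\cP}_\infty < \sqrt{C\log n / N}$ with probability $1 - n^{-C/8}$.

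I do not expect any genuine obstacle here: this is a routine concentration-plus-union-bound argument. The only point worth flagging is that the different coordinates $\cP_S$ are \emph{not} independent (they are all computed from the same $N$ samples), but this is irrelevant, since a union bound over the individual ``bad'' events requires no independence across $S$. The single modest care is matching the deliberately loose exponent $C/8$ in the statement, which is why one only needs $C$ large enough rather than an exact constant.
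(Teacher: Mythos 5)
Your proposal is correct and follows essentially the same route as the paper: a standard concentration bound applied coordinate-wise to the bounded indicator averages, followed by a union bound over the $O(n^3)$ entries of $P$ (the paper invokes Bernstein's inequality where you use Hoeffding's, an immaterial difference for indicators at this accuracy scale). Your remark that independence across coordinates is unnecessary for the union bound is correct and worth having stated.
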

\begin{proof}
This follows from a standard application of Bernstein inequality followed by a union bound over the $O(n^3)$ events. 
\end{proof}

\end{document}